\documentclass{article}

\PassOptionsToPackage{numbers, compress, sort}{natbib}

\usepackage[preprint]{neurips_2026}

\usepackage[T1]{fontenc}    \usepackage{url}            \usepackage{booktabs}       \usepackage{amsfonts}       \usepackage{nicefrac}       \usepackage{microtype}      \usepackage[dvipsnames]{xcolor} 

\usepackage{amsmath}
\usepackage{amsthm}
\usepackage{thmtools}
\usepackage{hyperref}       \usepackage{amssymb}
\usepackage{amsfonts}
\usepackage{soul}
\usepackage{mathtools}
\usepackage{float}
\usepackage{cancel}
\usepackage{mathabx}
\usepackage{dsfont}
\usepackage{dirtytalk}
\usepackage{enumitem}
\usepackage[dvipsnames]{xcolor}
\usepackage[textsize=tiny]{todonotes}

\usepackage{tikz}
\usetikzlibrary{arrows.meta,patterns,calc}

\usepackage{algorithm}
\usepackage[noend]{algpseudocode}
\algrenewcommand\algorithmiccomment[1]{\hfill\(\triangleright\) \textit{#1}}

\usepackage{tcolorbox}

\newcommand{\N}{\mathbb{N}}

\newcommand{\Q}{\mathbb{Q}}
\newcommand{\R}{\mathbb{R}}

\newcommand{\E}{\mathbb{E}}

\newcommand{\Rd}{\mathbb{R}^{d}}

\newcommand{\SecMomTens}{\mathcal{C}_{t}}

\newcommand{\DGS}[1]{\todo[backgroundcolor=black,textcolor=white]{D\#: #1}}
\newcommand{\dgstext}[1]{\textcolor{blue}{#1}}
\newcommand{\DGSi}[1]{\todo[inline,backgroundcolor=black,textcolor=white]{D\#: #1}}

\newcommand{\ymm}[1]{{#1}} 

\newcommand{\pt}[1]{{#1}}

\newcommand{\dd}{\mathrm{d}}

\newcommand{\bigO}[1]{\mathcal{O}({#1})}

\DeclareMathOperator{\supp}{\textup{supp}}

\DeclareMathOperator{\closedconv}{\overline{\textup{conv}}}

\newcommand{\step}[1]{\textbf{Step}~\ensuremath{\mathbf{#1}}}

\newcommand{\CovarT}{\Sigma_{t}}
\newcommand{\dCovarT}{\dot{\Sigma}_{t}}
\newcommand{\ddCovarT}{\ddot{\Sigma}_{t}}

\definecolor{myblue}{rgb}{0.8,0.88,1}
\presetkeys{todonotes}{color=myblue}{}

\tikzset{
  axis/.style={very thick},
  pulse/.style={draw=black, fill=gray!20, line width=0.9pt},
  streamline/.style={draw=black!55, line width=0.9pt},
  gap/.style={draw=red!70!black, line width=1.0pt,
              pattern=north east lines, pattern color=red!65},
  bluepath/.style={draw=cyan!60!blue, line width=2.3pt},
  note/.style={font=\small},
  title/.style={font=\bfseries}
}

\newcommand{\pulseRightAt}[4]{\path[pulse]
    (#1-#4, #2-#3) rectangle (#1, #2+#3);
}
\newcommand{\pulseLeftAt}[4]{\path[pulse]
    (#1, #2-#3) rectangle (#1+#4, #2+#3);
}
\newcommand{\filledBellAtLeft}[4]{\path[pulse]
    (#1,#2-3*#3)
    -- plot[smooth,variable=\y,domain={#2-3*#3}:{#2+3*#3}]
       ({#1 - #4*exp(-((\y-#2)^2)/((#3)^2))}, \y)
    -- (#1,#2+3*#3)
    -- cycle;
}
\newcommand{\filledBellAtRight}[4]{\path[pulse]
    (#1,#2-3*#3)
    -- plot[smooth,variable=\y,domain={#2-3*#3}:{#2+3*#3}]
       ({#1 + #4*exp(-((\y-#2)^2)/((#3)^2))}, \y)
    -- (#1,#2+3*#3)
    -- cycle;
}

\newtheorem{theorem}{Theorem}[section]
\newtheorem*{theorem*}{Theorem}
\newtheorem{corollary}[theorem]{Corollary}

\newtheorem{lemma}[theorem]{Lemma}

\theoremstyle{definition}
\newtheorem{definition}[theorem]{Definition}

\newtheorem*{assumption*}{Assumption}
\newtheorem{open-problem}[theorem]{Open Problem}

\theoremstyle{remark}
\newtheorem{remark}[theorem]{Remark}
\newtheorem{example}[theorem]{Example}

\title{One-Shot Generative Flows:\\ Existence and Obstructions}

\author{
  Panos Tsimpos \\ 
  Operations Research Center \\
  Massachusetts Institute of Technology \\
  Cambridge, MA 02139 \\
  \texttt{ptsimpos@mit.edu}
  \And
  Daniel Sharp\\
  Center for Computational Science \& Engineering \\
  Massachusetts Institute of Technology \\
  Cambridge, MA 02139 \\
  \texttt{dannys4@mit.edu}
  \And
  Youssef Marzouk\\
  Laboratory of Information and Decision Systems \\
  Massachusetts Institute of Technology \\
  Cambridge, MA 02139 \\
  \texttt{ymarz@mit.edu}
}

\begin{document}

\maketitle

\begin{abstract}
  We study dynamic measure transport for generative modeling, focusing on transport maps that connect a source measure $P_0$ to a target measure $P_1$ by integrating a velocity field of the form $v_t(x) = \E[\dot X_t \mid X_t = x]$, where $X_\bullet = (X_t)_t$ is a stochastic process satisfying $(X_0,X_1)\sim{P_0}\otimes{P_1}$ and $\dot X_t$ is its time derivative.
  We investigate when $X_\bullet$ induces a \emph{straight-line flow}: a flow whose pointwise acceleration vanishes and is therefore exactly integrable by any first-order method.
  First, we develop multiple characterizations of straight-line flows in terms of PDEs involving the conditional statistics of the process.
  Then, we prove that straight-line flows under endpoint independence exhibit a sharp dichotomy.
  On the one hand, we construct explicit, computable straight-line processes for arbitrary Gaussian endpoints. On the other hand, we show that straight-line processes do not exist for targets with sufficiently well-separated modes. We demonstrate this obstruction through a sequence of increasingly general impossibility theorems that uncover a fundamental relationship between the sample-path behavior of a process with independent endpoints and the space-time geometry of this process' flow map. Taken together, these results provide a structural theory of when straight-line generative flows can, and cannot, exist.
\end{abstract}

\section{Introduction}
Sampling from challenging probability distributions is central to probabilistic inference and modern generative modelling. A recent line of work establishes \emph{dynamic measure transport} as a unifying paradigm: First, construct a stochastic process $(X_t)_{t\in[0,1]}$ with marginals that interpolate a source distribution $P_0$ at $t=0$ and a target distribution $P_1$ at $t=1$. Then, estimate the \emph{conditional velocity}
\[
v_t(x) \coloneq \mathbb{E}[\dot X_t \mid X_t=x]
\]
and generate samples by evaluating the ordinary differential equation (ODE) flow maps $\phi_t$ defined by
\begin{equation}\label{eqn:def_flow_map_ode}
  \partial_{t} \phi_t(x) = v_t\!\left(\phi_t(x)\right) \;\; \text{with} \;\; \phi_0(x)=x \, .
\end{equation}
This perspective underlies methods such as \emph{stochastic interpolants}~\cite{albergo2023stochastic,albergo2022building}, \emph{flow matching}~\cite{lipman2022flow,lipman2024flow}, and \emph{score-based probability flow ODEs}~\cite{song2021scorebased, chen2023probability}, as well as \emph{rectified flows}~\cite{liu2022flow,liu2022marginalpreserving}---all of which have demonstrated strong empirical performance.
The computational efficiency of these methods hinges on the geometry of the induced flow. Generic flows demand many evaluations of the conditional velocity $v_t$, as numerical integration error scales with the curvature (and higher derivatives) of $\phi_t$. In contrast, if the flow is straight, meaning that the acceleration of the flow map vanishes,
\[
\partial_{tt} \, \phi_t(x) \equiv 0 \quad \text{for all } (x,t),
\]
then $\phi_t(x)$ is affine in $t$, expressed as $\phi_t(x)=(1-t)x+t\,\phi_1(x)$. Thus, any first-order integrator is exact; one can traverse the entire path with a single velocity evaluation. This motivates a fundamental question: \emph{Which stochastic processes $(X_t)_t$ with $X_0\!\sim\! P_0$ and $X_1\!\sim\! P_1$ induce straight flows?}
Prior work offers algorithmic frameworks that convert an arbitrary flow into a straight one (e.g., \cite{liu2022flow,liu2022marginalpreserving}) \pt{or analyzes specific algorithms~\cite{bansal2025wasserstein,hertrich2025relation}.} Yet a structural characterization of when straightness is {intrinsic} to the underlying process has remained open. This paper develops such a theory.

One way to trivially obtain a straight flow is via an optimal transport coupling; see~\cite{mccann1997convexity, chewi2025statistical}.
Namely, let $T: \Rd \to \Rd$ be such that $T_\# P_0 = P_1$ \ymm{solves the Monge problem with appropriate cost} and set
\begin{equation}
  \label{eq:OT-driven-FM}
  X_t = (1-t) \, X_0 + t \, T(X_0) \;\; \textup{with} \;\; X_0 \sim P_0 \, .
\end{equation}
It is not hard to see that the resulting flow is straight, owing to the fact that each sample path $t \mapsto X_t(\omega)$ is straight and no two sample paths intersect. In fact, any transport map $T$ with positive definite Jacobian $\nabla T(x)$ at each $x \in \Rd$ yields such a trivial solution; see Theorem~\ref{thm:affine_processes} for details. Examples include Monge maps corresponding to strictly convex potentials and Knothe--Rosenblatt rearrangements between densities with mild regularity~\cite{marzouk2024distribution}.
Yet such a solution is circular, as it constructs a process $X_\bullet$ to sample $P_1$ in terms of a transport map $T$. If we had access to $T$, we could sample $P_1$ by sampling $P_0$ and applying $T$ without ever constructing $X_\bullet$.

Recent literature has successfully built on this idea by developing sampling algorithms that construct near-optimal couplings to produce nearly straight flows. Indeed, rectified-flow methods, minibatch-coupling methods, and optimal-transport variants of conditional flow matching exploit this fundamental property of optimal transport, yielding practically relevant algorithms~\cite{liu2022flow,pooladian2023multisample,tong2024minibatchot}.
Such algorithms, however, require additional up-front computation: first estimate some surrogate $\tilde{T}$ to a transport map $T$ and then instantiate a simple process $X_\bullet$ in terms of $\tilde{T}$. \ymm{A different set of approaches, e.g., self-distillation \cite{salimans2022progressive, boffi2025build} and mean flow \cite{geng2025mean,geng2025improved}, building on the notion of consistency models \cite{song2023consistency,kim2023consistency,boffi2024flow}, aim to learn the ODE flow map $\phi$ directly, but doing so requires more complex training objectives.}

Here, instead, we explore straight-line flows that do not rely on
\ymm{the explicit construction of a transport map or any other non-trivial coupling.}
Indeed, it is natural to collect two sets of i.i.d.\ samples from $P_0$ and $P_1$ and to construct the process $X_\bullet$ by connecting independent pairs $(X_0, X_1)$. In this setting, there is no relationship between source and target samples; the only coupling we can impose {a priori}, i.e., without any computation, is the product measure $P_0 \otimes P_1$. We thus require that the candidate process $X_\bullet$ have independent endpoints and pose the central question of this paper:
\begin{tcolorbox}[colback=white,colframe=black,left=1mm,right=1mm,
  top=1mm,bottom=1mm]
\emph{Given measures $P_0$ and $P_1$, which processes $X_\bullet$ with $(X_0, X_1) \sim P_0 \otimes P_1$ induce straight flows?}
\end{tcolorbox}

To study this problem, we introduce two classes of stochastic processes, defined formally in \S\ref{sec:prelims}.  The first, $\mathcal{S}_{\mathrm{SL}}(P_0,P_1)$, consists of suitably regular processes that have independent endpoints and induce straight-line flows. The second, $\mathcal{F}_{\mathrm{SL}}(P_0,P_1)$, consists of those members of $\mathcal{S}_{\mathrm{SL}}(P_0,P_1)$ that admit a computationally tractable representation called a \emph{generalized interpolant}, an extension of the stochastic interpolants framework~\cite{albergo2023stochastic}. This distinction lets us separate theoretical existence from computational tractability.

Our \textbf{main contributions} are as follows:
\begin{enumerate}
    \item \textbf{Analytical characterization of straight-line processes.} We derive several PDE characterizations of straight-line flows induced by stochastic processes, \ymm{linking the Eulerian and Lagrangian perspectives}.
As a consequence, we show that the common processes \( X_t = (1-t)X_0 + tX_1\)
    induce straight flows if and only if the endpoint coupling is deterministic.

    \item \textbf{Existence of straight-line processes under endpoint independence.}
For arbitrary Gaussian source and target distributions, we construct explicit, computable processes with independent endpoints, as visualized in Figure~\ref{fig:straight-lines-gauss}.

  \item \textbf{Obstructions for multi-modal targets.} We prove that straightness under endpoint independence fails in a broad class of multi-modal settings; \ymm{in other words, strong modal separation is fundamentally incompatible with the generation of straight-line flows from stochastic processes. Our analysis also brings new analytical techniques---combining geometry, topology, and stochastic process theory---to the study of flow-based generative models.}

  \end{enumerate}

\begin{figure}[h]
    \centering
    \includegraphics[clip, trim={2cm 1cm 2cm 0}, width=\textwidth]{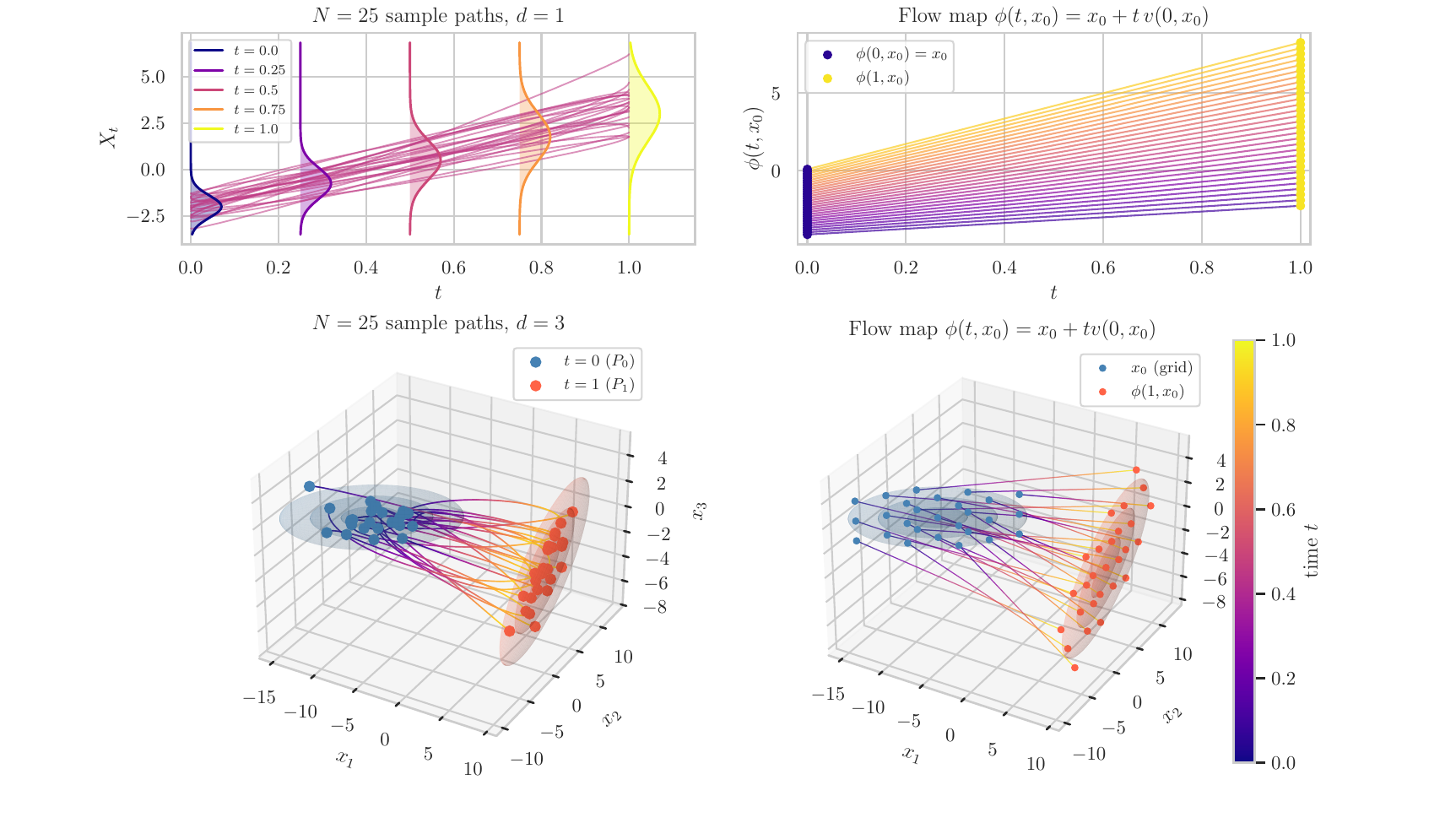}
    \caption{
      Straight-line Gaussian process for $d = 1$ (top) and $d=3$ (bottom). We construct these via Theorem~\ref{thm:straight-line-gaussians-multivariate}.
      \emph{Left:} $N = 25$ sample paths $X_t$ in space-time; for $d=1$, we show marginal
      Gaussian densities overlaid at $t \in \{0, 0.25, 0.5, 0.75, 1\}$.
      \emph{Right:} flow map $\phi(t, x_0) = x_0 + t\,v(0,x_0)$ initialized on
      a regular grid.}
    \label{fig:straight-lines-gauss}
\end{figure}

\subsection{Preliminaries}\label{sec:prelims}

We now define key objects used below; Appendix~\ref{appendix:notation} defines additional notation.

\begin{definition}\label{def:S-P_0-P_1}
  An $\Rd$-valued stochastic process $X_\bullet \coloneq (X_t)_{t \in [0,1]}$ is said to have \emph{absolutely continuous sample paths} if the function $t \mapsto X_t$ belongs to $W^{1,1}([0,1]; \Rd)$ $\mathbb{P}$-a.s.
  Given measures $P_0$ and $P_1$ on $\Rd$, we write the set $\mathcal{S}(P_0, P_1)$ as all stochastic processes with absolutely continuous sample paths satisfying: (i)
  \((X_0, X_1) \sim P_0 \otimes P_1\) ; (ii) $\E \left[ \| X_0\| + \int_0^1 \| \dot X_t \| dt \right] < \infty$; (iii)
  the conditional velocity
  $v(t, x) = \E \left[ \dot X_t | X_t = x \right]$ is bounded and spatially Lipschitz uniformly in time, i.e., $\exists L > 0$ s.t.
  \begin{equation}
    \label{eq:lipschitz-v}
    \| v(t, x) - v(t, y) \| \leq L \, \| x - y \|  \;\; \forall \, (t, x, y) \in [0,1] \times \Rd \times \Rd \, .
  \end{equation}
\end{definition}
\begin{definition}
  Fix $p \in [1, 2)$. A \emph{base interpolation path} is a Borel-measurable mapping
  $ F: [0,1] \times \R^{2d + \ell} \to \Rd$
such that, for $(x,y,z)$-Lebesgue a.e. points in $\R^{2d + \ell}$, the mapping $t \mapsto F(t, x, y, z)$ is in $W^{1, p}\left([0,1] ; \Rd \right)$ with boundary conditions $F(0, x, y, z) = x$ and $F(1, x, y, z) = y$. We write $\mathcal{F}$ as the set of all such mappings.
\end{definition}

\begin{definition}\label{def:generalized_interpolant}
  A \emph{generalized interpolant} is a pair $(F, Q)$ with $F \in \mathcal{F}$ and measure $Q \in \mathcal{P}(\R^\ell)$ for some $\ell \in \N$. For measures $P_0, P_1 \in \mathcal{P}(\Rd)$, the induced stochastic process of $(F, Q)$ is $X_\bullet: t \mapsto X_t$ defined for $(X_0, X_1, Z) \sim P_0 \otimes P_1 \otimes Q$ as
  \begin{equation*}
    X_t = F(t, X_0, X_1, Z) \, .
  \end{equation*}
\end{definition}

\begin{definition}
  \label{def:straight-line-process-and-straight-line-interpolant}
  Fix measures $P_0, P_1 \in \mathcal{P}(\Rd)$.
  A \emph{straight-line process} connecting $P_0$ and $P_1$ is a process $X_\bullet \in \mathcal{S}(P_0, P_1)$ such that the 
the flow $\phi$ of the conditional velocity $v$ defined in \eqref{eqn:def_flow_map_ode} 
  satisfies
  \begin{equation}
    \label{eq:no-acceleration-condition}
    \partial^2_t \phi(t, x) = 0 \quad \text{for all } (t, x) \in [0,1] \times \Rd \, .
  \end{equation}
Finally, define
  \begin{align}
    \label{eq:def-S_SL}
    & \mathcal{S}_{\textup{SL}}(P_0, P_1) \coloneq \Big\{ X_\bullet \in \mathcal{S}(P_0, P_1) : X_\bullet \text{ satisfies }  \eqref{eq:lipschitz-v} \textup{ and } \eqref{eq:no-acceleration-condition} \Big\} \\
    \label{eq:def-F_SL}
    & \mathcal{F}_{\textup{SL}}(P_0, P_1) \coloneq \Big\{ (F, Q) \in \mathcal{F} \times \mathcal{P}(\R^\ell) : X_\bullet \in \mathcal{S}_{\textup{SL}}(P_0, P_1) \Big\} \, .
  \end{align}
\end{definition}

\section{PDE characterization of straight-line flows}
\label{sec:PDE_characterization}

In this section, we make additional regularity assumptions on the flow maps $(\phi_t)_{t \in [0,1]}$ and the processes $X_\bullet$ in order to develop an analytical theory of straight-line flows. Recalling the notation from Definition~\ref{def:straight-line-process-and-straight-line-interpolant}, we characterize processes in $\mathcal{S}_{\textup{SL}}(P_0, P_1)$ via partial differential equations (PDEs) involving  the conditional velocity $v_t$ and other statistics of $X_\bullet$. All proofs are in Appendix~\ref{subsec:proofs-PDE_characterization}.

\begin{restatable}{assumption}{RegAssumption}
  \label{assumptoion:regularity}
  For the remainder of this section fix $P_0, P_1$ and take a process $X_\bullet \in \mathcal{S}(P_0, P_1)$. 
  Assume that the marginal densities $\rho_t$ of $X_t$ exist for all $t \in [0,1]$, assume $\E \| X_t \|^2 < \infty$
  and that the sample paths $t \mapsto X_t$ are in $W^{2,1}([0,1];\Rd)$ a.s.
  Moreover, assume that the trajectory maps $t \mapsto \phi_t(x)$ are in $C^2([0,1];\Rd)$ for each $x \in \Rd$, whereas the flow maps $x \mapsto \phi_t(x)$ are homeomorphisms for each $t \in [0,1]$.
  Lastly, assume the conditional velocity $v: (t,x) \mapsto v_t(x)$ is continuously differentiable jointly over $[0,1] \times \Rd$.
\end{restatable}
Under these regularity assumptions, we can define the \emph{material derivative} of the velocity $v$ at $x \in \Rd$ as $D_t \, v(t,x) \coloneq \partial_t \, v(t,x) + \left(v(t,x) \cdot \nabla\right) \, v(t,x)$.

\begin{restatable}{proposition}{PropReformulation}
\label{prop:reformulaton-1}
   Consider the conditional velocity $v_t(x) = \E[\dot X_t \mid X_t = x]$. The following are equivalent for all $(t, x) \in [0,1] \times \Rd$:
   \[\textrm{1. }\partial_{tt} \, \phi_t(x) = 0,\quad\textrm{2. }\phi_t(x) = (1-t) \, x + t \, \phi_1(x),\quad\textrm{3. }D_t v_t(x) = 0.
   \]
\end{restatable}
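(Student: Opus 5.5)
We prove the cycle of implications $1 \Rightarrow 2 \Rightarrow 3 \Rightarrow 1$, where each condition is read as holding for all $(t,x) \in [0,1]\times\Rd$. The basic tool is the chain rule along trajectories. By Assumption~\ref{assumptoion:regularity}, $v$ is $C^1$ jointly and $t\mapsto\phi_t(x)$ is $C^2$. Differentiating the flow equation $\partial_t\phi_t(x) = v_t(\phi_t(x))$ in $t$ therefore gives
\begin{equation*}
\partial_{tt}\phi_t(x) = \partial_t v(t,\phi_t(x)) + \nabla v(t,\phi_t(x))\,\partial_t\phi_t(x) = (D_t v)(t,\phi_t(x)).
\end{equation*}
The acceleration of the trajectory started at $x$ is thus the material derivative of $v$ evaluated at the current position $\phi_t(x)$.

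\textbf{Step 1 ($1 \Rightarrow 2$).} Fix $x$. If $\partial_{tt}\phi_t(x)=0$ on $[0,1]$, then $t\mapsto \phi_t(x)$ is affine, say $\phi_t(x)=a+bt$. The initial condition $\phi_0(x)=x$ gives $a=x$, and evaluating at $t=1$ gives $b=\phi_1(x)-x$. Hence $\phi_t(x)=(1-t)x+t\phi_1(x)$.

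\textbf{Step 2 ($2 \Rightarrow 1$).} If $\phi_t(x)=(1-t)x+t\phi_1(x)$, then differentiating twice in $t$ gives $\partial_{tt}\phi_t(x)=0$ directly. Steps 1 and 2 together show that conditions 1 and 2 are equivalent.

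\textbf{Step 3 ($3 \Rightarrow 1$).} If $D_t v\equiv 0$ on $[0,1]\times\Rd$, the displayed identity gives $\partial_{tt}\phi_t(x)=(D_tv)(t,\phi_t(x))=0$ for every $(t,x)$.

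\textbf{Step 4 ($1 \Rightarrow 3$).} If $\partial_{tt}\phi_t(x)=0$ for all $(t,x)$, the identity gives $(D_tv)(t,\phi_t(x))=0$ for all $(t,x)$. So $D_t v(t,\cdot)$ vanishes on the image $\phi_t(\Rd)$. To conclude $D_tv\equiv 0$ we need $\phi_t$ to be onto $\Rd$ for each fixed $t$.

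Surjectivity of $\phi_t$ is the one step that needs care, because the pointwise statement in condition 3 concerns arbitrary $y\in\Rd$ rather than points on trajectories. We obtain it from the assumption that each $\phi_t$ is a homeomorphism of $\Rd$. Alternatively, it follows from uniform Lipschitzness and boundedness of $v$ in Definition~\ref{def:S-P_0-P_1}: by Cauchy--Lipschitz theory the ODE is globally well posed both forward and backward in time. Then for any $y$, integrating the ODE backward from time $t$ starting at $y$ produces $x$ with $\phi_t(x)=y$. With surjectivity in hand, $D_tv(t,y)=0$ for all $(t,y)$, which closes the cycle.

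The remaining points are routine regularity bookkeeping, namely that the $C^1$ regularity of $v$ and the $C^2$ regularity of trajectories justify the chain rule at the endpoints $t=0,1$. The conditional-expectation structure of $v$ plays no role here.
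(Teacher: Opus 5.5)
Your proposal is correct and matches the paper's proof essentially step for step: $(1)\Leftrightarrow(2)$ by integrating twice and using the boundary condition $\phi_0(x)=x$, and $(1)\Leftrightarrow(3)$ via the chain-rule identity $\partial_{tt}\phi_t(x)=(D_tv)(t,\phi_t(x))$ combined with bijectivity of the homeomorphism $\phi_t$. Your backup argument for surjectivity is also valid: the uniform Lipschitz bound and joint continuity of $v$ give backward well-posedness of the ODE, which shows the homeomorphism hypothesis is not really needed for this step.
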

This result shows equivalent descriptions of straightness: the \emph{Lagrangian}, focusing on the flow map $\phi$, and the \emph{Eulerian}, focusing on the velocity field $v$.
Next, we relate the material derivative to other statistical (and Eulerian) quantities of interest. Define the \emph{acceleration}, \emph{second moment} velocity tensor, and \emph{covariance} or \emph{Reynolds stress} tensor as, respectively,
\[
  a(t,x) \coloneq \mathbb{E} \bigl[ \ddot X_t \, \vert \,  X_t = x \bigr] , \
  \mathcal{C}(t,x) \coloneq \mathbb{E} \bigl[  \dot X_t \otimes \dot X_t \, \vert \,  X_t = x \bigr], \
  \Pi(t,x) \coloneq \mathcal{C}(t,x) - v(t,x) \otimes v(t,x) \, .
\]
We can then elucidate the connection between the covariance $\Pi_t$ and the material derivative $D_t v_t$.
\begin{restatable}{lemma}{LemmaReynolds}
    \label{lemma:reynolds-relation}
    The following equation holds for all $(t,x) \in [0,1] \times \Rd$:
    \begin{equation*}
      \rho_t \, D_t v_t + \nabla \cdot \left( \rho_t \, \Pi_t \right) = \rho_t \, a_t \, .
    \end{equation*}
\end{restatable}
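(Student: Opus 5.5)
The plan is to derive the identity in weak form by testing against smooth compactly supported functions, using the definitions of the conditional statistics. Fix a test function $\psi \in C_c^\infty(\Rd;\Rd)$ and consider the scalar function $t \mapsto \E[\psi(X_t)\cdot \dot X_t]$. Since the sample paths lie in $W^{2,1}([0,1];\Rd)$, we can differentiate pathwise by the chain rule:
\[
\tfrac{d}{dt}\bigl(\psi(X_t)\cdot \dot X_t\bigr) = \nabla\psi(X_t)\dot X_t \cdot \dot X_t + \psi(X_t)\cdot \ddot X_t .
\]
We then take expectations. The integrability condition (ii) of Definition~\ref{def:S-P_0-P_1}, together with the assumption $\E\|X_t\|^2<\infty$, is what we will use to justify exchanging expectation with time integration. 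We will work in integrated-in-time form against $\eta(t)\psi(x)$ with $\eta \in C_c^\infty((0,1))$, so that only $L^1$ bounds are needed.

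Next, we condition on $X_t$, i.e.\ apply the tower property with the marginal density $\rho_t$. This gives
\begin{align*}
\tfrac{d}{dt}\int \psi\cdot \rho_t v_t\,dx &= \int \nabla\psi : \rho_t\,\mathcal{C}_t\,dx + \int \psi\cdot \rho_t a_t\,dx .
\end{align*}
Integrating by parts in the first term on the right, we obtain, in distributions,
\[
\partial_t(\rho_t v_t) + \nabla\cdot(\rho_t\,\mathcal{C}_t) = \rho_t a_t .
\]
This is the momentum balance. In parallel, we test $\psi(X_t)$ with scalar $\psi$ to obtain the continuity equation $\partial_t\rho_t + \nabla\cdot(\rho_t v_t)=0$.

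Finally, we substitute $\mathcal{C}_t = \Pi_t + v_t\otimes v_t$ and expand:
\[
\partial_t(\rho_t v_t) + \nabla\cdot(\rho_t v_t\otimes v_t) = v_t\bigl(\partial_t\rho_t + \nabla\cdot(\rho_t v_t)\bigr) + \rho_t\bigl(\partial_t v_t + (v_t\cdot\nabla)v_t\bigr).
\]
The first bracket vanishes by the continuity equation, and the second term is exactly $\rho_t D_t v_t$. Rearranging yields the claimed identity
\[
\rho_t\,D_t v_t + \nabla\cdot(\rho_t\Pi_t) = \rho_t a_t .
\]
The product rule used in this expansion is legitimate because $v$ is $C^1$ by Assumption~\ref{assumptoion:regularity}, so the continuity equation can be multiplied by $v$ in the distributional sense.

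The main obstacle is the regularity gap. The computation only yields the identity in the sense of distributions, while the statement asserts it pointwise for all $(t,x)$. I would handle this by interpreting $\nabla\cdot(\rho_t\Pi_t)$ distributionally, or equivalently by noting that $\rho_t a_t - \rho_t D_t v_t$ is a locally integrable function, which makes $\nabla\cdot(\rho_t\Pi_t)$ one as well, and then reading the equality almost everywhere. The pointwise statement then holds wherever the terms are continuous. A secondary technical point is justifying differentiation under the expectation. For this I would use the $W^{2,1}$ paths and dominated convergence, via the fundamental theorem of calculus applied pathwise.
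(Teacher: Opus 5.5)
Your proposal is correct and follows the paper's proof essentially step for step. Like you, the paper obtains the momentum balance $\partial_t(\rho_t v_t)+\nabla\cdot(\rho_t\mathcal{C}_t)=\rho_t a_t$ by differentiating $\E[\Phi(X_t)\cdot\dot X_t]$ against vector test functions (Lemma~\ref{lemma:momentum-balance}), then substitutes $\mathcal{C}_t=v_t\otimes v_t+\Pi_t$ and removes $v_t\bigl(\partial_t\rho_t+\nabla\cdot(\rho_t v_t)\bigr)$ via the continuity equation (Lemma~\ref{lemma:continuity_equation}); your distributional reading of the conclusion is, if anything, more careful than the paper's pointwise claim.

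One small correction concerns integrability. To split the expectation of $\nabla\psi(X_t)\dot X_t\cdot\dot X_t+\psi(X_t)\cdot\ddot X_t$ and apply Fubini, you need $\E\int_0^1\|\dot X_t\|^2\,dt<\infty$ and $\E\int_0^1\|\ddot X_t\|\,dt<\infty$; these are implicit once $\mathcal{C}_t$ and $a_t$ are well defined. Condition (ii) and $\E\|X_t\|^2<\infty$ do not supply them, since they control only positions and first moments of the velocity.
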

\begin{corollary}
  \label{corollary:straightness}
  For all $(t,x) \in [0,1] \times \Rd$, a process $X_\bullet$ satisfying Assumption~\ref{assumptoion:regularity} is a straight-line process if and only if it satisfies the equivalent PDEs
  \begin{equation}\label{eqn:reformulation_straightness_PDE_equiv}
    X_\bullet \in\mathcal{S}_{\mathrm{SL}}(P_0, P_1)\,\iff\,D_t v_t = 0 \, \iff \, \nabla \cdot \left( \rho_t \, \Pi_t \right) = \rho_t \, a_t \, .
  \end{equation}
\end{corollary}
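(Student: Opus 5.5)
The corollary follows by chaining Proposition~\ref{prop:reformulaton-1} with Lemma~\ref{lemma:reynolds-relation}. First I would unpack the definition. By Definition~\ref{def:straight-line-process-and-straight-line-interpolant}, $X_\bullet \in \mathcal{S}_{\mathrm{SL}}(P_0,P_1)$ means two things: $X_\bullet \in \mathcal{S}(P_0,P_1)$, the Lipschitz condition \eqref{eq:lipschitz-v} holds, and \eqref{eq:no-acceleration-condition} holds. Membership in $\mathcal{S}(P_0,P_1)$ and \eqref{eq:lipschitz-v} are already built into Assumption~\ref{assumptoion:regularity}. So, for a process satisfying that assumption, membership in $\mathcal{S}_{\mathrm{SL}}$ is equivalent to $\partial_{tt}\phi_t(x)=0$ for all $(t,x)\in[0,1]\times\Rd$.

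Proposition~\ref{prop:reformulaton-1} (items 1 and 3) states that this holds for all $(t,x)$ if and only if $D_t v_t(x)=0$ for all $(t,x)$. That gives the first equivalence.

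For the second equivalence I would use the identity of Lemma~\ref{lemma:reynolds-relation},
\[
\rho_t\, D_t v_t \;=\; \rho_t\, a_t - \nabla\cdot(\rho_t \Pi_t),
\]
valid for all $(t,x)$. If $D_t v_t\equiv 0$, the left side vanishes, which is exactly $\nabla\cdot(\rho_t\Pi_t)=\rho_t a_t$. Conversely, if $\nabla\cdot(\rho_t\Pi_t)=\rho_t a_t$ everywhere, then $\rho_t\, D_t v_t\equiv 0$. This yields $D_t v_t=0$ wherever $\rho_t>0$.

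The only real obstacle is this converse step, namely recovering $D_t v_t=0$ on points where $\rho_t(x)=0$. The plan is as follows.
\begin{itemize}
\item Conditional quantities such as $v$, $\Pi$ and $a$ are only determined $\rho_t$-a.e. The natural reading of the statement is therefore on the support of $\rho_t$ (or $\rho_t$-a.e.).
\item If a pointwise statement on all of $\Rd$ is wanted, I would argue by continuity. By Assumption~\ref{assumptoion:regularity}, $v$ is jointly $C^1$, so $D_t v$ is continuous. Hence $D_t v_t$ vanishing on $\{\rho_t>0\}$ forces it to vanish on the closure of that set.
\item Beyond the closure, I would use the homeomorphism property of $\phi_t$: trajectories starting in $\supp \rho_0$ stay in $\supp\rho_t$, because $\rho_t=(\phi_t)_\#\rho_0$ by the continuity equation. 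So straightness of $\phi_t(x)$ for $x\in\supp\rho_0$ is still captured, which is the relevant content.
\end{itemize}
I would state the corollary with this interpretation, namely equivalence of the PDEs on $\{\rho_t>0\}$, and note that when $\rho_t>0$ everywhere (e.g.\ Gaussian marginals) the equivalence holds literally for all $(t,x)$.
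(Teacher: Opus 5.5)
Your proposal is correct and follows the same route as the paper. The paper gives no separate proof of this corollary and reads it off directly from items 1 and 3 of Proposition~\ref{prop:reformulaton-1} together with the identity in Lemma~\ref{lemma:reynolds-relation}. Your caveat about recovering $D_t v_t = 0$ from $\rho_t\, D_t v_t = 0$ where $\rho_t$ vanishes is a genuine point that the paper passes over: it implicitly divides by $\rho_t$, which is harmless for the full-support Gaussian examples. Your support-based resolution is sound, using continuity of $D_t v$ and $\supp P_t = \phi_t(\supp P_0)$ for the homeomorphic flow.
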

\begin{remark}\label{remark:reduced_regularit}
  The first equivalence \ymm{also} holds for a process with sample paths a.s.~in $W^{1,1}([0,1];\Rd)$.
\end{remark}

\subsection{Linear characteristics}
\label{subsec:linear_characteristics}
Now we use Corollary~\ref{corollary:straightness} to obtain a complete characterization of affine processes inducing straight flows.
We define an \emph{affine process} with marginals $P_0, P_1 \in \mathcal{P}(\Rd)$ to have the form
\begin{equation*}
    X_t = (1-t) \, X_0 + t \, X_1 \, ,
\end{equation*}
where $(X_0, X_1) \sim \gamma$ and $\gamma \in \mathcal{P}(\Rd \times \Rd)$ is a coupling of the measures $P_0$ and $P_1$, meaning that the first and second marginals of $\gamma$ are $P_0$ and $P_1$, respectively.
This process is widely used in computational statistics and generative modelling~\cite{lipman2022flow,liu2022marginalpreserving,albergo2023stochastic}. The main result of this section is that the flow generated by an affine process can only be straight if the coupling $\gamma$ between the endpoints is deterministic, i.e., if there is some measurable map $T: \Rd \to \Rd$ such that $T(X_0) = X_1$ almost surely. The key input is that the conditional acceleration $a_t (x)$ \emph{vanishes} for affine processes. Therefore, Corollary~\ref{corollary:straightness}
yields the straight-line characterization
$ \nabla \cdot \left( \rho_t \, \Pi_t \right) = 0 $.

\begin{restatable}{theorem}{ThmAffineProcesses}
  \label{thm:affine_processes}
  Suppose that $X_t = (1-t) X_0 + t \, X_1$ with $(X_0, X_1) \sim \gamma$ and $\gamma \in \mathcal{P}(\Rd \times \Rd)$ with first and second marginals $P_0$ and $P_1$, respectively.
Then
\begin{equation}
  \label{eq:suff-condition-for-transport-map}
  \nabla \cdot \left( \rho_t \, \Pi_t \right) = 0 \quad \forall \, t \in [0,1] \, ,
\end{equation}
  implies that there exists a measurable map $T: \Rd \to \Rd$
  such that $X_1 = T(X_0)$ a.s.
Conversely, if there exists an injective and continuously differentiable map $T: \Rd \to \Rd$ satisfying $X_1 = T(X_0)$ a.s.\ with everywhere positive semi-definite Jacobian, then~\eqref{eq:suff-condition-for-transport-map} holds.  
\end{restatable}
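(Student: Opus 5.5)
For the affine process, $\dot X_t = X_1 - X_0$ and $\ddot X_t = 0$, so $a_t \equiv 0$. By Lemma~\ref{lemma:reynolds-relation}, condition \eqref{eq:suff-condition-for-transport-map} is then equivalent to $D_t v_t = 0$. By Corollary~\ref{corollary:straightness} and Proposition~\ref{prop:reformulaton-1}, this in turn means $\phi_t(x) = (1-t)x + t\,\phi_1(x)$. I will therefore work entirely with straightness of the flow and translate back at the end. Throughout I take the regularity of Assumption~\ref{assumptoion:regularity} as given.

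\textbf{Forward direction.} The key step is a variance (energy) identity. Since $\phi$ pushes $\rho_0$ to $\rho_t$, we have
\[
\E\|v_t(X_t)\|^2 = \int \|v_t(\phi_t(x))\|^2 \, \dd\rho_0(x) = \int \|\phi_1(x) - x\|^2 \, \dd\rho_0(x),
\]
because along straight characteristics $v_t(\phi_t(x)) = \phi_1(x) - x$ for all $t$. This quantity is constant in $t$. On the other hand, $\E\|\dot X_t\|^2 = \E\|X_1 - X_0\|^2$ is also constant, so
\[
\E[\operatorname{tr}\Pi_t(X_t)] = \E\|X_1-X_0\|^2 - \E\|v_t(X_t)\|^2
\]
is constant in $t$. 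At $t = 0$ we have $X_0 = x$, hence $v_0(x) = \E[X_1 \mid X_0 = x] - x$. Now let $t \to 1$. Since $\phi_1$ is a homeomorphism pushing $P_0$ to $P_1$, I expect the conditional law of $X_1 - X_0$ given $X_t$ to concentrate. Concretely, I would use $X_t \to X_1$ together with $v_t(\phi_t(x)) = \phi_1(x) - x$, so that $v_t(X_t) = \phi_1(\phi_t^{-1}(X_t)) - \phi_t^{-1}(X_t)$. Combining this with the relation $\E[\dot X_t \mid X_t] = v_t$ should give $\E\,\mathrm{tr}\,\Pi_t \to 0$ as $t \to 1$. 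One way to see this is via the Lagrangian identity $\E[(X_1-X_0)\cdot g(X_t)] = \E[(\phi_1(Y)-Y)\cdot g(\phi_t(Y))]$ with $Y \sim \rho_0$, which gives equality of the first moments of the two "velocity" laws. The constancy of $\E\,\mathrm{tr}\,\Pi_t$ then forces it to vanish at $t = 0$, that is, $\operatorname{Var}(X_1 \mid X_0) = 0$ a.s. Hence $X_1 = T(X_0)$ with $T(x) = \E[X_1 \mid X_0 = x]$, which is measurable.

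\textbf{Main obstacle.} The hardest step is justifying the limit $\E\,\mathrm{tr}\,\Pi_t \to 0$ as $t \to 1$. Near $t = 1$ one must show that $X_1 - X_0$ is asymptotically determined by $X_t$. My first attempt is the simpler route: pair the $t=0$ and $t=1$ endpoints directly, using the symmetric fact that at $t=1$ we have $v_1(y) = y - \E[X_0 \mid X_1 = y]$. Straightness gives $v_1(\phi_1(x)) = \phi_1(x) - x$, hence $\E[X_0 \mid X_1 = \phi_1(x)] = x$, and similarly $\E[X_1 \mid X_0 = x] = \phi_1(x)$. With these two relations, the Cauchy–Schwarz computation
\[
\E\|X_1 - \phi_1(X_0)\|^2 = \E\|X_1\|^2 - \E\|\phi_1(X_0)\|^2
\]
applies. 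Since $\phi_1(X_0) \sim P_1$, the second moments agree and the right-hand side is zero. This argument is cleaner and avoids the limit, and it uses only $\E\|X_t\|^2 < \infty$.

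\textbf{Converse.} Set $X_t = (1-t)X_0 + tT(X_0)$. Then $X_t = \psi_t(X_0)$ with $\psi_t = (1-t)\,\mathrm{id} + tT$. Its Jacobian $(1-t)I + t\nabla T$ is positive definite for $t < 1$, since the positive semi-definite part is added to $(1-t)I$. An injective map with positive definite Jacobian makes each $\psi_t$ injective: if $\psi_t(x) = \psi_t(y)$, integrating $\langle \nabla\psi_t\,(x-y), x-y\rangle$ along the segment gives a contradiction. At $t = 1$, injectivity is given by assumption on $T$. Thus $X_1 - X_0$ is a deterministic function of $X_t$, so $\Pi_t = 0$ and \eqref{eq:suff-condition-for-transport-map} holds trivially.
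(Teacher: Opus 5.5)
Your final forward argument (the ``simpler route'') is correct, but it takes a genuinely different route from the paper's. The paper never uses the flow. It tests $\nabla\cdot(\rho_t\Pi_t)=0$ against the cut-off field $\eta_R(x)\,x$, integrates by parts and lets $R\to\infty$, with dominated convergence justified by integrability of $\Pi_t$. This gives the virial-type identity $\E[\operatorname{tr}\Pi_t(X_t)]=0$. Since $\Pi_t$ is positive semi-definite, $\dot X_t$ is a function of $X_t$, and at $t=0$ this says $X_1-X_0$ is a function of $X_0$.

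You instead use $a_t\equiv0$ and Lemma~\ref{lemma:reynolds-relation} to turn the hypothesis into straight characteristics. You then read off $\E[X_1\mid X_0]=\phi_1(X_0)$ from $v_0(x)=\partial_t\phi_t(x)|_{t=0}=\phi_1(x)-x$, and close with $\E\|X_1-\phi_1(X_0)\|^2=\E\|X_1\|^2-\E\|\phi_1(X_0)\|^2=0$, using $(\phi_1)_\sharp P_0=P_1$ from Corollary~\ref{corollary:generalized-interpolant-push-messure-forward}. That identity is orthogonality of conditional expectation, not Cauchy--Schwarz. That corollary's proof never uses endpoint independence, so it does apply to a general coupling $\gamma$.

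The paper's route is leaner: it needs the PDE only at the single time $t=0$ plus second moments, and no flow regularity or representation formula. Yours uses the hypothesis at every $t$ and the full regularity assumption (homeomorphic flow, $C^1$ velocity). In exchange it gives the sharper conclusion $T=\phi_1$, a homeomorphism, so the coupling is exactly the one generated by the flow. Your converse is the paper's monotonicity argument.

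Two clean-ups are needed.

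First, Lemma~\ref{lemma:reynolds-relation} gives $\rho_tD_tv_t=0$, hence $D_tv_t=0$ only on $\{\rho_t>0\}$. This suffices. Since $\phi_t(X_0)\overset{d}{=}X_t$, Fubini shows that for $P_0$-a.e.\ $x$ one has $\phi_t(x)\in\{\rho_t>0\}$ for a.e.\ $t$. Continuity of $t\mapsto D_tv_t(\phi_t(x))=\ddot\phi_t(x)$ then makes those trajectories straight, which is all you use.

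Second, delete the first sketch (constancy of $\E\operatorname{tr}\Pi_t$ plus a limit $t\to1$). At $t=1$ that limit is the claim $\operatorname{Var}(X_0\mid X_1)=0$, which is as hard as the theorem itself, and matching first moments cannot deliver it.
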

Any injective and continuously differentiable \emph{optimal transport} (OT) map $T$ has an everywhere positive semi-definite Jacobian (see Remark~\ref{rk:OT-map-gives-straight-lines}) so that $\Pi_t$ corresponding to $X_t = (1-t) X_0 + t \, T(X_0)$ satisfies \eqref{eq:suff-condition-for-transport-map}.
Thus, a regular enough OT map induces a straight-line flow.

\section{Existence}
\label{sec:existence}

In this section, we construct generalized interpolants that yield straight-line flows for arbitrary Gaussian endpoint measures.
Moreover, we demonstrate the importance of the Lipschitz condition in Definition~\ref{def:straight-line-process-and-straight-line-interpolant}. Without this condition, we can construct a straight-line process between any two measures that is singular at some time $\tau \in (0,1)$.
All proofs are in Appendix~\ref{subsec:proofs-existence}.

\subsection{Gaussian endpoint measures}\label{sec:gaussian-endpoints}
 We seek an explicit $X_\bullet \in \mathcal{F}_{\textup{SL}}(P_0, P_1)$ for arbitrary Gaussian marginals $P_0$ and $P_1$.
In the $d=1$ case, observe that a Gaussian process $X_t \sim \mathcal{N}(m_t, \sigma_t^2)$ satisfies the equivalence
    \begin{equation*}
        D_t v_t \equiv 0 \iff \begin{cases}
            \ddot m_t \equiv 0 \, ,\\
            \ddot\sigma_t \equiv 0 \, .
        \end{cases}
    \end{equation*}
    That is, a scalar-valued Gaussian process indexed by $t \in [0,1]$ yields a straight flow if and only if its mean and standard deviation are affine functions of $t$.
Therefore, we seek a generalized interpolant, cf.~Definition~\ref{def:generalized_interpolant}, that satisfies this condition. A natural choice is to pick $(X_0, X_1, Z)$ to be independent Gaussian variables, then to construct an affine combination of $X_0, X_1, Z$ with time-dependent coefficients; these coefficients are chosen to precisely enforce the above equivalence.

Proposition~\ref{prop:straight-line-gaussians-1d} illustrates this construction for $d=1$. Though the one-dimensional case is instructive, below we present the main result of this section, which generalizes this simpler construction to arbitrary non-degenerate Gaussian endpoints in any dimension $d \geq 1$. 
\begin{restatable}{theorem}{ThmMultivariateGaussians}
    \label{thm:straight-line-gaussians-multivariate}
    Let $P_0 = \mathcal{N}(m_0, \Sigma_0)$ and $P_1 = \mathcal{N}(m_1, \Sigma_1)$ be two non-degenerate Gaussian measures with $m_i \in \Rd$ and $\Sigma_i \in \R_{++}^{d \times d}$ positive definite for $i = 0,1$. Then, the generalized eigenvalues and eigenvectors of the matrix pencil $(\Sigma_1,\Sigma_0)$ induce, respectively, a diagonal matrix $\Lambda\in\R^{d\times d}_{++}$ and matrix $V\in\R^{d\times d}$ such that
    \begin{equation*}
        \Sigma_0 = VV^\top,\quad \Sigma_1 = V\Lambda V^\top.
    \end{equation*}
    For $\Sigma_Z = V \Lambda^{1/2} V^\top$, we have $\Sigma_Z \in\R_{++}^{d\times d}.$ Further, the generalized interpolant defined via
    \begin{equation*}
        F(t, x, y, z) = (1-t) \, x + t \, y + \sqrt{2 \, t \, (1 - t)} \, z \, ,\qquad
        Q = \mathcal{N}(0, \Sigma_Z)\,
    \end{equation*}
    satisfies
    \begin{equation*}
        (F, Q) \in \mathcal{F}_{\textup{SL}}(P_0, P_1) \, .
    \end{equation*}
\end{restatable}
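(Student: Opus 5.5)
The plan is to exploit the fact that the induced process is Gaussian, so every object in Definition~\ref{def:straight-line-process-and-straight-line-interpolant} can be computed in closed form. The key identity to establish is that the marginal covariance factors as the square of a matrix that is affine in $t$. I will then write down the flow map explicitly and check the straightness condition~\eqref{eq:no-acceleration-condition} directly, rather than going through the PDE characterizations.

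\textbf{Step 1 (pencil and $\Sigma_Z$).} Take $V = \Sigma_0^{1/2} U$, where $U$ is orthogonal and diagonalizes $\Sigma_0^{-1/2}\Sigma_1\Sigma_0^{-1/2} = U\Lambda U^\top$. This gives $\Sigma_0 = VV^\top$ and $\Sigma_1 = V\Lambda V^\top$ with $V$ invertible and $\Lambda \succ 0$. Consequently $\Sigma_Z = V\Lambda^{1/2}V^\top$ is positive definite.

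\textbf{Step 2 (marginals).} With $c_t = \sqrt{2t(1-t)}$, the process $X_t = (1-t)X_0 + tX_1 + c_t Z$ has mean $m_t = (1-t)m_0 + t m_1$. Its covariance is
\begin{equation*}
\Sigma_t = (1-t)^2\Sigma_0 + t^2\Sigma_1 + 2t(1-t)\Sigma_Z = V\bigl((1-t)I + t\Lambda^{1/2}\bigr)^2 V^\top .
\end{equation*}
Setting $D_t = (1-t)I + t\Lambda^{1/2}$ and $S_t = VD_t$, this reads $\Sigma_t = S_tS_t^\top$ with $t\mapsto S_t$ affine. Moreover $D_t \succeq \min(1,\lambda_{\min}^{1/2}) I$, so $\Sigma_t$ is nondegenerate on all of $[0,1]$. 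The endpoint conditions are immediate: $F(0,\cdot)=x$, $F(1,\cdot)=y$, and $(X_0,X_1)\sim P_0\otimes P_1$ by construction.

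\textbf{Step 3 (conditional velocity).} For $t\in(0,1)$ we have $\dot X_t = X_1 - X_0 + \dot c_t Z$ with $c_t\dot c_t = 1-2t$. The pair $(\dot X_t, X_t)$ is jointly Gaussian, so
\begin{equation*}
v_t(x) = \dot m_t + \mathrm{Cov}(\dot X_t, X_t)\,\Sigma_t^{-1}(x - m_t).
\end{equation*}
A direct computation gives
\begin{equation*}
\mathrm{Cov}(\dot X_t, X_t) = -(1-t)\Sigma_0 + t\Sigma_1 + (1-2t)\Sigma_Z = V(\Lambda^{1/2}-I)D_tV^\top = \dot S_t S_t^\top .
\end{equation*}
Hence $v_t(x) = \dot m_t + V(\Lambda^{1/2}-I)D_t^{-1}V^{-1}(x-m_t)$. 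This expression extends continuously to $t\in\{0,1\}$, and I take it as the definition of $v$ there. The gradient $V(\Lambda^{1/2}-I)D_t^{-1}V^{-1}$ is uniformly bounded in $t$, which gives~\eqref{eq:lipschitz-v}.

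\textbf{Step 4 (flow and straightness).} Define $\phi_t(x) = m_t + S_tS_0^{-1}(x-m_0)$. Differentiating,
\begin{equation*}
\partial_t\phi_t(x) = \dot m_t + \dot S_tS_0^{-1}(x-m_0) = \dot m_t + \dot S_tS_t^{-1}(\phi_t(x)-m_t) = v_t(\phi_t(x)),
\end{equation*}
and $\phi_0 = \mathrm{id}$. By uniqueness for the Lipschitz ODE~\eqref{eqn:def_flow_map_ode}, this $\phi$ is the flow of $v$. Since $m_t$ and $S_t$ are affine in $t$, we get $\partial_t^2\phi_t \equiv 0$.

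\textbf{Step 5 (membership in $\mathcal{S}$ and $\mathcal{F}$; main obstacle).} The remaining conditions of Definitions~\ref{def:S-P_0-P_1} and the base-path definition come down to regularity of $c_t$. Near the endpoints, $|\dot c_t| \sim t^{-1/2}$ and $(1-t)^{-1/2}$ respectively. This is in $L^p$ exactly for $p<2$, which explains the restriction $p\in[1,2)$ and places $t\mapsto F(t,x,y,z)$ in $W^{1,p}$. It also gives $\E\int_0^1\|\dot X_t\|\,dt \le \E\|X_1-X_0\| + \E\|Z\|\int_0^1|\dot c_t|\,dt < \infty$.

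The delicate points all sit here: the singular velocity at $t=0,1$, so that the conditional expectation defining $v$ makes sense only in the interior and must be extended by continuity; and the boundedness clause in condition~(iii). The field $v$ is affine in $x$, so it is only of linear growth, not globally bounded. I expect to handle this by reading (iii) as a Lipschitz/linear-growth requirement, which is what the flow argument actually uses. If genuine boundedness is demanded, the statement would need to be restricted accordingly, and I would flag that as the main issue to check.
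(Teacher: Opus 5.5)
Your argument is correct, but it reaches the conclusion by a different route from the paper's.

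\textbf{The paper's route (Eulerian).} It uses the same factorization $\Sigma_t = V(I+t\Xi)^2V^\top$, with $\Xi=\Lambda^{1/2}-I$, and the identity $\mathrm{Cov}(\dot X_t,X_t)=\tfrac12\dot\Sigma_t$. It then writes $v_t(x)=\dot m_t+\tfrac12\dot\Sigma_t\Sigma_t^{-1}(x-m_t)$, expands $\partial_t v+(v\cdot\nabla)v$, and shows it vanishes using $\ddot\Sigma_t=\tfrac12\dot\Sigma_t\Sigma_t^{-1}\dot\Sigma_t$. Straightness of the flow is imported from Corollary~\ref{corollary:straightness} via Remark~\ref{remark:reduced_regularit}. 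The remark is needed because $\ddot c_t\sim t^{-3/2}$ keeps the sample paths out of $W^{2,1}$.

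\textbf{Your route (Lagrangian).} Writing $\mathrm{Cov}(\dot X_t,X_t)=\dot S_tS_t^\top$ with $S_t=VD_t$ affine gives $v_t(x)=\dot m_t+\dot S_tS_t^{-1}(x-m_t)$. The explicit candidate $\phi_t(x)=m_t+S_tS_0^{-1}(x-m_0)$ solves the ODE. Uniqueness for the Lipschitz ODE then identifies it as the flow, and $\partial_t^2\phi\equiv 0$ is read off directly. You never need the PDE characterization of Section~\ref{sec:PDE_characterization} or its regularity hypotheses.

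\textbf{What each buys.} The paper's version showcases the Burgers characterization it develops. Yours is shorter and self-contained, and it yields the terminal map $\phi_1(x)=m_1+V\Lambda^{1/2}V^{-1}(x-m_0)$ explicitly.

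\textbf{Step 5.} Your checks there cover what the paper asserts in one line: the uniform Lipschitz bound from $D_t\succeq\min(1,\lambda_{\min}^{1/2})I$, the extension of $v$ to $t\in\{0,1\}$, membership in $W^{1,p}$ only for $p<2$, and integrability of $\dot X_t$. Your concern about boundedness is justified. The slope $V(\Lambda^{1/2}-I)D_t^{-1}V^{-1}$ vanishes only when $\Sigma_0=\Sigma_1$. A literal reading of condition (iii) in Definition~\ref{def:S-P_0-P_1} would therefore exclude this process whenever the covariances differ, and likewise the process of Proposition~\ref{prop:straight-line-gaussians-1d}. The paper tacitly uses the Lipschitz/linear-growth reading you propose, so this is a defect of the definition rather than of your proof.
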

In the case of commuting covariance matrices $\Sigma_0,\Sigma_1$, we can define the matrices $S_i = \Sigma_i^{1/2}$ as the respective symmetric square roots for $i=0,1$. Then, one can show $\Sigma_Z = S_0S_1$, a clear extension of the geometric mean used in the one-dimensional case of Proposition~\ref{prop:straight-line-gaussians-1d}. See also Theorem~\ref{thm:existence-of-straight-line-interpolants-between-gaussians}. We further remark that this is a generalization of the ``variance-preserving interpolant'' from, e.g.,~\cite{albergo2023stochastic}.

The interpolant above bears some similarity to a Brownian bridge process~\cite[Section 5.6.B]{karatzas2014brownian}. Specifically, for a Brownian motion $B_\bullet$, the standard Brownian bridge is defined by $\widehat{B}_t = B_t - t \, B_1$.
  Further, a Brownian bridge \emph{mixture} connecting $P_0, P_1\in\mathcal{P}(\R^d)$ can be defined by drawing $(X_0, X_1) \sim P_0 \otimes P_1$ and setting
\begin{equation} \label{eq:brownianbridge}
    \widehat{B}^{P_0, P_1}_t = (1-t) \, X_0 + t \, X_1 + \widehat{B}_t \, .
  \end{equation}
  Notice the similarity of \eqref{eq:brownianbridge} to the construction of $F$ in Theorem~\ref{thm:straight-line-gaussians-multivariate}. The key difference is that the correction term $\widehat{B}_t$ has covariance $t(1-t)I_d$ instead of $2t(1-t)\Sigma_Z$. As shown above, the latter choice ensures straightness.

\begin{remark}
  \label{rk:straight-line-flow-with-same-endpoints}
   Deducing that
$
    \mathcal{F}_{\textup{SL}}(P, P) \neq \emptyset
    $
for some $P \in \mathcal{P}(\Rd)$ is not a trivial problem.
  Indeed, one might expect that the constant process
  \begin{equation*}
    X_t \equiv X_0 \, , \quad \forall t \in [0,1] \, ,
  \end{equation*}
  lies in $\mathcal{F}_{\textup{SL}}(P, P)$. Unfortunately, this gives $(X_0, X_1) = (X_0, X_0)$: that is, the endpoints are not independent. Thus, $X_\bullet \notin \mathcal{F}_{\textup{SL}}(P,P)$. In fact, the only process $X_\bullet \in \mathcal{F}_{\textup{SL}}(P, P)$ that we are aware of  for $P = \mathcal{N}(m, \Sigma)$, with $m \in \Rd, \Sigma \in \R_{++}^{d \times d}$, uses the construction of Theorem~\ref{thm:straight-line-gaussians-multivariate}; see Corollary~\ref{eq:F_SL_nonempty_for_gaussians}.
\end{remark}

\subsection{Trivial solutions with non-Lipschitz velocity}
Here, we discuss the existence of a trivial almost-everywhere solution to the PDE $D_tv_t = 0$ \eqref{eqn:reformulation_straightness_PDE_equiv}.\begin{restatable}{proposition}{PropNonLipschitz}
    \label{prop:non-lipschitz-velocity}
    Choose any $\tau\in (0,1)$ and consider the process $X_t$ such that
    \begin{equation*}
        X_t = \alpha_t X_0 + \beta_t X_1,\quad \alpha_t = \begin{cases}1-\frac{t}{\tau} & t < \tau\\0 & t\geq \tau\end{cases},\quad \beta_t = \begin{cases}0 & t\leq \tau\\ \frac{t-\tau}{1-\tau} & t > \tau\end{cases} \, .
    \end{equation*}
    Then, sample trajectories $X_t$ satisfy $D_t v_t \equiv 0$ for all $t\in[0,1]\setminus\{\tau\}$, but the associated velocity field $v$ does not satisfy the Lipschitz assumption in Definition~\ref{def:straight-line-process-and-straight-line-interpolant}. Therefore, our paths are not straight, i.e., $X_\bullet \notin \mathcal{S}_{\textup{SL}}(P_0, P_1)$ for any $P_0, P_1$.
\end{restatable}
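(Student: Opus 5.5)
The plan is to compute the conditional velocity $v$ explicitly on each side of $\tau$, read off $D_t v_t \equiv 0$ from its form, and then show that $v$ fails \eqref{eq:lipschitz-v} because it blows up (or fails to exist in the required sense) near $t=\tau$.

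\textbf{Velocity for $t<\tau$.} Here $X_t = (1-t/\tau)X_0$ and $\dot X_t = -X_0/\tau$. Since $X_0 = X_t/(1-t/\tau)$ is a deterministic function of $X_t$, the conditional expectation is exact:
\begin{equation*}
v(t,x) = -\frac{x}{\tau - t}.
\end{equation*}
One checks directly that $\partial_t v = -x/(\tau-t)^2$ and $(v\cdot\nabla)v = x/(\tau-t)^2$. Hence $D_t v_t = 0$ on $[0,\tau)$. Equivalently, the characteristics are the straight lines $x(1-t/\tau)$, so Proposition~\ref{prop:reformulaton-1} gives the same conclusion.

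\textbf{Velocity for $t>\tau$.} Here $X_t = \frac{t-\tau}{1-\tau}X_1$ and $\dot X_t = X_1/(1-\tau)$. Again $X_1$ is a function of $X_t$, so
\begin{equation*}
v(t,x) = \frac{x}{t-\tau}.
\end{equation*}
The analogous computation gives $\partial_t v = -x/(t-\tau)^2$ and $(v\cdot\nabla)v = x/(t-\tau)^2$, so $D_t v_t = 0$ on $(\tau,1]$. These identities hold pointwise on all of $\Rd$ wherever the marginal is supported, which is all we need for the claim $D_t v_t\equiv 0$ on $[0,1]\setminus\{\tau\}$. For $x$ outside the support of $X_t$, $v$ can be defined by the same formula, and the claim still holds.

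\textbf{Failure of the Lipschitz condition.} The spatial Lipschitz constant of $v(t,\cdot)$ is $1/|\tau - t|$, which is unbounded as $t\to\tau$. So no uniform $L$ satisfies \eqref{eq:lipschitz-v}. Moreover, $v$ is not bounded on $[0,1]\times\Rd$, which violates condition (iii) of Definition~\ref{def:S-P_0-P_1}. At $t=\tau$ we have $X_\tau = 0$ a.s., and all trajectories collapse to the origin.

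\textbf{Why this excludes $\mathcal{S}_{\textup{SL}}$.} The consequence is that $X_\bullet\notin\mathcal{S}(P_0,P_1)$, and a fortiori $X_\bullet\notin\mathcal{S}_{\textup{SL}}(P_0,P_1)$. To make "not straight" concrete, note that the ODE flow of $v$ started at any $x\neq 0$ follows $(1-t/\tau)x$ and reaches $0$ at time $\tau$. From there it cannot be continued uniquely: any path $c(t-\tau)$ with $c\in\Rd$ solves the ODE on $(\tau,1]$, and the flow's natural extension through zero is constant, not the straight continuation. So either $\phi$ is ill-defined, or $\phi_t(x)$ is piecewise affine with a kink at $\tau$, and $\partial_t^2\phi\neq0$ there.

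\textbf{Main obstacle.} The only delicate point is this last interpretive step. The statement's "therefore not straight" is really the observation that the hypotheses of Definition~\ref{def:straight-line-process-and-straight-line-interpolant} fail, so membership in $\mathcal{S}_{\textup{SL}}$ is excluded by definition. I would rest the argument on that, and add the kink or non-uniqueness discussion as illustration. Independence of the endpoints and integrability are immediate, so the choice of $P_0$ and $P_1$ plays no role.
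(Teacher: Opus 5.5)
Your proposal is correct and follows the paper's proof: both compute $v_t(x)=x/(t-\tau)$ on each side of $\tau$ and conclude from the spatial Lipschitz constant $1/|t-\tau|$, which is not bounded uniformly in $t$. The only difference is that you verify $D_t v_t=0$ by differentiating this formula directly, whereas the paper shows $\mathcal{C}_t=v_t\otimes v_t$ (so $\Pi_t=0$) and $a_t=0$ and invokes Corollary~\ref{corollary:straightness}, whose hypotheses (marginal densities, Assumption~\ref{assumptoion:regularity}) fail at $t=\tau$, so your check is more self-contained; like the paper, however, you tacitly need $P_0$ or $P_1$ to be non-Dirac, since for $P_0=P_1=\delta_0$ the process is identically zero, $v\equiv 0$ is an admissible version of the conditional velocity, and $X_\bullet\in\mathcal{S}_{\textup{SL}}(\delta_0,\delta_0)$.
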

Intuitively, the above process $X_\bullet$ collapses to the origin at time $t = \tau$ and then re-expands. For $t\in [0,1]\setminus\{\tau\}$, no two characteristics cross, hence the covariance tensor $\Pi_t$ vanishes. 

We include Proposition~\ref{prop:non-lipschitz-velocity} as a cautionary tale: First, the equivalence between a straight-line and a one-shot flow is broken. Second, it addresses the natural question of ``two-shot'' flows, i.e., flows with piecewise linear characteristics: loosening the one-shot requirement can easily create an ill-posed problem with a velocity that is computationally impossible to integrate. This ill-posedness is ruled out by our Lipschitz assumption in Definition~\ref{def:straight-line-process-and-straight-line-interpolant}.

\section{Impossibility}
\label{sec:impossibility}

Next, we move beyond generalized interpolants for Gaussian measures and investigate more general
straight-line processes for arbitrary endpoint measures $P_0, P_1 \in \mathcal{P}(\Rd)$.
We show that, when the measures $P_0, P_1$ are ``multi-modal'' in a sense that will be made precise,
\begin{equation*}
  \mathcal{F}_{\textup{SL}} \subseteq \mathcal{S}_{\textup{SL}} = \emptyset \, .
\end{equation*}
(Recall \eqref{eq:def-S_SL} and~\eqref{eq:def-F_SL} for definitions.) 
To arrive at our main result, we investigate this problem at varying levels of generality. Specifically, we start with the case of $P_0 = P_1 = P$ and let $P$ have disconnected support in $d=1$; see Theorem~\ref{thm:non-existence-for-mixture-of-uniforms}. Then, we generalize to arbitrary dimension $d \geq 1$; see Theorem~\ref{thm:non-existence-mixture-of-uniforms-in-high-d}.
We finish with the case of $P_0 \neq P_1$ supported on the entire real line $\R$, with $P_0, P_1$ concentrated appropriately; see
Theorem~\ref{thm:non-existence-for-Gaussian-mixture}.
All proofs can be found in Appendix~\ref{subsec:proofs-impossibility}.

\subsection{Disconnected support}\label{subsec:disconnected-support}
We first show that, for any $P \in \mathcal{P}(\Rd)$ with disconnected support, the set $\mathcal{S}_{\textup{SL}}(P, P)$ defined in~\eqref{eq:def-S_SL} is empty. This serves as a good way to introduce the proof techniques used throughout this section. Moreover, it builds on Remark~\ref{rk:straight-line-flow-with-same-endpoints} in the previous section, where we noted that constructing straight-line processes between identical endpoint measures is nontrivial. In fact, our results show that it can be impossible.

\begin{definition}
  \label{def:disconnected-support}
  A measure $P \in \mathcal{P}(\Rd)$ has \emph{disconnected support} if there exist bounded open sets $S_0, S_1 \subseteq \Rd$ such that\footnote{
    For a set $S \subseteq \Rd$ we write $\closedconv S$ for the closed, convex hull of $S$. See Section~\ref{subsec:notation-analysis} for a precise definition.
  } $\closedconv \, S_0 \cap \closedconv \, S_1 = \emptyset$ and $\supp(P) = \overline{S_0} \cup \overline{S_1}$ with $P(S_i) > 0$ for $i = 0,1$. Furthermore, we say $P$ has a \emph{properly} disconnected support if, for each $i \in \{0,1\}$ and any Borel set $A \subseteq S_i$ with positive Lebesgue measure, we additionally have $P(A) > 0$.
\end{definition}

\begin{restatable}{theorem}{ThmNonExistMixUniforms}
  \label{thm:non-existence-for-mixture-of-uniforms}
  Suppose that $P \in \mathcal{P}(\R)$ has properly disconnected support. Then, we have
  \begin{equation*}
    \mathcal{S}_{\textup{SL}}(P, P) = \emptyset \, .
  \end{equation*}
\end{restatable}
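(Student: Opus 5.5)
The plan is to combine the Lagrangian description of straightness from Proposition~\ref{prop:reformulaton-1} with one-dimensional monotonicity. This will show that the marginals of any $X_\bullet \in \mathcal{S}_{\textup{SL}}(P,P)$ are frozen at $P$. Endpoint independence then forces sample paths to cross the gap between the two components, and that crossing costs positive time in a region of zero probability.

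\textbf{Step 1: marginals are pushforwards of the flow.} Since $v$ is bounded and Lipschitz by \eqref{eq:lipschitz-v}, the flow $\phi_t$ is well defined and each $\phi_t$ is a bi-Lipschitz homeomorphism of $\R$. From $v(t,x)=\E[\dot X_t\mid X_t=x]$ and the integrability condition (ii) of Definition~\ref{def:S-P_0-P_1}, the laws $\mu_t \coloneq \mathrm{Law}(X_t)$ solve the continuity equation $\partial_t\mu_t + \partial_x(\mu_t v_t)=0$ weakly. To see this, test against $\varphi\in C_c^\infty$ and use $\frac{d}{dt}\E\varphi(X_t)=\E[\varphi'(X_t)\dot X_t]$. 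Uniqueness of weak solutions for Lipschitz velocity fields then gives $\mu_t=(\phi_t)_\#P$ for all $t$. In particular $(\phi_1)_\#P = P$.

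\textbf{Step 2: $\phi_1$ is the identity on the support.} In $d=1$, flows of Lipschitz fields preserve order, so $x\mapsto\phi_1(x)$ is a continuous increasing bijection. An increasing map $T$ with $T_\#P=P$ satisfies $F_P(T(x)) = F_P(x)$ for the distribution function $F_P$, so $T(x)=x$ wherever $F_P$ is strictly increasing near $x$. Properness of the disconnected support means $P$ charges every interval inside $S_0$ and $S_1$. Hence $F_P$ is strictly increasing on each $S_i$, and $\phi_1=\mathrm{id}$ on $S_0\cup S_1$, and by continuity on $\supp P$. By Proposition~\ref{prop:reformulaton-1}, $\phi_t(x)=(1-t)x+t\phi_1(x)=x$ for $x\in\supp P$. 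Combined with Step 1, this gives $\mathrm{Law}(X_t)=P$ for every $t\in[0,1]$.

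\textbf{Step 3: contradiction via crossing.} Write $\closedconv S_0\subseteq[a,b]$ and $\closedconv S_1\subseteq[c,e]$ with $b<c$, and let $G=(b,c)$, so that $P(G)=0$. By Fubini and Step 2,
\[
\E\big[\,|\{t\in[0,1]: X_t\in G\}|\,\big]=\int_0^1 \mathbb{P}(X_t\in G)\,dt = \int_0^1 P(G)\, dt = 0 .
\]
On the other hand, independence gives $\mathbb{P}(X_0\in S_0,\,X_1\in S_1)=P(S_0)P(S_1)>0$. On this event the absolutely continuous, hence continuous, path goes from a point $\le b$ to a point $\ge c$. It therefore passes through the open interval $G$, and by continuity spends a set of times of positive Lebesgue measure there. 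So the expectation above is strictly positive, a contradiction, and $\mathcal{S}_{\textup{SL}}(P,P)=\emptyset$.

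\textbf{Main obstacle.} I expect the main obstacle to be the rigorous version of Step 1, namely the identification $\mathrm{Law}(X_t)=(\phi_t)_\#P$ when sample paths are only $W^{1,1}$ and $v$ is merely a conditional expectation. This needs the weak continuity equation from (ii) and then a uniqueness theorem for Lipschitz $v$, for instance via the superposition principle or a duality argument. Step 2's monotone-rearrangement argument is the only place where $d=1$ and properness are used.
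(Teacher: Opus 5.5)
Your proof is correct, and it differs from the paper's in the key middle step. Your Step~1 is the paper's Corollary~\ref{corollary:generalized-interpolant-push-messure-forward}: the continuity equation combined with the representation formula for Lipschitz fields. Your crossing argument is the paper's Steps~3--4, with an occupation-time/Fubini device replacing the paper's use of rational times plus openness of $G$; the two devices are interchangeable.

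The real difference is how you show the modes cannot move. The paper proves only the containment $\phi(t,S_i)\subseteq\closedconv S_i$. It argues as follows:
\begin{enumerate}
\item $\phi(1,\cdot)_\sharp P=P$ forces $\phi(1,S_i)$ into $\closedconv S_0\cup\closedconv S_1$;
\item connectedness of $S_i$ places it in a single hull;
\item injectivity selects the correct hull;
\item convexity plus straightness carries this from $t=1$ to all $t$.
\end{enumerate}
You instead observe that a strictly increasing $T$ with $T_\sharp P=P$ satisfies $F_P\circ T=F_P$. Hence $T$ fixes every point near which $F_P$ is strictly increasing, so $\phi_1=\mathrm{id}$ on $\supp P$. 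The marginals are then frozen at $P$, and in fact $v\equiv 0$ on $\supp P$.

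This gives a sharper structural conclusion and a tighter argument. It never uses connectedness of the $S_i$, which the paper invokes even though Definition~\ref{def:disconnected-support} only asks for bounded open sets. It also replaces the paper's terse injectivity step (ruling out $\phi(1,S_0)\subseteq\closedconv S_1$) with an explicit quantile identity.

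What the paper's route buys is portability. Your rearrangement argument is intrinsically one-dimensional: in $d>1$, a homeomorphism pushing $P$ to itself need not fix the support (e.g.\ a rotation when $P$ is rotationally symmetric). By contrast, ``each mode stays in its own convex hull'' is exactly the structure reused in Theorem~\ref{thm:non-existence-mixture-of-uniforms-in-high-d}. Notably, for Theorem~\ref{thm:non-existence-for-Gaussian-mixture} the paper falls back on a quantile-matching argument much like yours, applied only at the edge points $x_0,x_1$.
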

The proof, illustrated in Figure~\ref{fig:non-existence-for-mixture}, has four main steps.
First, the straight-line assumption and the ODE~\eqref{eqn:def_flow_map_ode} imply that the flow consists of non-intersecting straight-lines.
Second, the topology of the domain and the disconnectedness of the modes $S_0$ and $S_1$ force flow lines that start at $S_i$ to stay in $S_i$ for $i \in \{0,1\}$.
Third, the second step implies that there is a space-time region $G$ such that $X_t \notin G$ at each $t$ almost surely; this region $G$ separates $S_0$ from $S_1$. Since $X_\bullet$ has continuous sample paths and $G$ has nonempty interior, the path $t \mapsto X_t$ never enters $G$ almost surely. We call $G$ a \emph{no-go zone}.
Fourth, by independence, it follows that a pair $(X_0, X_1) \in S_0 \times S_1$ is drawn with positive probability. Thus, a continuous path $t \mapsto X_t$ with endpoints in $S_0$ and $S_1$ is drawn with positive probability. This path intersects $G$, which separates $S_0$ from $S_1$, yielding a contradiction.

\begin{figure}[ht!]
\centering
\begin{tikzpicture}[x=7cm,y=3cm,xscale=0.55,yscale=0.55]

  \draw[axis] (0,-1.4) -- (0,1.4);
  \draw[axis] (-0.05,-1.3) -- (1.05,-1.3);
  \node[note,anchor=north] at (0.52,-1.3) {time $t$};
  \node[note,anchor=east]  at (0,0) {space $x$};

  \pulseRightAt{0}{-0.85}{0.22}{0.045}
  \pulseRightAt{0}{+0.85}{0.22}{0.045}
  \node[note,anchor=east] at (-0.04,-0.85) {$S_0$};
  \node[note,anchor=east] at (-0.04,+0.85) {$S_1 $};

  \pulseLeftAt{1}{-0.85}{0.22}{0.045}
  \pulseLeftAt{1}{+0.85}{0.22}{0.045}
  \node[note,anchor=west] at (1.05,-0.85) {$S_0$};
  \node[note,anchor=west] at (1.05,+0.85) {$S_1$};

  \foreach \y in {-1.00,-0.92,-0.84,-0.76,-0.68}{
    \draw[streamline] (0,\y) -- (1,\y);
  }
  \foreach \y in {0.68,0.76,0.84,0.92,1.00}{
    \draw[streamline] (0,\y) -- (1,\y);
  }

  \node[note,red!20!black,anchor=west] at (0.0,0.22) {No--Go Zone};
  \draw[red!70!black,fill=red!70!black,fill opacity=0.5] (0.0,0.63) rectangle (1.0, -0.63);

  \draw[domain=0:1,smooth,variable=\x,bluepath] plot ({\x},{-1 + 2/(1+exp(-8 * (\x-0.5)))});

  \fill[cyan!60!blue] (0.00,-0.964) circle (1.6pt);
  \fill[cyan!60!blue] (1.00,0.964) circle (1.6pt);

\end{tikzpicture} \begin{tikzpicture}[x=7cm,y=3cm,xscale=0.55,yscale=0.55]

  \draw[axis] (0,-1.4) -- (0,1.4);
  \draw[axis] (-0.05,-1.3) -- (1.05,-1.3);
  \node[note,anchor=north] at (0.52,-1.3) {time $t$};
  \node[note,anchor=east]  at (0,0.5) {space $x$};

  \filledBellAtLeft{0}{0}{0.30}{0.045}
  \node[note,anchor=east] at (-0.04,-0.10) {$\mathcal{N}(0,1)$};

  \filledBellAtRight{1}{-1}{0.10}{0.06}
  \filledBellAtRight{1}{+1}{0.10}{0.06}
  \node[note,anchor=west] at (1.05,-1.0) {$S_0$};
  \node[note,anchor=west] at (1.05,1.0) {$S_1$};

  \foreach \y in {-0.4, -0.30,-0.20,-0.10,0.00}{
    \draw[streamline] (0,\y-0.1) -- (1,\y-0.80);
  }
  \foreach \y in {0.00, 0.10,0.20,0.30, 0.4}{
    \draw[streamline] (0,\y+0.1) -- (1,\y+0.80);
  }

  \node[note,red!20!black,anchor=west] at (0.5,-0.25)
    {Low--Go Zone};

  \path[fill=red!70!black, fill opacity=0.5] (0.00,-0.10) -- (0.00,0.10) -- (1, 0.80) -- (1, -0.80) -- cycle;

  \draw[bluepath]
    (0.00,-0.15)
    .. controls (0.30,-0.50) and (0.45,-0.10) ..
    (0.55,0.05)
    .. controls (0.68,0.35) and (0.82,0.75) ..
    (1.00,0.94);

  \fill[cyan!60!blue] (0.00,-0.15) circle (1.6pt);
  \fill[cyan!60!blue] (1.00,0.94) circle (1.6pt);

\end{tikzpicture} \caption{
  Depiction of the proofs for Theorems~\ref{thm:non-existence-for-mixture-of-uniforms} (Left) and~\ref{thm:non-existence-for-Gaussian-mixture} (Right). The blue curves show paths that occur with positive probability which contradict the existence of the no- and low-go zones.
}
\label{fig:non-existence-for-mixture}
\end{figure}

At this stage, one might wonder whether the impossibility is due to selecting mixture models for both endpoints. In Section~\ref{subsec:connected-support} we will see that this is not the case. \ymm{First, we present a $d\geq 1$-dimensional version of the previous theorem.}

\begin{restatable}{theorem}{ThmNonExistHighD}
  \label{thm:non-existence-mixture-of-uniforms-in-high-d}
  Suppose $P \in \mathcal{P}(\Rd)$ has disconnected support on the sets $S_0$ and $S_1$ (cf.\ Definition~\ref{def:disconnected-support}) and further suppose that
  \begin{equation*}
    P(S_0) \neq P(S_1) \, .
  \end{equation*}
  Then, we have
  \begin{equation*}
    \mathcal{S}_{\textup{SL}}(P, P) = \emptyset.
  \end{equation*}
\end{restatable}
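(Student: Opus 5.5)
The plan is to reproduce the four-step ``no-go zone'' argument from Theorem~\ref{thm:non-existence-for-mixture-of-uniforms}. Since $d\ge 1$, the ordering of $\R$ has to be replaced by convex separation, and the hypothesis $P(S_0)\neq P(S_1)$ is used to rule out the crossing scenario that has no one-dimensional analogue.

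\textbf{Step 1 (flow structure).} Suppose for contradiction that $X_\bullet\in\mathcal{S}_{\textup{SL}}(P,P)$. By \eqref{eq:no-acceleration-condition} and Proposition~\ref{prop:reformulaton-1} (which, by Remark~\ref{remark:reduced_regularit}, we expect to need only the $W^{1,1}$ regularity built into Definition~\ref{def:S-P_0-P_1}), the flow is $\phi_t(x)=(1-t)x+t\,\phi_1(x)$. Because $v$ is bounded and uniformly Lipschitz \eqref{eq:lipschitz-v}, each $\phi_t$ is a homeomorphism of $\Rd$, and the characteristics never intersect. The flow transports the marginals, so $(\phi_t)_\#P=\mathrm{Law}(X_t)$; in particular $(\phi_1)_\#P=P$. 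Since $\phi_1$ is a homeomorphism, $\phi_1(\supp P)=\supp P=\overline{S_0}\cup\overline{S_1}$.

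\textbf{Step 2 (no mass crosses).} Let $K_i=\closedconv S_i$. These are disjoint compact convex sets, so there is a closed slab $H$ with nonempty interior strictly separating $K_0$ from $K_1$. Define the pieces $A_{ij}=\overline{S_i}\cap\phi_1^{-1}(\overline{S_j})$. Each $A_{ij}$ is clopen in $\supp P$, because $\overline{S_0}$ and $\overline{S_1}$ are disjoint closed sets and $\phi_1$ is continuous. The identity $(\phi_1)_\#P=P$ then gives the mass balance $P(A_{01})=P(A_{10})$.
\begin{itemize}[leftmargin=*]
\item If $P(A_{01})=0$, nothing crosses: every characteristic starting in $\overline{S_i}$ is a segment inside the convex set $K_i$. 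Hence $\mathrm{Law}(X_t)$ gives no mass to $H$ for every $t$, and $[0,1]\times\mathrm{int}\,H$ is a no-go zone.
\item Otherwise both crossing families have positive mass. I would then aim to show that crossing must be total, i.e.\ $A_{00}=A_{11}=\emptyset$, so that $\phi_1$ swaps the two modes and $(\phi_1)_\#P=P$ forces $P(S_0)=P(S_1)$, contradicting the hypothesis.
\end{itemize}

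\textbf{Step 3 (this is the main obstacle).} The difficulty is ruling out the mixed case in which some characteristics stay in their mode while others cross. In $d=1$ non-crossing of characteristics does this automatically; in $\Rd$, straight segments can pass one another. My first attempt is topological. Consider the isotopy $(t,x)\mapsto\phi_t(x)$ restricted to $\supp P$, and use the invariance $\phi_1(\supp P)=\supp P$ together with the clopen decomposition from Step 2. On any connected component $C$ of $\supp P$, the map $\phi_1$ sends $C$ homeomorphically onto another component, so the components are permuted. I would then try to show that the straight characteristics, being affine in $t$ and non-intersecting, force one common behaviour for all components of a given $\overline{S_i}$. One handle is that the straight-line isotopy restricted to the compact set $K_0\cup K_1$ cannot map a piece of $K_0$ onto $K_1$ while another piece of $K_0$ stays fixed, without some intermediate time slice $\phi_t(\overline{S_0})$ meeting $\phi_t(\overline{S_1})$. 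If this fails, a fallback is a flux argument across $H$: compare the probability flux of $\mathrm{Law}(X_t)$ through $H$ with the flux forced by independence in Step 4.

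\textbf{Step 4 (contradiction via independence).} Once Step 2 yields a no-go zone $G=[0,1]\times\mathrm{int}\,H$, note that for each fixed $t$ we have $\mathbb{P}(X_t\in\mathrm{int}\,H)=0$. Taking a countable dense set of times and using continuity of sample paths, almost surely the path never enters $\mathrm{int}\,H$. On the other hand, independence of the endpoints gives $\mathbb{P}(X_0\in S_0,\,X_1\in S_1)=P(S_0)P(S_1)>0$. On this event the continuous path runs from one side of $H$ to the other, so by the intermediate value theorem applied to the linear functional defining $H$, it must enter $\mathrm{int}\,H$. This is the required contradiction, and it shows $\mathcal{S}_{\textup{SL}}(P,P)=\emptyset$.
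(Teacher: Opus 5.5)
Your Steps~1 and~4, and the no-crossing branch of Step~2, are sound and match the paper; your open slab plays the role of the paper's no-go zone $\R^d\setminus(\closedconv S_0\cup\closedconv S_1)$. The gap is Step~3. It is the heart of the proof, it is left open, and the route you sketch cannot close it.

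Your intended claim is that a straight-line flow cannot carry one piece of $\overline{S_0}$ into $\overline{S_1}$ while another piece of $\overline{S_0}$ stays in $\closedconv S_0$. This is false when $S_0$ is disconnected. Here is a counterexample.
\begin{itemize}
\item Identify $\R^2$ with $\mathbb{C}$. Put three small congruent discs $C_1,C_2,C_3$ at the vertices of an equilateral triangle centred at $0$, with $C_3$ on top, and a fourth disc $C_4$ far below.
\item Let $S_1=C_3$, $S_0=C_1\cup C_2\cup C_4$, and $P$ uniform on the four discs. The closed convex hulls are disjoint and $P(S_0)=3/4\neq 1/4=P(S_1)$.
\item Take $T(x)=e^{i\theta(|x|)}x$, with $\theta\equiv 2\pi/3$ on a disc containing $C_1\cup C_2\cup C_3$, $\theta\equiv 0$ near $C_4$, and $\theta$ smooth and non-increasing in between.
\item $|(1-t)+te^{i\theta}|$ is positive and non-increasing in $\theta\in[0,\pi)$. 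Hence $|x|\,|(1-t)+te^{i\theta(|x|)}|$ is strictly increasing in $|x|$. So every $\phi_t=(1-t)\mathrm{id}+tT$ is a bi-Lipschitz homeomorphism, and $v_t=(T-\mathrm{id})\circ\phi_t^{-1}$ is bounded and uniformly Lipschitz with straight characteristics.
\item $T$ rigidly permutes the three triangle discs cyclically and fixes $C_4$. So $T_\#P=P$, one disc of $S_0$ is carried across the gap onto $S_1$, and $C_4$ stays put.
\end{itemize}
Hence no argument using only straightness, injectivity of $\phi_t$ and $(\phi_1)_\#P=P$ can exclude your mixed case. Your flux fallback is too unspecified to evaluate.

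The missing idea is the one the paper actually uses: each $S_i$ is connected. The paper's proof invokes this explicitly, although Definition~\ref{def:disconnected-support} does not state it. With connectedness the mixed case disappears without any geometry:
\begin{itemize}
\item $\overline{S_0}$ is connected, and your $A_{00},A_{01}$ are complementary clopen subsets of it, so one of them is empty. The same holds for $\overline{S_1}$.
\item Your mass balance $P(A_{01})=P(A_{10})$ then leaves only two options: no crossing, or a full swap.
\item The full swap is what $P(S_0)\neq P(S_1)$ excludes. Strictly, a swap gives $P(\overline{S_0})=P(\overline{S_1})$, so one also wants $P(\partial S_i)=0$; the paper glosses over this point as well.
\item Convexity of $\closedconv S_i$ then gives $\phi_t(S_i)\subseteq\closedconv S_i$ for all $t$, which is your first bullet.
\end{itemize}
So add connectedness of $S_0,S_1$ to the hypotheses and replace Step~3 by this short argument. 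As written, your proposal does not prove the statement. The example above shows that, for disconnected $S_i$, this no-go-zone technique cannot prove it, whether in your version or the paper's.
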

While we intuit that the result in Theorem~\ref{thm:non-existence-mixture-of-uniforms-in-high-d} holds for equal probability $P(S_0) = P(S_1)$, this case is currently not covered by the above argument.
Finally, we remark that decomposing the measure into exactly two disconnected components $S_0, S_1$ satisfying the assumptions of Definition~\ref{def:disconnected-support} is not essential. In fact, one can easily generalize the above arguments to a measure $P$ supported on any $N \in \N$ components $S_1, \ldots, S_N$ for both the $d=1$ and $d > 1$ cases. One imposes the natural assumptions, e.g., each $S_i$ is open and bounded, for each $i \neq j$ we have $\closedconv S_i \cap \closedconv S_j = \emptyset$. In the $d > 1$ case, we have the additional condition that $P(S_i) \neq P(S_j)$ for $i \neq j$.

\subsection{Connected support in \texorpdfstring{$d=1$}{d=1}}
\label{subsec:connected-support}

While our results above cover the setting of exactly disconnected support of our target,
we can indeed get a similar impossibility result for a target measure with $\epsilon$-disconnected support.
\begin{definition}
  \label{def:epsilon-disconnected-support}
  A measure $P \in \mathcal{P}(\Rd)$ has \emph{$\epsilon$-disconnected support} if there exist bounded open sets $S_0, S_1 \subseteq \Rd$ such that $\closedconv S_0 \cap \closedconv S_1 = \emptyset$, $P(S_i) > 0$ for $i = 0,1$, and
  \begin{equation*}
    P(S_0) + P(S_1) \geq 1 - \epsilon \, .
  \end{equation*}
  Furthermore, we say $P$ has a \emph{properly} $\epsilon$-disconnected support if, in addition to the above, the following positivity condition holds: for each $i \in \{0,1\}$ and any Borel set $A \subseteq S_i$ with positive Lebesgue measure, we have $P(A) > 0$.
\end{definition}

\begin{definition}
  The \emph{modulus of continuity} of a function $f : [0,1] \to \R$ is defined as
  \begin{equation*}
    \kappa_f(\delta) = \sup_{\substack{t, s \in [0,1] \\ |t-s| \leq \delta}} |f(t) - f(s)| \, .
  \end{equation*}
  The modulus of continuity of the stochastic process $X_\bullet: [0,1] \to \R$ is
  the random function $\kappa_{X_\bullet} : (0,1] \to [0, \infty)$ with realizations
  \begin{equation*}
    \kappa_{X_\bullet}(\delta; \omega) = \sup_{\substack{t, s \in [0,1] \\ |t-s| \leq \delta}} |X_t(\omega) - X_s(\omega)|,\,\, \omega\in\Omega.
  \end{equation*}
\end{definition}
We write random quantites $\kappa_{X_\bullet}(\delta)$ for $\omega \mapsto \kappa_{X_\bullet}(\delta; \omega)$ and $\kappa_{X_\bullet}$ for ${\omega \mapsto (\delta \mapsto \kappa_{X_\bullet}(\delta; \omega))}$.

\begin{definition}
  \label{def:concentration-class}
  For positive reals $A, \alpha, \beta > 0$, let $\mathcal{C}(A, \alpha, \beta)$ denote the class of stochastic processes $X_\bullet$ with continuous sample paths satisfying
  \begin{equation*}
    \mathbb{P}\left( \kappa_{X_\bullet}(\delta) \geq \theta \right) \leq A \frac{\delta^\alpha}{\theta^\beta}
  \end{equation*}
\end{definition}
for all $\theta, \delta > 0$. We illustrate processes belonging to $\mathcal{C}(A, \alpha, \beta)$ in Example~\ref{ex:concentration-of-interpolants}.

The modulus of continuity $\kappa_{X_\bullet}$ allows us to promote marginal control on $X_\bullet$ to sample path control. For example, concentration inequalities on $\kappa_{X_\bullet}$ (cf.~Definition~\ref{def:concentration-class}) control the number of crossings of $X_\bullet$ through a low probability region, as in Lemma~\ref{lemma:modulus-of-continuity-control}. This strategy is essential below.

The technology we have developed so far allows us to answer a question posed in the end of the last subsection: namely, was the result of Theorem~\ref{thm:non-existence-for-mixture-of-uniforms} an artifact of the disjoint supports of the target measure? The answer is \emph{no}, as one can see from the following result.

\begin{restatable}{proposition}{PropNonExistGaussian}
  \label{prop:non-existence-for-Gaussian-mixture-to-Gaussian-mixture}
  For all $A, \alpha, \beta > 0$ there exist $\mu, \sigma > 0$ such that the mixture model
  \begin{equation*}
    P = \frac{1}{2} \, \mathcal{N}(\mu, \sigma) + \frac{1}{2} \, \mathcal{N}(-\mu, \sigma)
  \end{equation*}
  satisfies
  \begin{equation*}
    \mathcal{S}_{\textup{SL}}(P, P) \cap \mathcal{C}(A, \alpha, \beta) = \emptyset \, .
  \end{equation*}
\end{restatable}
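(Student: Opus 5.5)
The argument will copy the four-step proof of Theorem~\ref{thm:non-existence-for-mixture-of-uniforms}, replacing the exact no-go zone by a \emph{low-go zone}. The modulus-of-continuity bound from $\mathcal{C}(A,\alpha,\beta)$ will turn low marginal mass in that zone into a small probability that a sample path crosses it. Fix $A,\alpha,\beta$ and suppose $X_\bullet \in \mathcal{S}_{\textup{SL}}(P,P)\cap\mathcal{C}(A,\alpha,\beta)$ for the mixture with parameters $\mu,\sigma$, to be chosen at the end with $\mu/\sigma$ large.

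\textbf{Step 1: flow lines do not cross the origin.} Since $v$ is Lipschitz, the flow maps $\phi_t$ are increasing homeomorphisms of $\R$. By Proposition~\ref{prop:reformulaton-1}, $\phi_t(x)=(1-t)x+t\phi_1(x)$. Since $\phi_t$ pushes $P$ forward to $\rho_t$ and $\phi_1$ pushes $P$ to $P$, the map $\phi_1$ is the monotone rearrangement of $P$ onto itself, so $\phi_1=\mathrm{id}$. Hence $\phi_t=\mathrm{id}$ for all $t$, and $\rho_t=P$ for every $t\in[0,1]$. (Even without $\phi_1=\mathrm{id}$, symmetry and monotonicity give $\phi_t(0)=0$ and lines from either side staying on their side.)

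\textbf{Step 2: the marginal mass of the strip is small.} Consider the strip $G_\eta=(-\eta,\eta)$ around $0$. Every marginal of $X_t$ equals $P$, so $\mathbb{P}(X_t\in G_\eta)=P(G_\eta)\le 2\eta\cdot\sup_{|x|<\eta} p(x)$. This is at most $\frac{2\eta}{\sqrt{2\pi}\sigma}e^{-(\mu-\eta)^2/(2\sigma^2)}$, which is tiny once $\mu/\sigma$ is large.

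\textbf{Step 3: many crossing paths spend positive time in the strip.} By independence, $\mathbb{P}(X_0<-\mu/2,\ X_1>\mu/2)\ge c$ for some fixed $c$ close to $1/4$. On this event the continuous path must travel from below $-\eta$ to above $\eta$. Split $[0,1]$ into $n=1/\delta$ intervals. If $\kappa_{X_\bullet}(\delta)<\eta$, the path cannot jump across the strip within one interval, so it lies in $G_\eta$ at some grid time. Here I take $\eta=\mu/4$ and argue instead with a fixed set of grid times $t_k=k\delta$, using the two-sided strip of width $2\eta$. This gives
\[
c \le \mathbb{P}\bigl(\kappa_{X_\bullet}(\delta)\ge \eta\bigr)+\sum_{k=0}^{n}\mathbb{P}(X_{t_k}\in G_\eta)\le A\frac{\delta^\alpha}{\eta^\beta}+\Bigl(\tfrac1\delta+1\Bigr)P(G_\eta).
\]
This is presumably Lemma~\ref{lemma:modulus-of-continuity-control}.

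\textbf{Step 4: choice of parameters.} Set $\eta=\mu/2$. Choose $\delta$ so that $A\delta^\alpha(\mu/2)^{-\beta}\le c/3$, i.e.\ $\delta\asymp\mu^{\beta/\alpha}$. Then choose $\sigma$ small enough, with $\mu$ fixed (say $\mu=1$), that $(1/\delta+1)P(G_{1/2})\le c/3$. This is possible because $P(G_{1/2})$ decays like $e^{-1/(8\sigma^2)}$. The resulting contradiction proves the claim.

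The main obstacle is Step 1, namely showing rigorously that the straight, non-crossing flow with identical endpoint marginals forces a lower bound on how often paths visit the central region; equivalently, that no-crossing pins the marginals. The ordering argument via monotone rearrangement in $d=1$ is what I would try first. If it fails, I would fall back on the weaker statement $\rho_t(G_\eta)\le P(\{x:\phi_t(x)\in G_\eta\})$, controlled using convexity of $t\mapsto\phi_t(x)$ between the endpoints.
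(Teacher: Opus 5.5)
Your proposal is correct. Its skeleton matches the paper's sketch: a strip around the origin with small marginal mass, a grid-plus-modulus union bound (the content of Lemma~\ref{lemma:modulus-of-continuity-control}, with $\delta$ fixed by hand instead of optimized), and an independence lower bound on the crossing probability.

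The genuine difference is how you bound the strip mass. The paper appeals to a modification of the topological Step~2 of Theorem~\ref{thm:non-existence-for-mixture-of-uniforms}: injectivity, continuity and $\phi(1,\cdot)_\sharp P = P$ keep flow lines from each mode on their own side. That only gives $\mathbb{P}(X_t\in(-a,a))\le\epsilon$ for some $a$.

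You instead identify the flow exactly:
\begin{itemize}
\item $\phi_1$ is increasing, since for $x<y$ the map $t\mapsto\phi_t(y)-\phi_t(x)$ is continuous, never zero, and positive at $t=0$.
\item $\phi_1$ pushes $P$ to itself, and the mixture's CDF $F$ is strictly increasing because its density is everywhere positive.
\item Hence $F\circ\phi_1=F$ forces $\phi_1=\mathrm{id}$, so $\phi_t=\mathrm{id}$, $v\equiv 0$ and $\rho_t=P$ for every $t$.
\end{itemize}
So the step you flag as the main obstacle is already complete as written, and your fallback is unnecessary.

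Your route gives an explicit strip bound with no case analysis. In $d=1$ the same identification also works for $P_0\neq P_1$ with $P_0$ atomless: the marginals are forced to be the displacement interpolants $\big((1-t)\,\mathrm{id}+t\,F_1^{-1}\circ F_0\big)_\sharp P_0$. The paper's topological route does not rely on monotone rearrangement, which is why it carries over to the $d>1$ setting of Theorem~\ref{thm:non-existence-mixture-of-uniforms-in-high-d}.

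Two bookkeeping fixes are needed:
\begin{itemize}
\item Take $c=1/16$. This holds for all $\mu,\sigma$, because the nearer mixture component alone gives $\mathbb{P}(X_0<-\mu/2)\ge\frac14$, and likewise for $X_1>\mu/2$. With a uniform $c$, fixing $\delta$ before $\sigma$ is not circular.
\item Use a single $\eta\le\mu/2$ throughout. You switch between $\mu/4$ and $\mu/2$; either works.
\end{itemize}
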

This result is a quantitative sharpening of Theorem~\ref{thm:non-existence-for-mixture-of-uniforms}. One proceeds in the same way to obtain space-time region $G$ separating $S_0$ from $S_1$ such that the probability that $X_t$ enters $G$ is $\epsilon$, with $\epsilon$ vanishing as $\sigma \to 0$. We call $G$ a \emph{low-go zone}.
Then, we use the concentration properties of a processes in $\mathcal{C}(A, \alpha, \beta)$ to show that the probability that $X_\bullet$ crosses the region $G$ vanishes with $\epsilon$ which in turn vanishes with $\sigma$. But, similar to the proof of Theorem~\ref{thm:non-existence-for-mixture-of-uniforms}, the path $t \mapsto X_t$ must cross $G$ with positive probability independent of $\sigma$, yielding a contradiction for $\sigma$ small enough. \ymm{(Note that this result also follows as a corollary of Theorem~\ref{thm:non-existence-for-Gaussian-mixture} below.)}

We now generalize Theorem~\ref{thm:non-existence-for-mixture-of-uniforms} in two ways.
First, the target measure is allowed to be supported everywhere, as in Proposition~\ref{prop:non-existence-for-Gaussian-mixture-to-Gaussian-mixture}. Second, the source measure now includes all positive absolutely continuous measures satisfying some mild regularity assumptions. In particular, the result holds for a standard normal source measure,  a common choice in generative flows.
Before stating and proving the result, we make the needed regularity of the source measure precise.
\begin{definition}
  A measure $P \in \mathcal{P}(\Rd)$ is \emph{positive absolutely continuous} if it is absolutely continuous with respect to the Lebesgue measure and if, for any Borel set $A \subseteq \R$ with positive Lebesgue measure, we have $P(A) > 0$.
\end{definition}
\begin{definition}
  A measure $P \in \mathcal{P}(\Rd)$ is said to be \emph{$(C, \gamma)$-Frostman} for some $C, \gamma > 0$ if we have
  \begin{equation*}
    P\big( B_r(x) \big) \leq C \, r^\gamma
  \end{equation*}
  for all $x \in \Rd$ and $r > 0$, where $B_r(x) = \{ y \in \Rd : \|y - x\| < r \}$ is an open ball.
\end{definition}
Measures with $L^p$ densities for $p \in (1, \infty]$ are $(C, \gamma)$-Frostman for some $C, \gamma > 0$; see Example~\ref{ex:Lp-are-Frostman}.

\begin{restatable}{theorem}{ThmNonExistGaussianMixture}
  \label{thm:non-existence-for-Gaussian-mixture}
  Let $P_0 \in \mathcal{P}(\R)$ be any positive absolutely continuous measure that is $(C, \gamma)$-Frostman for some $C, \gamma > 0$.
  Moreover, let $P_1$ be any absolutely continuous measure with properly $\epsilon$-disconnected support for some $\epsilon > 0$ and positive weights $w_i \coloneq P_1(S_i) > 0$ for $i = 0,1$.
  Then, for any real numbers $A, \alpha, \beta > 0$ there exists an $\epsilon_0 = \epsilon_0\big(C, \gamma, S_0, S_1, w_0, w_1, A, \alpha, \beta\big)$ depending only on the numbers $C, \gamma$, the support geometries $S_0, S_1$, the real numbers $w_0, w_1$, and the numbers $A, \alpha, \beta$ such that, if $\epsilon \leq \epsilon_0$, then
  \begin{equation*}
    \mathcal{S}_{\textup{SL}}(P_0, P_1) \cap \mathcal{C}(A, \alpha, \beta) = \emptyset \, .
  \end{equation*}
\end{restatable}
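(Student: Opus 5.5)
We argue by contradiction: suppose $X_\bullet \in \mathcal{S}_{\textup{SL}}(P_0,P_1)\cap\mathcal{C}(A,\alpha,\beta)$ and construct a low-go zone, as in the right panel of Figure~\ref{fig:non-existence-for-mixture}. By Proposition~\ref{prop:reformulaton-1}, the flow is $\phi_t(x)=(1-t)x+t\,\phi_1(x)$. Since $v$ is Lipschitz, each $\phi_t$ is a homeomorphism of $\R$, hence strictly monotone; as $\phi_0=\mathrm{id}$, every $\phi_t$ is increasing. Because the marginals satisfy $(\phi_t)_\# P_0 = \operatorname{Law}(X_t)$, the map $\phi_1$ is the increasing (monotone) rearrangement of $P_0$ onto $P_1$.

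Write $S_0$ to the left of $S_1$, with a gap $(a,b)$ between $\closedconv S_0$ and $\closedconv S_1$, where $b-a>0$. Let $q_0$ be the $P_0$-quantile at level $w_0$. Then $\phi_1$ maps $(-\infty,q_0)$ essentially into $\{x\le a\}\cup(\text{mass }\le\epsilon)$, and similarly on the right. Monotonicity gives an interval $I=[x_-,x_+]$ with $P_0(I)\le\epsilon$ such that $\phi_1(x_-)\le a$ and $\phi_1(x_+)\ge b$. The Frostman bound $P_0(B_r)\le Cr^\gamma$ does not bound the length of $I$ from below; instead, positivity of $P_0$ together with monotone straight lines gives that the space-time region $G=\{(t,y):\phi_t(x_-)<y<\phi_t(x_+)\}$ is a trapezoid. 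Its width is $(1-t)(x_+-x_-)+t(\phi_1(x_+)-\phi_1(x_-))\ge t(b-a)$. Since $\phi_t$ is order-preserving, $\mathbb{P}(X_t\in G_t)=P_0(I)\le\epsilon$ for every $t$. This is the low-go zone.

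Next, force crossings. By independence, $\mathbb{P}(X_0\le x_-,\,X_1\in S_1)\ge P_0((-\infty,x_-])\,w_1$, and this is bounded below by roughly $(w_0-\epsilon)w_1$. The Frostman property is what makes $P_0$-quantiles controlled uniformly in a way that depends only on $C,\gamma,w_0$, so the constants depend only on the listed parameters. On this event, the continuous path starts below $\phi_0(x_-)$ and ends above $\phi_1(x_+)$, so it must traverse $G$. We restrict attention to $t\in[1/2,1]$, where the width of $G_t$ is at least $(b-a)/2$. Crossing a band of width $h=(b-a)/2$ means either the path spends a time interval of length at least $\delta$ inside $G$, or it moves distance $h$ within time $\delta$. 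The second event has probability at most $A\delta^\alpha h^{-\beta}$ by the class $\mathcal{C}(A,\alpha,\beta)$, via Lemma~\ref{lemma:modulus-of-continuity-control}.

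For the first event, Fubini gives $\E\int_0^1\mathbf{1}\{X_t\in G_t\}\,dt\le\epsilon$, so by Markov the probability of occupying $G$ for time at least $\delta$ is at most $\epsilon/\delta$. There is a subtlety here: the crossing need not happen after time $1/2$. We handle it by using the left boundary $\phi_t(x_-)$ and the whole trapezoid, whose width is bounded below by $\min(x_+-x_-,\,b-a)$ times a factor. If $x_+-x_-$ is tiny, we instead require the crossing to occur in $[1/2,1]$. For paths starting at $X_0\le x_-$ and ending above $\phi_1(x_+)$, there are two cases. Either the path is already above $\phi_{1/2}(x_+)$ at time $1/2$, in which case it crossed $G$ earlier. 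Or it must cross $G$ within $[1/2,1]$. To avoid the first case, shrink $G$ to times $t\ge 1/2$ and start the event at time $1/2$, conditioning on $X_{1/2}<\phi_{1/2}(x_-)$. That event has probability $P_0((-\infty,x_-))\approx w_0$ by monotonicity of the flow, and independence is then needed between $X_{1/2}$ and $X_1$. This coupling is the main obstacle. We will try to lower bound $\mathbb{P}(X_{1/2}<\phi_{1/2}(x_-),\,X_1\in S_1)$ by $\mathbb{P}(X_0\le x_-,X_1\in S_1)$ minus the probability that paths exit through the zone or have large oscillation, again using the modulus-of-continuity bound.

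Combining the estimates gives
\[
c(w_0,w_1)\le \frac{\epsilon}{\delta}+A\frac{\delta^\alpha}{h^\beta}.
\]
Choosing $\delta$ small so that the second term is at most $c/3$, and then $\epsilon_0$ so that the first term is at most $c/3$, yields the contradiction, with $\epsilon_0$ depending only on the stated quantities.
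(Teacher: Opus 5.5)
Your setup is correct. Choosing $x_\pm=\phi_1^{-1}(a),\phi_1^{-1}(b)$, together with the occupation-time bound $\mathbb{P}\bigl(\int_0^1\mathbf{1}\{X_t\in G_t\}\,dt\ge\eta\bigr)\le\epsilon/\eta$, is a clean substitute for the paper's quantile matching and its mesh-based up-crossing lemma.

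However, the argument has a genuine gap exactly where you write ``this coupling is the main obstacle.'' A path with $X_0\le x_-$ and $X_1\in S_1$ may cross $G$ during $[0,1/2]$. Near $t=0$ the trapezoid has width about $x_+-x_-$, and this is not bounded below in terms of the allowed parameters: you only know $P_0((x_-,x_+))\le\epsilon$, and that mass can be arbitrarily small. Modulus-of-continuity control is useless at this thin end, because a path started just below $x_-$ can cross it with an arbitrarily small oscillation. Your final inequality $c(w_0,w_1)\le\epsilon/\delta+A\delta^\alpha h^{-\beta}$ silently drops this contribution. The detour through a lower bound on $\mathbb{P}(X_{1/2}<\phi_{1/2}(x_-),\,X_1\in S_1)$ is left unproved. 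Relatedly, you give the Frostman hypothesis the wrong job. It is not about controlling quantiles, and as written your plan never actually uses it, even though it is exactly what handles the thin end uniformly in $P_0$.

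The missing idea, which is what the paper does, is to control \emph{where $X_0$ starts} relative to the thin end of $G$. Split at a small time $\delta$.
\begin{itemize}
\item Any crossing of $G$ happens within $[0,\delta]$, or within $[\delta,1]$, or the path lies in $G_\delta$ at time $\delta$. The last event has probability at most $\epsilon$.
\item On $[\delta,1]$ the width of $G_t$ is at least $\delta(b-a)$. Your occupation/modulus argument with a time scale $\eta\ll\delta$ then gives a bound of order $\epsilon/\eta+A\eta^\alpha(\delta(b-a))^{-\beta}$.
\item For crossings within $[0,\delta]$, put $m=(x_-+x_+)/2$. The Frostman bound gives $\mathbb{P}(|X_0-m|<\xi)\le C\xi^\gamma$. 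On $\{|X_0-m|\ge\xi\}$, a crossing within time $\delta$ forces $\kappa_{X_\bullet}(\delta)\ge\xi/2$ once $\delta$ is small relative to $\xi$. That event has probability at most $2^\beta A\delta^\alpha\xi^{-\beta}$.
\end{itemize}
The slopes $a-x_-$ and $b-x_+$ of the boundary lines enter these ``small relative to'' conditions. They are themselves bounded by $(A/(w_0w_1))^{1/\beta}$, because $\kappa_{X_\bullet}(1)\ge|X_1-X_0|$.

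Take $\xi,\delta,\eta$ to be suitable powers of $\epsilon$. Then the total crossing probability vanishes as $\epsilon\to0$, uniformly in the admissible data. This contradicts $\mathbb{P}(X_0\le x_-,\,X_1\in S_1)\ge w_0w_1$. In particular, no independence between $X_{1/2}$ and $X_1$ is ever needed; endpoint independence enters only through this lower bound.
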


The proof of Theorem~\ref{thm:non-existence-for-Gaussian-mixture}, illustrated in Figure~\ref{fig:non-existence-for-mixture}, extends the key ideas of Theorem~\ref{thm:non-existence-for-mixture-of-uniforms} and Proposition~\ref{prop:non-existence-for-Gaussian-mixture-to-Gaussian-mixture}.
It has five main steps. The first two steps mirror the beginning of the proof of Theorem~\ref{thm:non-existence-for-mixture-of-uniforms} and establish that the flow consists of non-intersecting straight lines.
We then show that the flow lines started in each mode $S_i$ stay in $S_i$ for each $i\in\{0,1\}$.
Since the support is no longer disconnected, we use a stronger topological argument that only holds in dimension $d=1$.
In the third step, we obtain another low-go zone $G$, now of trapezoidal shape, such that $X_t \in G$ for each time $t \in [0,1]$ with some small probability $\epsilon > 0$.
In the fourth step, we deduce that the probability the process $X_\bullet$ crosses through $G$ is bounded by an expression that is vanishing in $\epsilon$. The trapezoidal geometry plays a key role here; see Remark~\ref{rk:trapezoidal-space-time-geometry} for further discussion.
In the final step, we follow the argument of Theorem~\ref{thm:non-existence-for-mixture-of-uniforms} to deduce that $X_\bullet$ crosses $G$ with probability at least $w_0 \, w_1 > 0,$ where $w_0$, $w_1$ are the weights of the modes. For $\epsilon$ small enough, we arrive at a contradiction.

\begin{restatable}{corollary}{CorNormalSourceNonExist}
  Fix $P_0 = \mathcal{N}(0,1)$ and weights $w_0, w_1 > 0$ with $w_0 + w_1 = 1$ and positive reals $A, \alpha, \beta > 0$ defining the class $\mathcal{C}(A, \alpha, \beta)$.
  There exist $ \mu_i \in \R$ and $ \sigma_i > 0$ for $i \in \{0,1\}$, such that
  \begin{equation*}
    P_1 = w_0 \, \mathcal{N}( \mu_0, \sigma_0) + w_1 \, \mathcal{N}( \mu_1, \sigma_1)
  \end{equation*}
  satisfies
  \begin{equation*}
    \mathcal{S}_{\textup{SL}}(P_0, P_1) \cap \mathcal{C}(A, \alpha, \beta) = \emptyset \, .
  \end{equation*}
\end{restatable}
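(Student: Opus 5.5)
The plan is to derive this corollary directly from Theorem~\ref{thm:non-existence-for-Gaussian-mixture}, by checking its hypotheses for $P_0 = \mathcal{N}(0,1)$ and for a suitably concentrated two-component Gaussian mixture $P_1$.

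\emph{Hypotheses on the source.} The standard normal has a strictly positive density, so it is positive absolutely continuous. Its density is bounded by $(2\pi)^{-1/2}$, which gives
\[
P_0(B_r(x)) \le 2r/\sqrt{2\pi}.
\]
Hence $P_0$ is $(C,\gamma)$-Frostman with $C = 2/\sqrt{2\pi}$ and $\gamma = 1$; this is also the $p=\infty$ case of Example~\ref{ex:Lp-are-Frostman}.

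\emph{Fixing the geometry before tuning $\epsilon$.} The key point is the order of choices. Fix $\mu_0 = -1$, $\mu_1 = 1$, and the open intervals $S_0 = (-3/2,-1/2)$ and $S_1 = (1/2,3/2)$. These are bounded and open, and their closed convex hulls $[-3/2,-1/2]$ and $[1/2,3/2]$ are disjoint. With $C,\gamma$, the sets $S_0,S_1$, the weights $w_0,w_1$ and $A,\alpha,\beta$ all fixed, Theorem~\ref{thm:non-existence-for-Gaussian-mixture} supplies a threshold $\epsilon_0>0$. It does not depend on $\sigma_i$, since the theorem's $\epsilon_0$ depends only on the listed quantities.

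\emph{Choosing the variances.} Set $\sigma_0=\sigma_1=\sigma$ and let $P_1 = w_0\mathcal{N}(-1,\sigma)+w_1\mathcal{N}(1,\sigma)$. This $P_1$ is absolutely continuous with positive density, so any Borel $A\subseteq S_i$ of positive Lebesgue measure has $P_1(A)>0$, and $P_1(S_i)>0$. There is one subtlety. The theorem uses $w_i = P_1(S_i)$, which only approximates the mixture weights. I would handle this by having $\epsilon_0$ depend on the mode masses only through a lower bound such as $w_0w_1$, used in the final crossing step. Alternatively, I would note that $P_1(S_i)\to w_i$ as $\sigma\to0$ and apply the theorem with the slightly perturbed weights, using that $\epsilon_0$ can be taken monotone or continuous in these weights.

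Finally, by Gaussian tail bounds, $P_1(S_0)+P_1(S_1) \ge 1 - 2\,\mathbb{P}(|\mathcal{N}(0,1)|>1/(2\sigma)) \to 1$ as $\sigma\to0$. So $P_1$ has properly $\epsilon$-disconnected support with $\epsilon\to 0$, and choosing $\sigma$ small enough that $\epsilon\le\epsilon_0$ gives $\mathcal{S}_{\textup{SL}}(P_0,P_1)\cap\mathcal{C}(A,\alpha,\beta)=\emptyset$.

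\emph{Main obstacle.} The main difficulty is exactly this weight issue: $w_i$ in the theorem is defined as $P_1(S_i)$, so it moves with $\sigma$. Resolving it requires either the uniformity of $\epsilon_0$ in the weights, which follows from the proof's use of $w_0w_1$ as the crossing-probability lower bound, or a fixed-point style choice of $\sigma$. Everything else is routine checking of hypotheses.
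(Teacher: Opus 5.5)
Your proposal is correct and follows the paper's proof: you verify that $\mathcal{N}(0,1)$ is positive absolutely continuous and Frostman via its bounded density, fix well-separated intervals $S_0,S_1$ around the means, and shrink the variances until $P_1$ is properly $\epsilon$-disconnected with $\epsilon \le \epsilon_0$ from Theorem~\ref{thm:non-existence-for-Gaussian-mixture}. The one difference is that you explicitly handle the dependence of the theorem's $w_i = P_1(S_i)$ on $\sigma$, which the paper silently ignores by treating $\epsilon_0$ as fixed; your fix is sound, since the theorem's proof ends with $w_0 w_1 \le D\,g(\epsilon)$ with $D$ independent of the weights, and $P_1(S_0)P_1(S_1) \to w_0 w_1$ as $\sigma \to 0$.
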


This section has elucidated a fundamental obstruction to constructing straight-line processes with independent endpoints connecting a regular source measure $P_0$ with a multi-modal target measure $P_1$. 
The obstruction rests on the intrinsic tension between the rigid space-time geometry of straight-line flows and the independent endpoint assumption.
Empirically, such phenomena have been widely observed but, to our knowledge, never analyzed.

\section{Conclusion and discussion}

This work builds a structural theory of one-shot generative flows. Specifically, we study the existence of stochastic processes with independent endpoints that yield straight-line ODE flow maps. From a practical perspective, this is tantamount to asking whether one can enjoy inference-time optimal generative models with minimal upfront computation.

We first develop PDEs that link the Lagrangian notion of straightness to Eulerian characteristics of the velocity field and its moments; these can be understood as balance laws that relate the covariance tensor of the velocity to the ensemble acceleration field. One implication of this characterization is that affine processes yielding straight-line flows must have deterministically coupled endpoints. We then use this characterization constructively, introducing new Gaussian flows that explicitly demonstrate the \textit{feasibility} of performing one-shot generation while keeping the endpoints of the underlying process independent. On the other hand, we also prove \textit{impossibility} theorems that reveal core limitations of the modern generative flow paradigm: in canonical multi-modal settings, straightness is fundamentally incompatible with endpoint independence. Our obstruction results rely on a core geometric insight regarding the ``rigid'' space-time geometry of straight-line flows---a geometry that processes with independent endpoints cannot realize. We believe the analytical techniques used in these results (Section~\ref{sec:impossibility}) may also be of interest beyond this paper.

\paragraph{Open questions.}
Several natural directions remain open:
\begin{enumerate}
    \item \emph{Extend the impossibility theory for  measures with connected support to dimensions $d > 1$.} 
The more refined analytical techniques developed in Theorem~\ref{thm:non-existence-for-Gaussian-mixture} could allow for extensions to the case of $\epsilon$-separated supports in $\mathbb{R}^d$ for $d >1$.
The core difficulty is 
in obtaining a low-go zone (as in Theorem~\ref{thm:non-existence-for-Gaussian-mixture}) that separates high-dimensional measures.

    \item \emph{Construct straight-line processes beyond the Gaussian setting.} Gaussian endpoint measures allow the closed-form expression of the Eulerian quantities in Section~\ref{sec:PDE_characterization}. Given the impossibility results for multi-modal distributions (Section~\ref{sec:impossibility}), we wish to examine the construction of straight-line flows for generic unimodal endpoint marginals, e.g., log-concave measures.

  \item \emph{Understand intermediate regimes between independence and determinism.} The present work studies processes with independent endpoints, while other constructions discussed in the introduction have sought to approximate a deterministic endpoint coupling. We wish to study the intermediate regime of partial endpoint coupling, thereby illuminating a more continuous tradeoff between pre-computation and inference-time efficiency.

\end{enumerate}

\begin{ack}
PT and YMM acknowledge support from the US Air Force Office of Scientific Research (AFOSR) MURI program, under award number FA9550-20-1-0397, and from the ExxonMobil Technology and Engineering Company. PT also acknowledges support from the Onassis Foundation PhD Fellowship. DS acknowledges support from the 2025--2026 MathWorks Fellowship. DS and YMM acknowledge support from the US Department of Energy, Office of Science, Office of Advanced Scientific Computing Research, via the M2dt MMICC center under award number DE-SC0023187 and through the Scientific Discovery through Advanced Computing (SciDAC) FASTMath Institute, under contract number DE-AC52-07NA27344. 

PT also thanks Professor Andre Wibisono for helpful discussions that inspired this line of work.
\end{ack}

\newpage
\medskip
{
    \small
    \bibliographystyle{abbrv}

}

\appendix
\section{Setup and Notation}\label{appendix:notation}
\subsection{Probability Theory}
\label{subsec:notation}
Let $(\Omega, \mathcal{F}, \mathbb{P})$ be an abstract probability space.
Random variables are measurable functions $X: \Omega \to \Rd$,
\footnote{In this paper, subsets of $\Rd$ are always equipped with the Borel $\sigma$-algebra.}
and the expectation $\mathbb{E}[X]$ is defined as the integral $\int_\Omega X \, d\mathbb{P}$.
For random variables $X, Y: \Omega \to \Rd$ the conditional expectation $\mathbb{E}\left[ X \mid Y \right]$ is defined as the conditional expectation
$\mathbb{E} \left[ X \mid \sigma(Y) \right]$ where $\sigma(Y)$ is the $\sigma$-algebra generated by $Y$ and for $y \in \R$ the conditional expectation
$\mathbb{E} \left[ X \mid Y=y \right]$ is the integral $\int_\Omega X \, d\mathbb{P}_y$ where $\mathbb{P}_y$ is the \emph{disintegration}~\cite[Section 10.6]{bogachev2007measure} of $\mathbb{P}$ on the level sets of $Y$.
Finally, given another measurable space $(\Omega', \mathcal{F}')$ we denote the set of measures on $\Omega'$ by $\mathcal{P}(\Omega')$ and the subset of absolutely continuous measures by $\mathcal{P}_{\textup{a.c.}}(\Omega')$.

\subsection{Analysis}
\label{subsec:notation-analysis}

Let $C^k([0,1]; \Rd)$ be the space of component-wise $k$-times continuously differentiable functions from $[0,1]$ to $\Rd$ and $W^{k,p}([0,1]; \Rd)$ be the space of component-wise $k$-times weakly differentiable functions $[0,1] \to \Rd$ with weak (component-wise) derivatives in $L^p([0,1])$.
An $\Rd$-valued stochastic process is a collection $\{X_t\}_{t \in I}$ indexed by some set $I$, where $X: \Omega \to \Rd$ is measurable.
In this paper, we take $I = [0,1]$ and write $X \coloneq (X_t)_t \coloneq (X_t)_{t \in [0,1]}$.
A sample path of a stochastic process is the function $t \mapsto X_t(\omega)$ for a fixed realization $\omega \in \Omega$.
We note that a stochastic process with sample paths in the Sobolev space $W^{k,p} \coloneq W^{k,p}([0,1]; \Rd)$ can equivalently be viewed as a random variable $X: \Omega \to W^{k,p}$ and its law is thus in $\mathcal{P}(W^{k,p})$.

A family of measures $\{P_t\}_{t \in [0,1]}$ is \emph{narrowly continuous}, if for every continuous and bounded function $\varphi \in C_b(\Rd)$ the map $t \mapsto \int_{\Rd} \varphi(x) \, dP_t(x)$ is continuous.

For real numbers $a, b \in \R$ we denote their minimum and maximum by
\begin{equation*}
    a \wedge b \coloneq \min(a,b) \, , \quad a \vee b \coloneq \max(a,b) \, .
\end{equation*}

For a set $S \subseteq \Rd$ its topological closure is denoted by $\overline{S}$ and defined as the smallest closed set containing $S$:
\begin{equation*}
    \overline{S} \coloneq \bigcap_{\substack{S \subseteq C \subseteq \Rd \\ C \textup{ closed}}} C \, .
\end{equation*}
The interior of $S$ is denoted by $S^\circ$ and defined as the largest open set contained in $S$:
\begin{equation*}
    S^\circ \coloneq \bigcup_{\substack{C \subseteq S \subseteq \Rd \\ C \textup{ open}}} C \, ,
\end{equation*}
and the boundary of $S$ is defined as $\partial S \coloneq \overline{S} \setminus S^\circ$.
The convex hull of $S$ is denoted by $\textup{conv} \, S$ and defined as the smallest convex set containing $S$:
\begin{equation*}
    \textup{conv} \, S \coloneq \bigcap_{\substack{S \subseteq C \subseteq \Rd \\ C \textup{ convex}}} C \, .
\end{equation*}
The closed convex hull of $S$ is denoted by $\overline{\textup{conv}} \, S$ and defined as the smallest closed convex set containing $S$:
\begin{equation*}
    \overline{\textup{conv}} \, S \coloneq \bigcap_{\substack{S \subseteq C \subseteq \Rd \\ C \textup{ closed and convex}}} C \, .
\end{equation*}
For a finite subset $S = \{x_1, \ldots, x_n\}$ of $\Rd$ we write $|S|$ for the cardinality of $S$. For an infinite subset $S \subseteq \Rd$ we write $|S|$ for the Lebesgue measure of $S$. The distinction will be clear from context.

For any two sets $A, B \subseteq \Rd$ we define their distance as
\begin{equation*}
    \| A - B \| \coloneq \inf_{a \in A, b \in B} \| a - b \| \, ,
\end{equation*}
where $\| \cdot \|$ is the Euclidean norm on $\Rd$. For $x \in \Rd$, if $A = \{x\}$ is a singleton we mildly abuse notation and write $\| x - B\| \coloneq \inf_{b \in B} \| x - b \|$.

\subsection{Dynamics \& Linear Algebra}
\label{subsec:notation-dynamics_linalg}
Fix a stochastic process $X_\bullet \coloneq (X_t)_{t\in [0,1]}$ with sample paths in the Sobolev space $W^{2,1}([0,1]; \mathbb{R}^d)$.
We define the \emph{conditional velocity} and \emph{conditional acceleration} fields, also referred to as the \emph{ensemble velocity} and \emph{ensemble acceleration}, by
\[
  v(t,x) \coloneq \mathbb{E} \left[ \, \dot X_t \mid X_t = x \right] \quad \text{and} \quad a(t,x) \coloneq \mathbb{E} \left[ \, \ddot X_t \mid X_t = x \right] \, .
\]
We define the \emph{second moment velocity tensor} and the \emph{covariance} or \emph{Reynolds stress tensor} as
\[
  \mathcal{C}(t,x) \coloneq \mathbb{E} \left[ \, \dot X_t \otimes \dot X_t \mid X_t = x \right] \quad \text{and} \quad \Pi(t,x) \coloneq \mathcal{C}(t,x) - v(t,x) \otimes v(t,x) \, ,
\]
respectively.
We let $\mu_t = \textup{Law}(X_t)$ be the marginal law of $X$ at time $t$ and write $\rho_t$ for the density of $\mu_t$ with respect to the Lebesgue measure, if it exists.

We define the \emph{material derivative} of velocity $v$ at $x \in \Rd$ as
\[
    D_t \, v(t,x) \coloneq \partial_t \, v(t,x) + \left(v(t,x) \cdot \nabla\right) \, v(t,x).
\]
Because we often consider the space $\Rd$ over the time domain $[0,1]$, we often denote the time-dependent velocity vector field as $v_t$, defined as the mapping $x \mapsto v(t,x)$, with similar definitions for $a_t,\,\mathcal{C}_t,\,\Pi_t$.

Finally, for matrices $A, B \in \R^{d_1 \times d_2}$ we denote the \emph{Frobenius inner product} by
\[
  A : B \coloneq \textup{Tr}(A^\top B) = \sum_{i=1}^{d_1} \sum_{j=1}^{d_2} A_{ij} B_{ij},
\]
and for a matrix field $V: \Rd \to \R^{d \times d}$ consisting of differentiable entries we define its \emph{divergence} as the vector field $\nabla \cdot V: \Rd\to \Rd$ with components
$(\nabla \cdot V)_j (x) = \sum_{i=1}^d \partial_i V_{ij}(x)$.
Lastly, we define a \emph{positive semi-definite} matrix $A \in \R^{d \times d}_{+}$ as a symmetric matrix such that $x^\top A x \geq 0$ for all $x \in \Rd$. We call such a matrix \emph{positive definite} and denote $A \in \R^{d \times d}_{++}$ if in fact we have the strict inequality $x^\top A x > 0$ for all nonzero $x \in \Rd$.
 \subsection{Visualization setup for Figure~\ref{fig:straight-lines-gauss}}
For the one-dimensional case, we connect endpoints $P_0 = \mathcal{N}(-2, 0.6^2)$ and $P_1 = \mathcal{N}(3, 1.5^2)$. For the three-dimensional case, set $G(\theta)$ as the Given's rotation matrix
\[
  G(\theta) = \begin{bmatrix}
    \cos(\theta) & -\sin(\theta)\\
    \sin(\theta) & \cos(\theta)
  \end{bmatrix}.
\]
Then, set the endpoint measures $P_0 = \mathcal{N}(m_0,\Sigma_0)$ and $P_1 = \mathcal{N}(m_1,\Sigma_1)$ such that
\begin{gather*}
  m_0 = \begin{bmatrix} -7\\-5\\2.5\end{bmatrix},\quad V_0 = \begin{bmatrix}G(\theta^*) & 0\\0 & 1\end{bmatrix},\quad \Lambda_0 = \mathrm{diag}(0.6, 3, 0.8)^2,\quad \Sigma_0 = V_0\Lambda_0 V_0^\top,\\
  m_1 = -m_0,\quad V_1 = \begin{bmatrix}1 & 0\\0 & G(\theta^*)\end{bmatrix},\quad \Lambda_1 = \mathrm{diag}(0.8, 0.6, 3)^2,\quad \Sigma_1 = V_1\Lambda_1 V_1^\top,
\end{gather*}
where $\theta^* = \frac{2\pi}{3}$. We remark that this three-dimensional construction gives $\Sigma_0,\,\Sigma_1$ as non-commutative matrices, suggesting that some amount of rotation is handled by the flow-map, as opposed to simply scaling in each principle direction.

\section{Proofs for Section~\ref{sec:PDE_characterization}}\label{subsec:proofs-PDE_characterization}

\RegAssumption*

\PropReformulation*
\begin{proof}
    It is clear that $(2) \implies (1)$.
    Let us show that $(1) \implies (2)$.
    We have
    \begin{equation*}
        \frac{d^2}{dt^2} \phi_t(x) = 0 \quad \implies \quad \exists c \in \Rd \, : \, \partial_t \phi_t(x) = c \, ,
    \end{equation*}
    so using the boundary condition $\phi_0(x) = x$ we get
    \begin{equation*}
        \phi_t(x) = x + t \, c \, .
    \end{equation*}
    This further implies
    \begin{equation*}
        c = \phi_1(x) - x  \, ,
    \end{equation*}
    and doing some algebra we get
    \begin{equation*}
        \phi_t(x) = (1-t) \, x + t \, \phi_1(x) \, .
    \end{equation*}

    Finally, let us show that $(1) \iff (3)$. By definition we have
    \begin{equation*}
        \partial_t \phi_t(x) = v_t(\phi_t(x)) \, ,
    \end{equation*}
    and, therefore, by the chain rule
    \begin{align*}
        \frac{d^2}{dt^2} \phi_t(x) &= \partial_t v_t(\phi_t(x)) \\
        &= \partial_t v_t(\phi_t(x)) + \left( v_t(x) \cdot \nabla \right) v_t(\phi_t(x)) \\
        &= D_{t} v_t(\phi_t(x)) \, .
    \end{align*}
    Since $\phi_t: x \mapsto \phi_t(x)$ is a homeomorphism, it is in particular bijective, so we have
    \begin{equation*}
        \frac{d^2}{dt^2} \phi_t(x) = 0 \quad \iff \quad D_{t} v_t(x) = 0 \, ,
    \end{equation*}
    for all $x \in \Rd$.
\end{proof}

\begin{restatable}{lemma}{LemmaMomentumBalance}
  \label{lemma:momentum-balance}
  Consider the velocity $v_t$, second moment $\SecMomTens$, and acceleration $a_t$ of a process $X_t$ with marginal density $\rho_t$. Then, for all $(t,x) \in [0,1] \times \Rd$, we have
  \begin{equation} \label{eq:momentum}
    \partial_{t} \,(\rho_t \, v_t) + \nabla \cdot (\rho_t \, \SecMomTens) = \rho_t \, a_t \, .
  \end{equation}
\end{restatable}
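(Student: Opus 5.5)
We prove \eqref{eq:momentum} in its weak (distributional) form and then read off the pointwise identity from the regularity in Assumption~\ref{assumptoion:regularity}. Fix a test function $\varphi \in C_c^\infty(\Rd;\Rd)$ and consider the scalar function of time
\begin{equation*}
  g(t) \coloneq \E\bigl[ \varphi(X_t) \cdot \dot X_t \bigr] = \int_{\Rd} \rho_t(x) \, \varphi(x) \cdot v_t(x) \, dx \, .
\end{equation*}
The second equality follows from the tower property, conditioning on $X_t$. We will compute $g'(t)$ in two ways.

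\emph{Eulerian side.} Differentiating the integral representation formally gives $\int \varphi \cdot \partial_t(\rho_t v_t)\,dx$. In the distributional sense, this is simply the time derivative of the pairing $\langle \rho_t v_t, \varphi\rangle$.

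\emph{Lagrangian side.} Since sample paths lie in $W^{2,1}$, we can differentiate inside the expectation. The chain rule gives
\begin{equation*}
  \frac{d}{dt}\bigl[\varphi(X_t)\cdot \dot X_t\bigr] = \nabla\varphi(X_t) : \bigl(\dot X_t \otimes \dot X_t\bigr) + \varphi(X_t)\cdot \ddot X_t \, ,
\end{equation*}
with the index convention chosen to match the divergence $(\nabla\cdot V)_j = \sum_i \partial_i V_{ij}$. We then take expectations and condition on $X_t$ again. Using the definitions of $\mathcal{C}_t$ and $a_t$, this yields
\begin{equation*}
  g'(t) = \int \rho_t \, \nabla\varphi : \mathcal{C}_t \, dx + \int \rho_t \, \varphi\cdot a_t \, dx \, .
\end{equation*}
Integrating the first term by parts, with no boundary terms since $\varphi$ has compact support, turns it into $-\int \varphi\cdot \nabla\cdot(\rho_t \mathcal{C}_t)\,dx$.

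Equating the two expressions for every $\varphi$ gives \eqref{eq:momentum} distributionally. The identity then holds pointwise wherever the terms are continuous, which the assumed $C^1$ regularity of $v$ together with the densities supplies. To be safe about differentiating in $t$ almost everywhere, I would rather work with the integrated identity $g(t_2)-g(t_1)=\int_{t_1}^{t_2}(\dots)\,dt$. This follows from the fundamental theorem of calculus for absolutely continuous paths together with Fubini.

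The main obstacle is justifying the interchange of $d/dt$ and $\E$. Concretely, this means showing that $\varphi(X_t)\cdot\dot X_t$ is absolutely continuous in $t$ with an integrable derivative, and that Fubini applies. This requires $\E\int_0^1 (\|\dot X_t\|^2 + \|\ddot X_t\|)\,dt<\infty$. The $\ddot X$ part is implicit in $a_t$ being well-defined, but the $\|\dot X_t\|^2$ part is not directly assumed. I would therefore state explicitly that these moment bounds are needed, as a mild strengthening read into Assumption~\ref{assumptoion:regularity}, since they are exactly what is required for $\mathcal{C}_t$ and $a_t$ to be finite. With those bounds in hand, boundedness of $\varphi$ and $\nabla\varphi$ gives domination and the argument closes.
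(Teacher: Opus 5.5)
Your proposal is correct and follows the paper's proof essentially step for step. Like the paper, you pair $\rho_t v_t$ with a vector test function, use the tower property to rewrite this pairing as $\E[\varphi(X_t)\cdot\dot X_t]$, differentiate inside the expectation via the chain rule, condition on $X_t$ to produce $\mathcal{C}_t$ and $a_t$, and integrate by parts. Your added care about interchanging $d/dt$ and $\E$ is a genuine refinement the paper skips, since Assumption~\ref{assumptoion:regularity} bounds $\E\|X_t\|^2$ but not $\E\|\dot X_t\|^2$. Note also that reading the identity pointwise, rather than distributionally, tacitly needs differentiability of $\rho_t$ and $\rho_t\,\mathcal{C}_t$, which the $C^1$ regularity of $v$ alone does not supply. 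The paper's proof shares this omission.
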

\begin{proof}
  For a vector valued test function $\Phi \in C^\infty_c(\Rd; \Rd)$, we have
  \begin{align*}
    \int  \Phi(x) \cdot \partial_t(\rho_t(x) \, v_t(x)) \, dx &= \partial_t \int \Phi(x) \cdot (\rho_t(x) \, v_t(x)) \, dx \\
    &= \partial_t \int \Phi(x) \cdot v_t(x) \, d\rho_t(x) \\
    &= \partial_t \mathbb{E} \left[ \, \Phi(X_t) \cdot \mathbb{E} \left[ \dot X_t \mid X_t \right] \, \right] \\
    &= \partial_t \mathbb{E} \left[ \, \Phi(X_t) \cdot \dot X_t \, \right] \\
    &= \mathbb{E} \left[ \, \Phi(X_t) \cdot \ddot X_t \, \right] + \mathbb{E} \left[ \, \left( \nabla \Phi(X_t) \, \dot X_t \right) \cdot \dot X_t \, \right] \\
    &= \mathbb{E} \left[ \, \Phi(X_t) \cdot a_t(X_t) \, \right] + \mathbb{E} \left[ \, \nabla \Phi(X_t) : \dot X_t \otimes \dot X_t \, \right] \\
    &= \mathbb{E} \left[ \, \Phi(X_t) \cdot a_t(X_t) \, \right] + \mathbb{E} \left[ \, \nabla \Phi(X_t) : \SecMomTens(X_t) \right] \\
    &= \int \Phi(x) \cdot \left(\rho_t(x) \, a_t(x)\right) \, dx + \int \nabla \Phi(x) : \left( \rho_t(x) \, \SecMomTens(x) \right) \, dx \\
    &= \int \Phi(x) \cdot \left(\rho_t(x) \, a_t(x)\right) \, dx - \int \Phi(x) \cdot \left[ \nabla \cdot \left( \rho_t(x) \, \SecMomTens(x) \right) \right] \, dx \\
  \end{align*}
  where we have only used definitions and the integration-by-parts formula.
  Since the above holds for all $\Phi \in C^\infty_c(\Rd; \Rd)$, we can conclude that
  \begin{equation*}
    \partial_t(\rho_t \, v_t) + \nabla \cdot (\rho_t \, \SecMomTens) = \rho_t \, a_t \, ,
  \end{equation*}
  concluding the proof.
\end{proof}

\LemmaReynolds*
\begin{proof}
    By the definition of the Reynolds stress tensor we have
    \begin{equation*}
      \SecMomTens(x) = v_t \otimes v_t + \Pi_t(X_t) \, .
    \end{equation*}
  Now write $\pi^{(i)}_t \in \Rd$ for the $i$-th row of $\Pi_t$ and $v^i_t, a^i_t \in \R$ for the $i$-th components of $v_t$ and $a_t$, respectively.
  Plugging the above display into~\eqref{eq:momentum} and looking at the $i$-th component of the resulting vector we obtain
  \begin{equation*}
    \partial_t\left( \rho_t \, v_t^i \right) + \nabla \cdot \left( \rho_t \, v^i_t \, v_t  \right) + \nabla \cdot \left( \rho_t \, \pi^{(i)}_t \right)  = \rho_t \, a^i_t \, .
  \end{equation*}
  Now expanding the differential operators we have
  \begin{equation*}
    v^i_t \, \partial_t \rho_t + \rho_t \, \partial_t v^i_t + v^i_t \, \nabla \cdot (\rho_t \, v_t) + \rho_t \, v_t \cdot \nabla v^i_t + \nabla \cdot (\rho_t \, \pi^{(i)}_t) = \rho_t \, a^i_t \, .
  \end{equation*}
  and rearranging
  \begin{equation*}
    v^i_t \, \Big( \partial_t \rho_t + \nabla \cdot (\rho_t \, v_t) \Big) + \rho_t \, \Big( \partial_t v^i_t + v_t \cdot \nabla v^i_t \Big) + \nabla \cdot (\rho_t \, \pi^{(i)}_t) = \rho_t \, a^i_t \, .
  \end{equation*}
  Using the continuity equation~\eqref{eq:continuity_equation}, the first parenthesis vanishes. Vectorizing the equation we obtain
  \begin{equation*}
    \rho_t \, \left( \partial_t v_t + v_t \cdot \nabla v_t \right) + \nabla \cdot (\rho_t \, \Pi_t) = \rho_t \, a_t \, .
  \end{equation*}
  Finally, using the definition of the material derivative we conclude.
\end{proof}

\ThmAffineProcesses*
\begin{proof}
  Fix $R > 0$ and consider a cut-off function $\eta_R \in C^\infty_c(\Rd)$ such that $\supp(\eta_R) \subseteq B_{2R}(0)$ and $\eta_R(x) = 1$ on the closure of $B_R(0)$. Moreover, we can construct $\eta_R$ such that $\eta_R(x) = \eta(x / R)$ for some $\eta \in C^\infty_c(\Rd)$ with  $\supp(\eta) \subseteq B_2(0)$, $\eta(x) = 1$ on the closure of $B_1(0)$ and $0 \leq \eta \leq 1$.

  Now consider the test function $\Phi_R(x) \in C^\infty_c(\Rd; \Rd)$ given by $\Phi_R(x) = \eta_R(x) \, x$ and compute
  \begin{align*}
    \int \Phi_R(x) \cdot \left( \nabla \cdot \left(\rho_t \, \Pi_t \right) \right) \, \dd x &=-\int \nabla \Phi_R(x) : \left( \rho_t \, \Pi_t \right) \, \dd x \\
    &=- \int \nabla \Phi_R(x) : \Pi_t(x) \, \rho_t(x) \, \dd x \\
    &=- \int_{B_R(0)} I_d : \Pi_t(x) \, \rho_t(x) \, \dd x - \int_{B_R(0)^c} \nabla \Phi_R(x) : \Pi_t(x) \, \rho_t(x) \, \dd x \\
    &=- \int_{B_R(0)} \textup{Tr} \, \Pi_t(x) \, \rho_t(x) \, \dd x - \int_{B_R(0)^c} \nabla \Phi_R(x) : \Pi_t(x) \, \rho_t(x) \, \dd x \\
  \end{align*}
  Taking the limit $R \to \infty$ and using Lemma~\ref{lemma:integrability} together with dominated convergence we obtain
  \begin{equation*}
    \int_{\Rd} \textup{Tr} \, \Pi_t(x) \,\rho_t(x) \, \dd x = 0
  \end{equation*}
  which is equivalent to
  \begin{equation*}
    \sum_{i=1}^d \E \left[ \textup{Var}\left( \, \dot X_t^i \, | \, X_t \right) \right] = 0 \, .
  \end{equation*}
  Now since $\textup{Var}(\dot X_t^i | X_t) \geq 0$ almost surely we have that
  \begin{equation*}
    \textup{Var}\left( \, \dot X_t^i \, | \, X_t \, \right) = 0 \, ,
  \end{equation*}
  almost surely for all $i \in \{1, \ldots, d\}$ and all $t \in [0,1]$.
  Thus, there are Borel measurable maps $G_t: \Rd \to \Rd$ such that each $t \in [0,1]$
  \begin{equation*}
    \dot X_t = G_t(X_t) \text{ a.s.}
  \end{equation*}
  Taking $t = 0$ this reads $X_1 = X_0 + G_0(X_0)$ almost surely.
  Setting $T = \textup{id} + G_0$ we can conclude
  \begin{equation*}
    X_1 = T(X_0) \text{ a.s.}
  \end{equation*}
  This concludes the proof of the first part of the theorem.
  For the second part, we note since we assumed the form $X_t = (1-t) X_0 + t X_1$ then the deterministic map
  \begin{equation*}
    T_t(x) \coloneq (1-t) \, x + t \, T(x) \, ,
  \end{equation*}
  satisfies
  \begin{equation*}
    X_t = T_t(X_0) \text{ a.s.}
  \end{equation*}
  Now observe that for any $x \neq y \in \Rd$ we have
  \begin{equation*}
    (T_t(x) - T_t(y)) \cdot (x-y) = (1-t) \, \| x-y \|^2 + t \, (T(x) - T(y)) \cdot (x-y) \, .
  \end{equation*}
  By the fundamental theorem of calculus and the fact that the Jacobian of $T$ is positive semi-definite we have
  \begin{equation*}
    \left(T(x) - T(y) \right) \cdot (x-y) = \int_0^1 (x-y) \cdot \nabla T(s \, x + (1-s) \, y) \cdot (x-y) \, ds \geq 0 \, ,
  \end{equation*}
  and thus
  \begin{equation*}
    (T_t(x) - T_t(y)) \cdot (x-y) > 0 \implies T_t(x) \neq T_t(y) \;\; \text{for} \;\; x \neq y \, .
  \end{equation*}
  In other words, for any $t \in [0,1)$ the map $T_t(\cdot): \Rd \to \Rd$ has an inverse $T_t^{-1}: \Rd \to \Rd$ onto its image and we can write
  \begin{equation*}
    X_0 = T_t^{-1}(X_t) \text{ a.s.} \quad \textup{for} \;\; t \in [0,1) \, .
  \end{equation*}
  In fact, the assumed injectivity of $T$ allows us to extend the above display to the closed interval $t \in [0,1]$.
  It follows that
  \begin{equation*}
    \dot X_t = X_1 - X_0 = T(X_0) - X_0 = T(T_t^{-1}(X_t)) - T_t^{-1}(X_t) \text{ a.s.}
  \end{equation*}
  Therefore, $\dot X_t$ is a measurable function of $X_t$ which immediately implies that
  \begin{equation*}
    \Pi_t(X_t) = \textup{Var}(\dot X_t | X_t) = 0 \textup{ a.s.} \textup{ for all } t \in [0,1] \, .
  \end{equation*}
Since $\rho_t$ is the density of $X_t$ we have that
  \begin{equation*}
    \Pi_t(x) = 0 \textup{ for $\rho_t$-a.e. $x \in \Rd$ for all } t \in [0,1] \, ,
  \end{equation*}
  and so
  \begin{equation*}
    \nabla \cdot \left( \rho_t \, \Pi_t \right) = 0 \textup{ for all } t \in [0,1] \, .
  \end{equation*}
  This completes the proof.
\end{proof}
\begin{remark}\label{rk:OT-map-gives-straight-lines}
  Note that for the optimal transport map $T$ with the squared Euclidean cost between absolutely continuous distributions there is a convex potential $\phi: \Rd \to \R$ such that $T = \nabla \phi$, see~\cite[Theorem 2.5.10]{figalli2021invitation}. Further assuming that $T$ is continuously differentiable, we thus obtain that the Jacobian of $T$ will be positive semi-definite at any $x \in \Rd$.
\end{remark}

\begin{lemma}
  \label{lemma:integrability}
With notation as in the proof of Theorem~\ref{thm:affine_processes}, there is a constant $C > 0$ depending only the function $\eta$ such that
\begin{equation*}
    \left| \nabla \Phi_R(X_t) : \Pi_t(X_t) \right| \leq C \, \left\| \Pi_t(X_t) \right\|_{F_1} \, .
  \end{equation*}
  with $\| \cdot \|_{F_1}$ denoting the Frobenius $1$-norm defined by $\| A \|_{F_1} = \sum_{ij} |A_{ij}|$ for any matrix $A$.
  Moreover, 
we have
  \begin{equation*}
    \E \, \left\| \Pi_t(X_t) \right\|_{F_1} < \infty \;\; \textup{for all } t \in [0,1] \, .
  \end{equation*}
\end{lemma}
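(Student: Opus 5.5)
The first claim is pointwise and immediate. Since $\Phi_R(x) = \eta(x/R)\, x$, the product rule gives
\begin{equation*}
  \nabla \Phi_R(x) = \eta(x/R)\, I_d + \tfrac{1}{R}\, x \otimes \nabla\eta(x/R) .
\end{equation*}
Here $0 \le \eta \le 1$. The second term vanishes unless $\|x\| \le 2R$, and there $\|x\|/R \le 2$. So every entry of $\nabla \Phi_R(x)$ is bounded in absolute value by $C \coloneq 1 + 2\sup_y \|\nabla \eta(y)\|_\infty$, uniformly in $R$ and $x$. Bounding the Frobenius pairing entrywise gives $|\nabla\Phi_R(x) : \Pi_t(x)| \le \max_{ij}|(\nabla\Phi_R)_{ij}| \sum_{ij} |(\Pi_t)_{ij}| \le C\, \|\Pi_t(x)\|_{F_1}$. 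Evaluating at $x = X_t$ gives the first inequality, and $C$ depends only on $\eta$.

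For integrability, I reduce to the conditional second moment. The matrix $\Pi_t(X_t) = \mathrm{Cov}(\dot X_t \mid X_t)$ is positive semidefinite, so $|(\Pi_t)_{ij}| \le \sqrt{(\Pi_t)_{ii}(\Pi_t)_{jj}} \le \tfrac12\big((\Pi_t)_{ii} + (\Pi_t)_{jj}\big)$. Hence
\begin{equation*}
  \|\Pi_t(X_t)\|_{F_1} \le d\, \mathrm{Tr}\, \Pi_t(X_t) \le d\, \mathrm{Tr}\, \mathcal{C}_t(X_t) = d\, \E\big[\|\dot X_t\|^2 \mid X_t\big].
\end{equation*}
Taking expectations and using the tower property, $\E\|\Pi_t(X_t)\|_{F_1} \le d\, \E\|\dot X_t\|^2$.

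In the setting of Theorem~\ref{thm:affine_processes} we have $\dot X_t = X_1 - X_0$, so it remains to show $\E\|X_1 - X_0\|^2 \le 2\E\|X_0\|^2 + 2\E\|X_1\|^2 < \infty$. Assumption~\ref{assumptoion:regularity} gives $\E\|X_t\|^2 < \infty$ for every $t$, in particular at $t = 0$ and $t = 1$. This bound is uniform in $t$.

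The only delicate point is this moment bound. The estimate needs finite second moments of $\dot X_t$, not just the first-moment condition (ii) of Definition~\ref{def:S-P_0-P_1}, and that extra integrability has to come from the standing regularity assumption. For a general (non-affine) process one would need $\E\|\dot X_t\|^2 < \infty$ as an extra hypothesis, but here the affine structure supplies it directly.
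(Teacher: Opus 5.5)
Your proof is correct and has the same architecture as the paper's. The entrywise bound on $\nabla\Phi_R$ via the product rule is identical. Both arguments then reduce integrability of $\|\Pi_t(X_t)\|_{F_1}$ to the single input $\E\|\dot X_t\|^2<\infty$, which the affine structure and the regularity assumption supply.

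The one real divergence is in how you control $\|\Pi_t\|_{F_1}$. You use that $\Pi_t(X_t)$ is a conditional covariance, hence positive semidefinite, to get $\|\Pi_t\|_{F_1}\le d\,\textup{Tr}\,\Pi_t\le d\,\textup{Tr}\,\mathcal{C}_t = d\,\E[\|\dot X_t\|^2\mid X_t]$, dropping the $v_t\otimes v_t$ term by its sign. The paper instead splits $\Pi_t=\mathcal{C}_t-v_t\otimes v_t$ entrywise with the triangle inequality. It bounds the two pieces separately by conditional Jensen and Cauchy--Schwarz, plus a further Jensen step for $\E|v^i_t(X_t)|^2$.

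Your route gives the explicit bound $\E\|\Pi_t(X_t)\|_{F_1}\le d\,\E\|\dot X_t\|^2$ and needs no separate estimate of $v_t$. Its cost is the fact that $\Pi_t$ is almost surely positive semidefinite, which is mild. It is immediate from the paper's disintegration-based conditional expectation, or from $u^\top\Pi_t u=\textup{Var}(u\cdot\dot X_t\mid X_t)\ge 0$ a.s.\ for $u$ in a countable dense set. The paper's entrywise route never needs that structural fact.

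Your estimate $\E\|X_1-X_0\|^2\le 2\E\|X_0\|^2+2\E\|X_1\|^2$ is also the correct one. The paper's displayed bound on $\E\|\dot X_t\|^2$ carries spurious factors $(1-t)^2$ and $t^2$; that is really a bound on $\E\|X_t\|^2$. The finiteness conclusion is unaffected.
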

\begin{proof}
  With notation as in the proof of Theorem~\ref{thm:affine_processes}, note that the cut-off function $\eta_R$ satisfies
  \begin{equation}\label{eq:cut-off-Leibniz}
    \nabla \Phi_R(x) = \eta_R(x) \, I_d + x \otimes \nabla \eta_R(x) \, .
  \end{equation}
  By construction, we have
  \begin{equation}\label{eq:cutoff-estimate-1}
    \sup_{x \in \Rd} | \eta_R(x) | \leq 1 \, ,
  \end{equation}
  \begin{equation*}
    \nabla \eta_R(x) = \frac{1}{R} \, \nabla \eta(x / R) \, .
  \end{equation*}
  Note that $\nabla \eta$ is a non-zero only on the closed annulus $\overline {B_2(0)} \setminus B_1(0)$ and by the smoothness of $\eta$ we we have $\sup_{x \in B_2(0) \setminus B_1(0)} \| \nabla \eta(x) \| = C < \infty$. Thus, we have the estimate
  \begin{equation}\label{eq:cutoff-estimate-2}
    \sup_{x \in \Rd} \max_{i,j} \left| x_i \,  \partial_j \eta_R(x) \right| \leq 2 \, C
  \end{equation}
  By combining equations~\eqref{eq:cut-off-Leibniz},~\eqref{eq:cutoff-estimate-1} and~\eqref{eq:cutoff-estimate-2} we can estimate
  \begin{equation*}
    \sup_{x \in \Rd} \max_{i,j} \left| \nabla \Phi_R^{ij}(x) \right| \leq 1 + 2 \, C
  \end{equation*}
  and thus for any $t \in [0,1]$ we have
  \begin{equation*}
    \left| \nabla \Phi_R(X_t) : \Pi_t(X_t) \right|  \leq \sup_{x \in \Rd} \max_{i,j} \left| \nabla \Phi_R^{ij}(X_t) \right| \sum_{k\ell} \left| \Pi_t^{k\ell}(X_t) \right| \leq \left(1 + 2 \, C \right) \, \left\| \Pi_t(X_t) \right\|_{F_1} \, ,
  \end{equation*}
  almost surely. 
  
  Here, note that due to the assumptions in Theorem~\ref{thm:affine_processes} we have $X_t = (1-t) X_0 + t X_1$ and by Assumption~\ref{assumptoion:regularity} we can conclude
\begin{equation}\label{eq:dot-Xt-is-L2}
    \E \, \| \dot X_t \|^2 \leq 2 \, (1-t)^2 \, \E \, \| X_0 \|^2 + 2 \, t^2 \, \E \, \| X_1 \|^2 < \infty \, .
  \end{equation}
  
  Finally, we note that for each $i, j \in \{1, \ldots, d\}$ we use the conditional Jensen inequality to write
  \begin{equation*}
    \E \left[ \left| \Pi_t^{ij}(X_t) \right| \right] \leq \E \left[ \left| \dot X_t^i \, \dot X_t^j \right| \right] + \E \left[ \left| v_t^i(X_t) \, v_t^j(X_t) \right| \right] 
  \end{equation*}
  Now we can use Cauchy-Schwartz to further estimate
  \begin{align*}
    \E \left[ \left| \dot X_t^i \, \dot X_t^j \right| \right] &\leq \sqrt{\E \left[ \left| \dot X_t^i \right|^2 \right] \, \E \left[ \left| \dot X_t^j \right|^2 \right]} < \infty \, , \\
    \E \left[ \left| v_t^i(X_t) \, v_t^j(X_t) \right| \right] &\leq \sqrt{\E \left[ \left| v_t^i(X_t) \right|^2 \right] \, \E \left[ \left| v_t^j(X_t) \right|^2 \right]} < \infty \, ,
  \end{align*}
  where the first line follows by \eqref{eq:dot-Xt-is-L2} and the second follows again by the Jensen inequality
  \begin{equation*}
    \E \left[ \left| v_t^i(X_t) \right|^2 \right] = \E \left[ \left| \E \left[ \dot X_t^i | X_t \right] \right|^2 \right] \leq \E \left[ \E \left[ \left| \dot X_t^i \right|^2 | X_t \right] \right] =  \E\left[ |\dot X_t^i|^2\right] < \infty \, .
  \end{equation*}
  Thus, we conclude
  \begin{equation*}
    \E \left[ \left\| \Pi_t(X_t) \right\|_{F_1} \right] = \E \left[ \sum_{ij} \left| \Pi_t^{ij}(X_t) \right| \right] < \infty \, ,
  \end{equation*}
  as required.
\end{proof}

\begin{lemma}[Continuity Equation]
  \label{lemma:continuity_equation}
  We have the identity
  \begin{equation*}\label{eq:continuity_equation}
    \partial_t \,  \rho_t + \nabla \cdot \left( \rho_t \, v_t \right) = 0
  \end{equation*}
  for all $t \in [0,1]$.
\end{lemma}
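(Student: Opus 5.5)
We prove the continuity equation in its weak (distributional) form, exactly parallel to the proof of Lemma~\ref{lemma:momentum-balance}, and then upgrade to a pointwise identity using the regularity in Assumption~\ref{assumptoion:regularity}.

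\textbf{Weak form.} Fix a scalar test function $\varphi \in C^\infty_c(\Rd)$. We will show
\begin{equation*}
  \partial_t \int \varphi(x) \, \rho_t(x) \, dx = \int \nabla \varphi(x) \cdot v_t(x) \, \rho_t(x) \, dx .
\end{equation*}
The left-hand side equals $\partial_t \E[\varphi(X_t)]$. Since sample paths are absolutely continuous, the chain rule gives $\varphi(X_t) - \varphi(X_s) = \int_s^t \nabla\varphi(X_r)\cdot \dot X_r \, dr$ almost surely. Because $\nabla\varphi$ is bounded and $\E\int_0^1\|\dot X_r\|\,dr<\infty$ by condition (ii) of Definition~\ref{def:S-P_0-P_1}, Fubini applies and yields
\begin{equation*}
  \E[\varphi(X_t)] - \E[\varphi(X_s)] = \int_s^t \E[\nabla\varphi(X_r)\cdot\dot X_r] \, dr .
\end{equation*}

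\textbf{Conditioning.} By the tower property,
\begin{equation*}
  \E[\nabla\varphi(X_r)\cdot\dot X_r] = \E[\nabla\varphi(X_r)\cdot v_r(X_r)] = \int \nabla\varphi \cdot v_r \, \rho_r \, dx .
\end{equation*}
Since $v$ is bounded and continuous, and the laws $\rho_r$ are narrowly continuous in $r$ (paths are continuous, so dominated convergence applies), this integrand is continuous in $r$. The fundamental theorem of calculus then gives the displayed weak identity for every $t$. Integrating by parts in $x$ on the right-hand side produces
\begin{equation*}
  \int \varphi \, \bigl( \partial_t \rho_t + \nabla\cdot(\rho_t v_t) \bigr) \, dx = 0 ,
\end{equation*}
interpreted in the sense of distributions.

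\textbf{Main obstacle: regularity.} The delicate step is justifying that $\partial_t\rho_t$ and $\nabla\cdot(\rho_t v_t)$ exist as functions, so that the identity holds pointwise and not only distributionally. The paper implicitly treats these as classical objects, as it did in Lemma~\ref{lemma:momentum-balance}, so we would adopt that convention here. Under it, the vanishing of the integral against all $\varphi \in C^\infty_c(\Rd)$ gives the equation for almost every $x$, and continuity gives it everywhere.

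If instead one only has $v \in C^1$ and the flow $\phi_t$ is a homeomorphism, a cleaner route is available: $\rho_t = (\phi_t)_\#\rho_0$ because the flow of a Lipschitz field transports the marginals, which follows from uniqueness for the continuity equation with Lipschitz velocity. Liouville's formula, $\partial_t \log\det\nabla\phi_t = (\nabla\cdot v_t)\circ\phi_t$, then delivers the classical equation directly, since $\rho_t(\phi_t(x))\det\nabla\phi_t(x)=\rho_0(x)$. We would state the result distributionally and note that it is classical whenever $\rho_t$ is $C^1$.
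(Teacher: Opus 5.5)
Your proof is correct and follows the paper's argument: test against $\varphi \in C^\infty_c(\Rd)$, write $\int \varphi\,\rho_t\,dx = \E[\varphi(X_t)]$, differentiate along sample paths, replace $\dot X_t$ by $v_t(X_t)$ via the tower property, and integrate by parts. Like you, the paper treats $\partial_t\rho_t$ and $\nabla\cdot(\rho_t v_t)$ as classical objects. Your chain-rule, Fubini and continuity justification of $\partial_t \E[\varphi(X_t)] = \E[\nabla\varphi(X_t)\cdot v_t(X_t)]$ is more careful than the paper, which simply asserts the interchange of derivative and expectation. The Liouville aside is not needed, and, as you note, it only yields the Eulerian classical form when $\rho$ is differentiable.
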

\begin{proof}
  Fix a test function $\varphi \in C^\infty_c(\Rd)$ and compute
  \begin{align*}
    \int_{\Rd} \varphi(x) \, \partial_t \, \rho_t(x) \, dx
    &= \partial_t \, \int_{\Rd} \varphi(x) \, d \rho_t(x) \\
    &= \partial_t \, \mathbb{E} \left[ \varphi(X_t) \right] \\
    &= \mathbb{E} \left[ \nabla \varphi(X_t) \cdot \dot X_t \right] \\
    &= \mathbb{E} \Big[ \nabla \varphi(X_t) \cdot v_t(X_t) \Big] \\
    &= \int_{\Rd} \nabla \varphi(x) \cdot v_t(x) \, d\rho_t(x) \\
    &= -\int_{\Rd} \varphi(x) \, \nabla \cdot \left( \rho_t(x) \, v_t(x) \right) \, dx
  \end{align*}
  where we used integration-by-parts, the fact that $\varphi$ has compact support as well as properties of the conditional expectation.
  Since the above holds for all $\varphi \in C^\infty_c(\Rd)$ we get the desired result.
\end{proof}
 
\section{Proofs for Section~\ref{sec:existence}}\label{subsec:proofs-existence}
\begin{lemma}
  \label{lemma:mean-squared-diff-of-Sigma_t}
  Recall that a stochastic process $X_\bullet: t \mapsto X_t$ with $X_t \in L^2(\Omega; \Rd)$ for all $t \in [0,1]$ is said to be \emph{mean-square differentiable} if there exists a process $\dot X_\bullet: t \mapsto \dot X_t$ with $\dot X_t \in L^2(\Omega; \Rd)$ for all $t \in [0,1]$ such that
  \begin{equation*}
    \lim_{h \to 0} \E \left[ \left\| \frac{X_{t+h} - X_t}{h} - \dot X_t \right\|^2 \right] = 0 \, ,
  \end{equation*}
  for all $t \in [0,1]$.
  For any such process, the covariance matrix
  \begin{equation*}
    \Sigma_t \coloneq \textup{Cov}(X_t, X_t) \, ,
  \end{equation*}
  is differentiable in a mean-square sense and satisfies
  \begin{equation*}
    \dot \Sigma_t = \textup{Cov}(\dot X_t, X_t) + \textup{Cov}(X_t, \dot X_t) \, .
  \end{equation*}
\end{lemma}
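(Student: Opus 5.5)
The plan is to reduce the claim to the bilinearity of covariance together with the $L^2$ convergence of the difference quotients. Write $\mu_t \coloneq \E[X_t]$ and note that $\Sigma_t = \E[X_t \otimes X_t] - \mu_t \otimes \mu_t$. Mean-square differentiability gives $L^1$ differentiability, hence
\[
\frac{\mu_{t+h}-\mu_t}{h} - \E[\dot X_t] = \E\Big[\frac{X_{t+h}-X_t}{h} - \dot X_t\Big] \to 0
\]
by Jensen. So $\mu_t$ is differentiable with $\dot\mu_t = \E[\dot X_t]$. It therefore suffices to differentiate the second-moment matrix $M_t \coloneq \E[X_t\otimes X_t]$.

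Set $D_h \coloneq (X_{t+h}-X_t)/h$. We use the algebraic identity
\[
\frac{M_{t+h}-M_t}{h} = \E[D_h \otimes X_t] + \E[X_t\otimes D_h] + h\,\E[D_h\otimes D_h].
\]
By Cauchy--Schwarz, $\|\E[(D_h-\dot X_t)\otimes X_t]\|\le \|D_h-\dot X_t\|_{L^2}\|X_t\|_{L^2}\to 0$, and the same holds for the transposed term. For the last term, $\|D_h\|_{L^2}$ is bounded for small $h$, since it converges to $\|\dot X_t\|_{L^2}<\infty$, so $h\,\E[D_h\otimes D_h] = O(h)\to 0$. Hence
\[
\dot M_t = \E[\dot X_t\otimes X_t] + \E[X_t\otimes \dot X_t].
\]
The product rule applied to $\mu_t\otimes\mu_t$ gives $\dot\mu_t\otimes\mu_t + \mu_t\otimes\dot\mu_t$.

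Subtracting the two derivatives yields
\[
\dot\Sigma_t = \big(\E[\dot X_t\otimes X_t]-\E[\dot X_t]\otimes\E[X_t]\big) + \big(\E[X_t\otimes\dot X_t]-\E[X_t]\otimes\E[\dot X_t]\big) = \textup{Cov}(\dot X_t,X_t)+\textup{Cov}(X_t,\dot X_t).
\]
At the endpoints $t\in\{0,1\}$ the same argument applies with one-sided $h$. There is no real obstacle. The only point needing care is the uniform $L^2$ bound on $D_h$ used to kill the quadratic remainder, and that bound follows directly from the assumed convergence.
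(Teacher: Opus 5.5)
Your proof is correct and takes essentially the same route as the paper. Both split $\Sigma_t$ into the second-moment matrix minus the outer product of the means, and both pass to the limit in the difference quotients via Cauchy--Schwarz and the $L^2$ convergence $D_h \to \dot X_t$. The only difference is cosmetic: the paper writes the quotient as $\E[X_{t+h}(\Delta_h X_t)^\top + (\Delta_h X_t)X_t^\top]$ and invokes $L^2$-continuity of $t\mapsto X_t$. You instead expand symmetrically and bound the explicit remainder $h\,\E[D_h\otimes D_h]$ using the same $L^2$ bound on $D_h$ that underlies that continuity.
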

\begin{proof}
  First, recall that
  \begin{equation*}
    \textup{Cov}(X_t, X_t) = \E \left[ X_t X_t^\top \right] - \E[X_t] \, \E[X_t]^\top \, .
  \end{equation*}
  Examining the two terms separately, we start by defining
  \begin{equation*}
    \Delta_h X_t \coloneq \frac{X_{t+h} - X_t}{h}
  \end{equation*}
  for $h \neq 0$ sufficiently small and recalling that by the mean-square differentiability assumption we have
  \begin{equation*}
    \Delta_h X_t \xrightarrow{L^2} \dot X_t \textup{ as } h \to 0 \, .
  \end{equation*}
  Now by the Cauchy-Schwartz inequality this implies
  \begin{equation*}
    \Delta_h X_t \to \dot X_t \textup{ in } L^1 \textup{ as } h \to 0 \, ,
  \end{equation*}
  and thus
  \begin{equation*}
    \frac{d}{dt} \left( \E[X_t] \, \E[X_t]^\top \right) = \E[\dot X_t] \, \E[X_t]^\top + \E[X_t] \, \E[\dot X_t]^\top \, .
  \end{equation*}
  Now for the second term we have
  \begin{align*}
    \Delta_h \left( \E[X_t X_t^\top] \right) &= \E \left[ \Delta_h \left( X_t X_t^\top \right) \right] \\
    &= \frac{1}{h} \E \left[ X_{t+h} \, X_{t+h}^\top - X_t \, X_t^\top \right] \\
    &= \frac{1}{h} \E \left[ X_{t+h} \, (X_{t+h} - X_t)^\top + (X_{t+h} - X_t) \, X_t^\top \right] \\
    &= \E \left[ X_{t+h} \, (\Delta_h X_t)^\top + (\Delta_h X_t) \, X_t^\top \right] \, ,
  \end{align*}
  and using the Cauchy--Schwartz inequality once more we have that
  \begin{align*}
    & \left( \Delta_h X_t \right) \, X_t^\top \xrightarrow{L^1} \dot X_t \, X_t^\top \textup{ as } h \to 0 \, , \\
    & X_{t+h} \, (\Delta_h X_t)^\top \xrightarrow{L^1} X_t \, \dot X_t^\top \textup{ as } h \to 0 \, ,
  \end{align*}
  and in the second line we have used that a mean-square differentiable process is also $L^2$-continuous.
  Indeed, one has
  \begin{align*}
      \E\left[ \| X_{t+h} - X_t \|^2 \right] &= h^2 \, \E \left[ \left\| \frac{X_{t+h} - X_t}{h} - \dot X_t + \dot X_t \right\|^2 \right] \\
      &\leq 2 \, h^2 \E \left[ \left\| \frac{X_{t+h} - X_t}{h} - \dot X_t \right\|^2 \right] + 2 \, h^2 \, \E \left[ \| \dot X_t \|^2 \right] \\
    \end{align*}
  and the right-hand side vanishes as $h \to 0$.
  Therefore, we obtain
  \begin{equation*}
    \frac{d}{dt} \left( \E[X_t X_t^\top] \right) = \E \left[ X_t \, \dot X_t^\top + \dot X_t \, X_t^\top \right] \, .
  \end{equation*}
  Combining the two terms above we conclude that
  \begin{align*}
    \dot \Sigma_t &= \frac{d}{dt} \left( \E[X_t X_t^\top] - \E[X_t] \, \E[X_t]^\top \right) \\
    &= \E \left[ X_t \, \dot X_t^\top + \dot X_t \, X_t^\top \right] - \E[\dot X_t] \, \E[X_t]^\top - \E[X_t] \, \E[\dot X_t]^\top \\
    &= \textup{Cov}(\dot X_t, X_t) + \textup{Cov}(X_t, \dot X_t) \, ,
  \end{align*}
  completing the proof.
\end{proof}

We first describe a straight-line generalized interpolant between two Gaussian measures with identical covariances.

\begin{restatable}{theorem}{ThmExistenceSameCov}\label{thm:existence-of-straight-line-interpolants-between-gaussians}
  Let $m_0, m_1 \in \Rd$ and $\Sigma \in \R^{d \times d}_{++}$ and consider Gaussian measures
  $P_0 = \mathcal{N}( m_0, \Sigma)$ and $P_1 = \mathcal{N}( m_1, \Sigma)$.
  Now set
  \begin{gather*}
    F(t, x, y, z) = (1 - t) \, x + t \, y + \sqrt{2 \, t \, (1- t)} \, z \, \quad \textup{and} \quad
    Q = \mathcal{N}(0, \Sigma) \, .
  \end{gather*}
  The generalized interpolant $(F, Q)$ satisfies
  \begin{equation*}
    (F, Q) \in \mathcal{F}_{\textup{SL}}(P_0, P_1) \, .
  \end{equation*}
\end{restatable}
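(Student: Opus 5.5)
The plan is to verify the membership $(F,Q)\in\mathcal{F}_{\textup{SL}}(P_0,P_1)$ directly. We check three things in turn: that $F\in\mathcal{F}$ and the induced process lies in $\mathcal{S}(P_0,P_1)$; that the conditional velocity is explicitly computable; and that its flow has zero acceleration. The key structural fact we will exploit is that, for each fixed $t\in(0,1)$, the pair $(X_t,\dot X_t)$ is jointly Gaussian. Here $X_t = (1-t)X_0+tX_1+\sqrt{2t(1-t)}\,Z$ with $(X_0,X_1,Z)\sim P_0\otimes P_1\otimes Q$, and $(X_t,\dot X_t)$ is a linear image of $(X_0,X_1,Z)$.

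\textbf{Step 1 (admissibility).} The boundary conditions $F(0,x,y,z)=x$ and $F(1,x,y,z)=y$ are immediate, so $(X_0,X_1)\sim P_0\otimes P_1$ holds by construction. For the regularity, write $\dot X_t = X_1 - X_0 + \frac{1-2t}{\sqrt{2t(1-t)}}\,Z$. The coefficient behaves like $t^{-1/2}$ near $t=0$ and like $(1-t)^{-1/2}$ near $t=1$, so it lies in $L^p([0,1])$ for every $p<2$. Hence $t\mapsto F(t,x,y,z)\in W^{1,p}$ for every $(x,y,z)$, and condition (ii) of Definition~\ref{def:S-P_0-P_1} follows from Gaussian moments of $X_0$, $X_1$, $Z$.

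\textbf{Step 2 (marginals and velocity).} Next compute the mean $m_t=(1-t)m_0+tm_1$ and the covariance
\[
\textup{Cov}(X_t)=\big((1-t)^2+t^2+2t(1-t)\big)\Sigma=\Sigma .
\]
So the covariance is constant in $t$, which is exactly where the factor $2$ in $\sqrt{2t(1-t)}$ is used. The cross-covariance is
\[
\textup{Cov}(\dot X_t,X_t)=-(1-t)\Sigma+t\Sigma+(1-2t)\Sigma=0,
\]
consistent with Lemma~\ref{lemma:mean-squared-diff-of-Sigma_t}. Since $(X_t,\dot X_t)$ is jointly Gaussian and uncorrelated, $\dot X_t$ is independent of $X_t$. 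Therefore $v_t(x)=\E[\dot X_t]=m_1-m_0$ for every $x$ and every $t\in(0,1)$. This $v$ is constant, so it is bounded and $0$-Lipschitz, satisfying \eqref{eq:lipschitz-v}.

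\textbf{Step 3 (straightness).} The ODE \eqref{eqn:def_flow_map_ode} then gives $\phi_t(x)=x+t(m_1-m_0)$, so $\partial_t^2\phi_t\equiv 0$ and \eqref{eq:no-acceleration-condition} holds. One can also note that $\phi_1$ pushes $\mathcal{N}(m_0,\Sigma)$ to $\mathcal{N}(m_1,\Sigma)$, as it must.

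\textbf{Main obstacle.} The only delicate point is the endpoints $t\in\{0,1\}$, where $\dot X_t$ is not defined because the $Z$-coefficient blows up. I would handle this by defining $v$ on $(0,1)$ and extending it by continuity, here trivially as the constant $m_1-m_0$. This is legitimate because $v$ enters the flow only through the ODE integral, which is insensitive to a null set of times. If needed, one can also observe that the conditional expectation is defined only for a.e.\ $t$ for $W^{1,p}$ paths anyway.
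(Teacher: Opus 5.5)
Your proposal is correct and follows the paper's proof: both use joint Gaussianity to get $\textup{Cov}(\dot X_t, X_t) = 0$, hence the constant velocity $v \equiv m_1 - m_0$ and the translation flow $\phi_t(x) = x + t(m_1 - m_0)$ with $\partial_t^2 \phi \equiv 0$. You also check several points the paper leaves implicit, namely the $W^{1,p}$ regularity for $p<2$, the integrability condition (ii), and the convention for $v$ at $t \in \{0,1\}$; the paper instead closes by appealing to Corollary~\ref{corollary:straightness} and Remark~\ref{remark:reduced_regularit}.
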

\begin{proof}
  The generalized interpolant $(F, Q)$ induces the process
  \begin{equation*}
    X_t = (1 - t) \, X_0 + t \, X_1 + \sqrt{2 \, t \, (1 - t)} \, Z \, .
  \end{equation*}
  Since all random variables involved are Gaussian, we can compute
  \begin{equation*}
    v(t, X_t) = \E \left[ \dot X_t \, \big| \, X_t \right] = \E[\dot X_t] + \textup{Cov}(\dot X_t, X_t) \, \textup{Var}(X_t)^{-1} \, \left( X_t - \E[X_t] \right),
  \end{equation*}
  where by $\E[Z] = 0$ we have
  \begin{equation*}
    \E[\dot X_t] = m_1 - m_0
  \end{equation*}
  and by the independence of $X_0, X_1, Z$ we have
  \begin{align*}
    &\textup{Cov}(\dot X_t, X_t) = \textup{Cov}\left( - X_0 + X_1 + \frac{1 - 2t}{2 \, \sqrt{t \, (1 - t)}} \, Z \, , \, (1 - t) \, X_0 + t \, X_1 + \sqrt{t \, (1 - t)} \, Z \right) \\
    &= - (1 - t) \, \Sigma + t \, \Sigma + \frac{1 - 2t}{\cancel{\sqrt{2 \, t \, (1 - t)}}} \, \cancel{\sqrt{2 \, t \, (1 - t)}} \, \Sigma \\
    &= \left( 2t - 1 + 1 - 2 t \right) \Sigma \\
    &= 0 \, .
  \end{align*}
  Thus, we conclude that $v(t, x) = m_1 - m_0$
  for each $(t,x) \in [0,1] \times \Rd$ leading to the flow map $\phi(t, x) = x + t \, (m_1 - m_0)$.
  In particular, we have that $\partial_t^2 \phi \equiv 0$.
  Noting that $\phi$ and $v$ satisfy Assumption~\ref{assumptoion:regularity} and the process $X_\bullet$ has sample paths in $W^{1,1}([0,1];\Rd)$ a.s. we use Corollary~\ref{corollary:straightness} and Remark~\ref{remark:reduced_regularit} to conclude.
\end{proof}

\begin{corollary}
  \label{eq:F_SL_nonempty_for_gaussians}
  For any $m \in \Rd$ and $\Sigma \in \R^{d \times d}_{++}$, consider the measure $P = \mathcal{N}(m, \Sigma)$. Then, we have that $\mathcal{F}_{\textup{SL}}(P, P) \neq \emptyset$.
\end{corollary}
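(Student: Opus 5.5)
This is an immediate specialization of Theorem~\ref{thm:existence-of-straight-line-interpolants-between-gaussians}. I would take $m_0 = m_1 = m$ and $\Sigma_0 = \Sigma_1 = \Sigma$, so that $P_0 = P_1 = P = \mathcal{N}(m, \Sigma)$. With these choices I would use the generalized interpolant
\begin{equation*}
  F(t, x, y, z) = (1-t)\,x + t\,y + \sqrt{2\,t\,(1-t)}\,z, \qquad Q = \mathcal{N}(0, \Sigma).
\end{equation*}
The theorem then gives $(F, Q) \in \mathcal{F}_{\textup{SL}}(P, P)$ directly, so the set is nonempty.

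Two checks are worth stating explicitly, since Remark~\ref{rk:straight-line-flow-with-same-endpoints} stresses that the equal-endpoint case is subtle.

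First, the endpoints must be independent. The process is built from $(X_0, X_1, Z) \sim P \otimes P \otimes Q$. Because $F(0,\cdot) = x$ and $F(1,\cdot) = y$, we get $(X_0, X_1) \sim P \otimes P$. This is exactly the endpoint condition in Definition~\ref{def:S-P_0-P_1}, and it rules out the degenerate constant process.

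Second, the velocity field is simple in this case. The covariance computation in the proof of Theorem~\ref{thm:existence-of-straight-line-interpolants-between-gaussians} gives $v(t,x) = m_1 - m_0 = 0$. So $v$ is bounded and trivially Lipschitz, and the flow is $\phi_t(x) = x$, which has zero acceleration.

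As a cross-check, one can also read the result off Theorem~\ref{thm:straight-line-gaussians-multivariate}. With $\Sigma_0 = \Sigma_1$, the pencil has $\Lambda = I$ and $\Sigma_Z = VV^\top = \Sigma$, which gives the same interpolant.

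No step here is a real obstacle. The only care needed is to confirm that the earlier theorem places no hypothesis requiring $m_0 \neq m_1$, and it does not.
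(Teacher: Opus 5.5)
Your proposal is correct and matches the paper's approach: the paper gives no separate proof, and the corollary is exactly the specialization $m_0 = m_1 = m$ of Theorem~\ref{thm:existence-of-straight-line-interpolants-between-gaussians}. That specialization gives $v \equiv 0$ and the identity flow, and your cross-check via Theorem~\ref{thm:straight-line-gaussians-multivariate} (with $\Lambda = I$, hence $\Sigma_Z = \Sigma$) is consistent with the paper's remark pointing to either construction.
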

\begin{remark}
  At this stage, one might be tempted to generalize the above strategy to pairs of measures
  \begin{equation*}
    P_0 = \mathcal{N}(m_0, \Sigma_0) \, , \quad P_1 = \mathcal{N}(m_1, \Sigma_1) \, ,
  \end{equation*}
  with $m_i \in \Rd$ and distinct $\Sigma_i \in \R^{d \times d}_{++}$ for $i = 0,1$.
  Unfortunately, this is not immediately possible.
  Indeed, consider matrix valued schedules $(A_t)_t, (B_t)_t, (C_t)_t \in \R^{d \times d}$ satisfying the boundary conditions
  \begin{align*}
    A_0 &= I_d \, , \,\, A_1 = 0 \, , \\
    B_0 &= 0   \, , \,\, B_1 = I_d \, , \\
    C_0 &= 0   \, , \,\, C_1 = 0 \, .
  \end{align*}
  and draw $Z \sim \mathcal{N}(m_2, \Sigma_2)$ with $m_2 \in \Rd$ and $\Sigma_2 \in \R^{d \times d}_{++}$.
  Now set
  \begin{equation*}
    X_t = A_t \, X_0 + B_t \, X_1 + C_t \, Z \, ,
  \end{equation*}
  and, as done above, attempt to find $(A_t)_t, (B_t)_t, (C_t)_t$ such that $\textup{Cov}(\dot X_t, X_t) \equiv 0$.
  Define $\Sigma_t \coloneq \textup{Var}(X_t) = \textup{Cov}(X_t, X_t)$
  and by Lemma~\ref{lemma:mean-squared-diff-of-Sigma_t} note that we have $\dot \Sigma_t = \textup{Cov}(\dot X_t, X_t) + \textup{Cov}(X_t, \dot X_t)$. Since the covariance is symmetric, we obtain
  \begin{equation*}
    \dot \Sigma_t \equiv 0 \, .
  \end{equation*}

  In particular, this implies that $\Sigma_0 = \Sigma_1$ contradicting that the $\Sigma_i$ are distinct.
\end{remark}

\begin{restatable}{proposition}{PropOneDGaussians}\label{prop:straight-line-gaussians-1d}
    Let $P_0 = \mathcal{N}(m_0, \sigma_0^2)$ and $P_1 = \mathcal{N}(m_1, \sigma_1^2)$ be two univariate Gaussian measures with means $m_i \in \R$ and variances $\sigma_i^2 > 0$ for $i = 0,1$.
    Set
    \begin{align*}
        F(t, x, y, z) &= (1 - t) \, x + t \, y + \sqrt{2 \, t \, (1 - t)} \, z \, ,\\
        Q &= \mathcal{N}(0, \sigma_0 \sigma_1).
    \end{align*}
    Then, the generalized interpolant $(F, Q)$ satisfies
    \begin{equation*}
        (F, Q) \in \mathcal{F}_{\textup{SL}}(P_0, P_1) \, .
    \end{equation*}
\end{restatable}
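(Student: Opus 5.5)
I will follow the template of the proof of Theorem~\ref{thm:existence-of-straight-line-interpolants-between-gaussians}. The interpolant induces
\[
X_t = (1-t)X_0 + tX_1 + \sqrt{2t(1-t)}\,Z,
\]
with $X_0\sim\mathcal N(m_0,\sigma_0^2)$, $X_1\sim\mathcal N(m_1,\sigma_1^2)$ and $Z\sim\mathcal N(0,\sigma_0\sigma_1)$ independent. Hence $(X_t,\dot X_t)$ is jointly Gaussian for each $t\in(0,1)$, and Gaussian conditioning gives
\[
v(t,x) = (m_1-m_0) + \frac{\textup{Cov}(\dot X_t,X_t)}{\textup{Var}(X_t)}\,(x-m_t), \qquad m_t \coloneq (1-t)m_0+tm_1 .
\]
Both ratio ingredients are computed directly.

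The variance is
\[
\sigma_t^2 = (1-t)^2\sigma_0^2 + t^2\sigma_1^2 + 2t(1-t)\sigma_0\sigma_1 = \big((1-t)\sigma_0 + t\sigma_1\big)^2 .
\]
This perfect square is the reason for choosing the geometric mean $\sigma_0\sigma_1$. So the standard deviation is affine in $t$, strictly positive, and $\textup{Var}(X_t)$ is non-degenerate on all of $[0,1]$.

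For the covariance, I use Lemma~\ref{lemma:mean-squared-diff-of-Sigma_t}, which gives $\textup{Cov}(\dot X_t,X_t)=\tfrac12\dot\sigma_t^2=\sigma_t\dot\sigma_t$. Consequently
\[
v(t,x) = (m_1-m_0) + \frac{\sigma_1-\sigma_0}{(1-t)\sigma_0+t\sigma_1}\,(x-m_t).
\]
This field is smooth on $[0,1]\times\R$, and it is Lipschitz in $x$ uniformly in $t$ with constant at most $|\sigma_1-\sigma_0|/\min(\sigma_0,\sigma_1)$. It is unbounded in $x$, just as in the identical-covariance theorem, so I will treat it the same way that proof does.

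Next I verify straightness, either with the explicit flow or with Proposition~\ref{prop:reformulaton-1}. The natural candidate flow is
\[
\phi_t(x) = m_t + \frac{\sigma_t}{\sigma_0}\,(x-m_0),
\]
which is affine in $t$ because $m_t$ and $\sigma_t$ are. Differentiating gives
\[
\partial_t\phi_t(x) = (m_1-m_0) + \frac{\dot\sigma_t}{\sigma_0}(x-m_0) = (m_1-m_0) + \frac{\dot\sigma_t}{\sigma_t}\big(\phi_t(x)-m_t\big) = v_t(\phi_t(x)),
\]
and $\phi_0=\mathrm{id}$. By uniqueness of Lipschitz ODE solutions, this $\phi$ is the flow, so $\partial_t^2\phi\equiv0$. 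Alternatively, one can check $D_tv_t=0$ directly: with slope $k_t=\dot\sigma/\sigma_t$, we have $\dot k_t=-k_t^2$, and the terms cancel.

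The remaining step is membership in $\mathcal S(P_0,P_1)$ and the regularity required by Corollary~\ref{corollary:straightness} and Remark~\ref{remark:reduced_regularit}. This is where I expect the only real subtlety. The derivative $\dot X_t$ contains the term
\[
\frac{1-2t}{\sqrt{2t(1-t)}}\,Z,
\]
which blows up at the endpoints $t=0,1$. It is, however, integrable in $t$, so sample paths lie in $W^{1,p}$ for $p<2$; this is exactly the range allowed by the definition of $\mathcal F$. The integrability condition $\E\int_0^1\|\dot X_t\|\,dt<\infty$ also holds. Meanwhile $v$ itself stays bounded on compacts and continuous up to the endpoints, since $\textup{Cov}(\dot X_t,X_t)=\sigma_t\dot\sigma_t$ remains finite there. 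Finally, $\phi_t$ is an affine homeomorphism with $\sigma_t/\sigma_0>0$, and the marginal densities exist. Together these give $(F,Q)\in\mathcal F_{\textup{SL}}(P_0,P_1)$.
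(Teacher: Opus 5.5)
This is essentially the paper's proof: Gaussian conditioning, the perfect square $\sigma_t^2=((1-t)\sigma_0+t\sigma_1)^2$, and $\textup{Cov}(\dot X_t,X_t)=\sigma_t\dot\sigma_t$, except that you certify straightness by exhibiting the flow $\phi_t(x)=m_t+\tfrac{\sigma_t}{\sigma_0}(x-m_0)$ and invoking ODE uniqueness, which checks \eqref{eq:no-acceleration-condition} directly and is slightly cleaner than the paper's route of computing $D_tv_t=0$ and passing through Corollary~\ref{corollary:straightness} and Remark~\ref{remark:reduced_regularit}. One correction: in Theorem~\ref{thm:existence-of-straight-line-interpolants-between-gaussians} the velocity is the constant $m_1-m_0$, so there is no unboundedness there to imitate; for $\sigma_0\neq\sigma_1$ your $v$ genuinely violates the literal boundedness clause of Definition~\ref{def:S-P_0-P_1}, which the paper's proof also passes over silently, and the fix is to read that clause as local boundedness, which is all the representation formula behind Corollary~\ref{corollary:generalized-interpolant-push-messure-forward} requires.
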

\begin{proof}
    The generalized interpolant $(F, Q)$ induces the process
    \begin{equation}
        X_{t}= (1 - t) X_{0}+t X_{1} + \sqrt{2t(1-t)}Z,
    \end{equation}
    where $X_{0}\sim\mathcal{N}(m_{0},\sigma^{2}_{0}),\ X_{1}\sim\mathcal{N}(m_{1},\sigma_{1}^{2}),$ and $Z\sim\mathcal{N}(0,\sigma_{0}\sigma_{1})$ where all random variables are independent. We will verify that the velocity field $v$ associated to $X_\bullet$ satisfies Burgers' equation 
    \begin{equation*}
      D_t v_t \coloneq \partial_{t}v(t,x) + \left( v \cdot \nabla \right) v(t,x) = 0 \, .
    \end{equation*}
    First, using Gaussianity we compute the mean and variance of $X_t$ as
    \begin{align*}
        m_{t} &\coloneq \mathbb{E}[X_{t}] \\
              &= m_{0} + t(m_{1} - m_{0}) \, , \\
        \sigma_{t}^{2} &\coloneq \mathbb{V}\mathrm{ar}(X_{t}) \\
        &= \sigma_{0}^{2}(1-t)^{2}+\sigma_{1}^{2}t^{2} + 2t(1-t)\sigma_{0}\sigma_{1} \\
        &= (\sigma_{0}(1-t)+\sigma_{1}t)^{2} \, ,
    \end{align*}
    as well as the covariance term
    \begin{align*}
    \mathrm{Cov}(\dot{X}_{t},X_{t}) &= \mathrm{Cov}(X_{1}-X_{0}+\frac{1-2t}{\sqrt{2t(1-t)}}Z,(1-t)X_{0}+tX_{1}+ \sqrt{2t(1-t)}Z)\\
    &= -(1-t)\sigma^{2}_{0}+t\sigma_{1}^{2}+ (1-2t)\sigma_{0}\sigma_{1} \, .
    \end{align*}
    Now we notice a key identity:
    \begin{align*}
    \partial_{t}(\sigma_{t}^{2}) &= \partial_{t}((\sigma_{0}(1-t) + \sigma_{1}t)^{2}) = 2(\sigma_{0}(1-t)+\sigma_{1}t)(\sigma_{1}-\sigma_{0})\\
    &= 2(-\sigma_{0}^{2}(1-t) + \sigma_{1}^{2}t+\sigma_{0}\sigma_{1}(1-t) - \sigma_{0}\sigma_{1}t)\\
    &= 2(-\sigma_{0}^{2}(1-t) + \sigma_{1}^{2}t+ (1-2t)\sigma_{0}\sigma_{1})\\
    &= 2\mathrm{Cov}(\dot{X}_{t},X_{t}) \, ,
    \end{align*}
    that yields a succinct expression for the velocity field $v$ as
    \begin{align*}
    v(t,x) &:= \mathbb{E}[\dot{X}_{t}|X_{t}=x]\\
    &= \mathbb{E}[\dot{X}_{t}] + \mathrm{Cov}(\dot{X}_{t},X_{t})\mathbb{V}\mathrm{ar}(X_{t})^{-1}(x-\mathbb{E}[X_{t}])\\
    &= \dot{m}_{t} + \frac{1}{2} \frac{\partial_{t}(\sigma_{t}^{2})}{\sigma_{t}^{2}}(x - m_{t})\\
    &= \dot{m}_{t} + \partial_{t}\ln\sigma_{t}(x-m_{t})\\
    \end{align*}
    Plugging into Burgers' equation, we have
    \begin{align*}
    \partial_{t}v(x,t) + (v\cdot\nabla)v(x,t) &= \partial_{t}v(x,t) + v(x,t)\partial_{x}v(x,t)\\
    &= \ddot{m}_{t}+\partial_{t}^{2}\ln\sigma_{t}(x-m_{t}) + \partial_{t}\ln\sigma_{t}(-\dot{m}_{t}) + (\dot{m}_{t}+\partial_{t}\ln\sigma_{t}(x-m_{t}))(\partial_{t}\ln\sigma_{t})\\
    &= \ddot{m}_{t}+\partial_{t}^{2}\ln\sigma_{t}(x-m_{t}) + (\partial_{t}\ln\sigma_{t})^{2}(x-m_{t})\\
    &= \ddot{m}_{t} + (\partial_{t}^{2}\ln\sigma_{t}+ (\partial_{t}\ln\sigma_{t})^{2})(x-m_{t}) \, .
    \end{align*}
    We can simplify the factor involving $\sigma_{t}$ in the above display as
    \begin{align*}
    \partial_{t}^{2}\ln\sigma_{t}+ (\partial_{t}\ln\sigma_{t})^{2}&= \partial_{t}\left( \frac{\dot\sigma_{t}}{\sigma_{t}}\right)+\left(\frac{\dot\sigma_{t}}{\sigma_{t}}\right)^{2}\\
    &= \frac{\ddot{\sigma_{t}}\sigma_{t}-(\dot\sigma_{t})^{2}}{\sigma_{t}^{2}} + \left(\frac{\dot\sigma_{t}}{\sigma_{t}}\right)^{2}\\
    &= \frac{\ddot\sigma_{t}}{\sigma_{t}}.
    \end{align*}
    Note here that since $\sigma_i > 0$ for $i \in \{0,1\}$ we have $\sigma_t > 0$ for each $t \in [0,1]$ and thus the above expression is well-defined.
    Now notice that $\ddot m(t) \equiv \ddot\sigma_{t} \equiv 0$. Indeed, recalling our computations above:
    \begin{align*}
        \ddot{m}_{t} &= \partial_{t}^{2}(m_{0}+t(m_{1}-m_{0})) = 0 \\
        \ddot\sigma_{t} &= \partial_{t}^{2}(\sigma_{0}(1-t) + \sigma_{1}t) = 0.
    \end{align*}
    Therefore, we substitute into the expression for Burgers' equation to conclude that
    \begin{align*}
    \partial_{t}v_{t}+(v\cdot\nabla)v &= \ddot{m}_{t}+(\partial^{2}_{t}\ln\sigma_{t}+(\partial_{t}\ln\sigma_{t})^{2})(x-m_{t})\\
    &=\ddot{m}_{t} + \frac{\ddot \sigma_t}{\sigma_t}(x-m_{t})\\
    &= 0.
    \end{align*}
    That is, we have $D_t v_t(x) \equiv 0$.
    Noting that $\phi$ and $v$ satisfy Assumption~\ref{assumptoion:regularity} and the process $X_\bullet$ has sample paths in $W^{1,1}([0,1];\Rd)$ a.s. we use Corollary~\ref{corollary:straightness} and Remark~\ref{remark:reduced_regularit} to conclude.
\end{proof}

\ThmMultivariateGaussians*
\begin{proof}
We remark that, because both covariance matrices are positive definite, the pencil $(\Sigma_0, \Sigma_1)$ is also positive definite---the pencil's eigenvalues are shared by the matrix $\Sigma_0^{-1/2}\Sigma_1\Sigma_0^{-1/2}$ (using the symmetric square root $\Sigma_0^{1/2}$), which is also SPD. Therefore, we can take $\Lambda$ as the diagonal matrix of (positive) eigenvalues of the pencil. Similarly, the pencil has nonsingular eigenvector matrix $\widetilde{V}$ defined such that $\Sigma_1 = \Sigma_0 \widetilde{V} \Lambda \widetilde{V}^{-1}$ and $\widetilde{V}^\top \Sigma_0 \widetilde{V} = I$. Thus, we know $V = \widetilde{V}^{-\top}$ satisfies the identity $\Sigma_0 = VV^\top$. We observe that $\Sigma_1$ has a similar property:
\[V^{-1}\Sigma_1 = \widetilde{V}^{\top}\Sigma_1 = \widetilde{V}^\top \Sigma_0 \widetilde{V}\Lambda \widetilde{V}^{-1} = \Lambda \widetilde{V}^{-1} = \Lambda V^{\top}.\]
Our diagonal matrix $\Lambda$ is thus positive and $V$ satisfies the constraints, making $\Sigma_Z$ symmetric positive definite. Now we define the interpolant
\[
X_t = (1-t)X_0 + tX_1 + \sqrt{2t(1-t)}Z,\quad X_i\sim \mathcal{N}(m_i, \Sigma_i),\ i=0,1,\ Z\sim\mathcal{N}(0, \Sigma_Z),\]
where the auxiliary covariance matrix $\Sigma_Z$ is defined as in the theorem statement. Defining matrix $\Xi = \Lambda^{1/2} - I$, we find that
\begin{align*}
    \CovarT &= (1-t)^2\Sigma_0 + t^2 \Sigma_1 + 2t(1-t)\Sigma_Z\\
    &=  V\left((1-t)^2 I + t^2 \Lambda + 2t(1-t)\Lambda^{1/2}\right)V^\top\\
    &= V((1-t)I + t\Lambda^{1/2})^2 V^\top\\
    &= V(I + t\Xi)^2 V^\top\\
    \dCovarT &= 2 V(I + t\Xi)\Xi V^\top\\
    \ddCovarT &= 2 V\Xi^2 V^\top.
\end{align*}
Further, one can find that
\begin{align*}
    \mathrm{Cov}(\dot X_t, X_t) &= \mathrm{Cov}\left(X_1 - X_0 + \frac{1-2t}{\sqrt{2t(1-t)}}Z,\ (1-t)X_0 + tX_1 + \sqrt{2t(1-t)}Z\right)\\
    &= -(1-t)\Sigma_0 + t\Sigma_1 + (1-2t)\Sigma_Z\\
    &= V(-(1-t) I + t\Lambda + (1-2t)\Lambda^{1/2})V^\top\\
    &= V(\Xi + t\Xi^2)V^\top\\
    &= V(I + t\Xi)\Xi V^\top\\
    &= \frac{1}{2} \dCovarT.
\end{align*}

Then, we express the conditional velocity field as
\begin{align*}
    v(t,x) &= \mathbb{E}[\dot X_t | X_t = x]\\
    &= \mathbb{E}[\dot{X}_t ] + \mathrm{Cov}(\dot{X}_t, X_t )\mathrm{Var}(X_t)^{-1} (x - \mathbb{E}[X_t])\\
    &= \dot{m}_t + \frac{1}{2} \dCovarT \CovarT^{-1}(x-m_t).
\end{align*}
With the intention of making Burgers' equation $D_t v_t = 0$ explicit in this example, we compute
\begin{align*}
    2\partial_t v &= 2\ddot{m}_t + \ddCovarT \CovarT^{-1}(x-m_t) - \dCovarT \CovarT^{-1}\dCovarT \CovarT^{-1}(x-m_t) + \dCovarT\CovarT^{-1} (-\dot{m}_t)\\
    &= 2\ddot{m}_t + (\ddCovarT + \dCovarT \CovarT^{-1}\dCovarT)\CovarT^{-1}(x-m_t) - \dCovarT\CovarT^{-1}\dot{m}_t.
\end{align*}
Similarly, we calculate $\nabla v = \frac12 \dCovarT\CovarT^{-1}$, giving
\[2(v\cdot\nabla)v = \dCovarT\CovarT^{-1}\dot m_t + \frac{1}{2}\dCovarT\CovarT^{-1}\dCovarT\CovarT^{-1}(x-m_t).\]
Using these above computations, we can write
\[2(\partial_t {v} + (v\cdot\nabla)v) = 2\ddot{m}_t + \left(\ddCovarT - \frac{1}{2}\dCovarT\CovarT^{-1}\dCovarT\right)\CovarT^{-1}(x-m_t).\]
We know, however, that mean $m_t$ has the form $m_t = (1-t)m_0 + tm_1$, making $\ddot{m}_t \equiv 0$ by construction. Further, we can expand the covariance term to see that
\begin{align*}
    \frac{1}{2}\dCovarT\CovarT^{-1}\dCovarT &= 2V(I + t\Xi)\Xi(I + t\Xi)^{-2}(I + t\Xi)\Xi V^\top\\
    &= 2V \Xi^2 V^\top\\
    &\equiv \ddCovarT,
\end{align*}
where the cancellations are due to the commutativity of diagonal matrices. Therefore, we can conclude $\ddCovarT - \frac{1}{2}\dCovarT\CovarT^{-1}\dCovarT\equiv 0$ which implies that
\begin{equation*}
  D_t v_t(x) = \partial_t v_t(x) + (v\cdot\nabla)v_t(x) \equiv 0,
\end{equation*}
Noting that $\phi$ and $v$ satisfy Assumption~\ref{assumptoion:regularity} and the process $X_\bullet$ has sample paths in $W^{1,1}([0,1];\Rd)$ a.s. we use Corollary~\ref{corollary:straightness} and Remark~\ref{remark:reduced_regularit} to conclude.
\end{proof}

\PropNonLipschitz*
\begin{proof}
    For the remainder of the proof, suppose $t\neq \tau$.
    First, consider the case $t < \tau$.
    We have
    \begin{align*}
        v_t(x) &= \mathbb{E}[\dot{X}_t | X_t = x]\\
        &= \mathbb{E}\left[-\frac{1}{\tau} X_0 \, \Big| \, \left(1-\frac{t}{\tau}\right)X_0 = x\right]\\
        &= -\frac{1}{\tau} \frac{\tau}{\tau - t} \mathbb{E}\left[ \frac{\tau-t}{\tau} X_0 \, \Big| \, \frac{\tau - t}{\tau} X_0 = x \right]\\
        &= \frac{x}{t - \tau}.
    \end{align*}
    using linearity in the third line and the definition of the conditional expectation in the fourth.
    Using similar reasoning, we can compute
    \begin{align*}
        \mathcal{C}_t(x) &\coloneq \mathbb{E}[\dot{X}_t \otimes \dot{X}_t|X_t = x]\\
        &= \frac{1}{\tau^2}\mathbb{E}\left[X_0 \otimes X_0 \, \Big| \left(1-\frac{t}{\tau}\right)X_0 = x \right]\\
        &= \frac{x\otimes x}{(t - \tau)^2} \\
        &= v_t(x)\otimes v_t(x) \, .
    \end{align*}
    Similarly, for the case $t > \tau$ we compute $v_t(x) = \frac{x}{t-\tau}$ and $\mathcal{C}_t(x) = v_t(x)\otimes v_t(x)$.
    Thus, for all $t\in[0,1]\setminus\{\tau\}$ we have
    \begin{equation*}
        \Pi_t(x) = \mathcal{C}_t(x) - v_t(x)\otimes v_t(x) \equiv 0.
    \end{equation*}
    On the other hand, the acceleration field $a_t(x) = \mathbb{E}[\ddot{X}_t | X_t = x] = 0$ is trivially vanishing.
    Therefore, the PDE~\eqref{eqn:reformulation_straightness_PDE_equiv} is trivially satisfied for all $t\in[0,1]\setminus\{\tau\}$.
    The velocity field $v_t(x) = \frac{x}{t - \tau}$, however, is not Lipschitz in $x$ uniformly in $t$. Indeed, for any $x \neq y \in \Rd$ note that
    \begin{equation*}
        \frac{\|v_t(x) - v_t(y)\|}{\|x-y\|} = \frac{1}{|t-\tau|},
    \end{equation*}
    and the right hand side diverges as $t \to \tau$.
\end{proof}
 
\section{Proofs for Section~\ref{sec:impossibility}}\label{subsec:proofs-impossibility}

\ThmNonExistMixUniforms*
\begin{proof}[Proof of Theorem~\ref{thm:non-existence-for-mixture-of-uniforms}]
  Let $S_0, S_1$ be as in Definition~\ref{def:disconnected-support} with $d=1$. By shifting the origin if necessary, we can assume without loss of generality that
  \begin{equation}
    \label{eq:origin-swift}
    \sup S_0 \eqcolon s_0 < 0 < s_1 \coloneq \inf S_1 \, .
  \end{equation}
  Suppose $\mathcal{S}_{\textup{SL}}(P, P) \neq \emptyset$ and take any $X_\bullet \in \mathcal{S}(P, P)$. We will work towards a contradiction.

  \subsubsection*{Step 1: Flow is continuous \& globally injective}
  Suppose that $v: [0,1] \times \R \to \R$ is the velocity field associated to the process $X_\bullet$, see Definition~\ref{def:straight-line-process-and-straight-line-interpolant}, and $\phi$ is the flow map driven by $v$. Now since
  \begin{equation*}
    \partial_t^2 \phi(t, x) = 0 \, ,
  \end{equation*}
  by Proposition~\ref{prop:reformulaton-1} we have that
  \begin{equation*}
    \phi(t, x) = x + t \, v(0, x) \, .
  \end{equation*}
  Because $v$ satisfies the assumptions of Definition~\ref{def:straight-line-process-and-straight-line-interpolant} then by \cite[Theorem 2.5]{teschl2012ordinary} we have
  \begin{equation}
    \label{eq:flow-is-non-intersecting-lines}
    x \neq x' \implies \phi(t, x) \neq \phi(t, x') \quad \textup{ for all } t \in [0,1] \, .
  \end{equation}
  Indeed, the initial value problem (IVP)
  \begin{equation*}
    \begin{cases}
      \frac{\dd}{\dd s} y(s) = v(s, y(s)) \\
      y(t) = z \, ,
    \end{cases}
  \end{equation*}
  admits unique local solutions for every $(t, z) \in [0,1] \times \R$. If $\phi(t, x) = \phi(t, x')$ for some $t_\star \in [0,1]$ and $x \neq x'$ then the curves
  \begin{equation*}
    y(t) = \phi(t, x) \, , \quad y'(t) = \phi(t, x') \, ,
  \end{equation*}
  are both local solutions to the above IVP with $t = t_\star$ and $z = \phi(t_\star, x) = \phi(t_\star, x')$ so they coincide in a neighborhood of $t_\star$. By a continuation argument, it must be  $y(t) = y'(t)$ for all $t\in [0,1]$ and so in particular $y(0) = y'(0)$ implying that $x = x'$, contradicting our assumption.

  Finally, let us note that, for each $t \in [0,1]$, the flow map $\phi(t, \cdot): \R \to \R$ is continuous. This follows from~\cite[Theorem 2.9]{teschl2012ordinary} and the assumed properties of $v$, see Definition~\ref{def:straight-line-process-and-straight-line-interpolant}. Note that the aforementioned theorem furnishes continuity of the two-point flow map that solves the above IVP from time $s$ to a later time $t$, which is combined with global existence on $[0,1]$, local uniqueness and a standard patching argument to obtain continuity of the map $x \mapsto \phi(t, x)$ for any $t \in [0,1]$.

  \subsubsection*{Step 2: Topological Obstruction}
  We claim that for  $P$-almost every $x \in S_i$ we have that
  \begin{equation}
    \label{eq:closure-of-modes-under-flow}
    \phi(t, S_i) \subseteq \closedconv S_i \textup{ for } i \in \{0,1\} \, .
  \end{equation}
  and all $t \in [0,1]$.
  In fact, since $S_i$ is open, $\phi(t, \cdot) : \R \to \R$ is a continuous map for each $t \in [0,1]$ (see \step{1}) and $\closedconv S_i$ is closed, it follows that the above display holds if and only if\footnote{
    For any open set $S \subset \Rd$, closed set $A \subseteq \Rd$ and continuous map $f: S \to \Rd$, if $f(x) \in A$ for Lebesgue a.e. $x \in S$ then $f(x) \in A$ for all $x \in S$. Indeed, the set $E = \{ x \in S : f(x) \notin A \}$ is open and has zero Lebesgue measure, so it must be empty.
  } it holds for all $x \in S_i$.
  Notice, here, that since the $\closedconv S_i$ are convex sets and by \step{1} the paths $t \mapsto \phi(t, x)$ are lines, for all $x$, it follows that
  \begin{equation*}
    \phi(t, S_i) \subseteq \closedconv S_i \iff \phi(1, S_i) \subseteq \closedconv S_i \, \quad \textup{for} \, i \in \{0,1\} \, .
  \end{equation*}
  Now by Corollary~\ref{corollary:generalized-interpolant-push-messure-forward} we have that
  \begin{equation*}
    \phi(1, \cdot)_\sharp P = P \, ,
  \end{equation*}
  and therefore it must be that
  \begin{equation}
    \label{eq:imgae-of-modes-under-flow}
    \phi(1, S_i) \subseteq \closedconv S_0 \cup \closedconv S_1 \textup{ for } i \in \{0,1\} \, .
  \end{equation}
  Indeed, if $x \in S_i$ satisfies $\phi(1, x) \notin \closedconv S_0 \cup \closedconv S_1$ then, by continuity of $x \mapsto \phi(1, x)$, there is a neighborhood $x \in U \subseteq S_i$ such that $\phi(1, U) \cap \left( \closedconv S_0 \cup \closedconv S_1 \right) = \emptyset$. Using the properness of the support of $P$, see Definition~\ref{def:disconnected-support}, we conclude that $\phi(1, \cdot)_\sharp P$ has support on a set disjoint from $\closedconv S_0 \cup \closedconv S_1$ and therefore it cannot be equal to $P$. Finally, recalling that the set $S_i$ is connected and the map $x \mapsto \phi(1, x)$ is continuous and using that the sets $\closedconv S_0$ and $\closedconv S_1$ are disjoint and by equation~\ref{eq:imgae-of-modes-under-flow} the set $\phi(1, S_i)$ lies in their union, it must be that $\phi(1, S_i) \subseteq \closedconv S_j$ for some $j \in \{0,1\}$, see~\cite[Lemma 23.2]{munkres2000topology}. By the global injectivity of the flow, it must be that $\phi(1, S_i) \subseteq \closedconv S_i$ for $i \in \{0,1\}$, proving the expression~\eqref{eq:closure-of-modes-under-flow}.

  \subsubsection*{Step 3: A no-go zone}
  Using~\eqref{eq:closure-of-modes-under-flow}, we have that
  \begin{equation*}
    \phi(t, S_i) \in \closedconv S_i \, \textup{ for } i \in \{0,1\} \, .
  \end{equation*}
  Hence, for all $x \in \Rd$, we have
  \begin{equation}
    \label{eq:no-go-zone}
    \phi(t, x) \notin G \coloneq \left(s_0, \, s_1 \right) \, .
  \end{equation}
    Here, recall that
  $\sup S_0 = s_0 < 0 < \inf S_1 = s_1$,\footnote{
    Here, we are using the following well-known representation: $x \in \closedconv S$ if and only if $x$ lies in the closure of ${\textup{conv} \, S}$ and $z \in \textup{conv} \, S$ if there exist points $y_1, \ldots, y_k \in S$ and weights $\lambda_1, \ldots, \lambda_k \in [0,1]$ with $\sum_{i=1}^k \lambda_i = 1$ such that $z = \sum_{i=1}^k \lambda_i y_i$. In particular, if $S$ is a subset of $\R$ then $\closedconv S \subseteq [\inf S, \sup S]$.
  } and we call $G$ the \emph{no-go zone}.
  Now, by Corollary~\ref{corollary:generalized-interpolant-push-messure-forward}, the process $X_\bullet \in \mathcal{S}(P_0, P_1)$ satisfies the distributional equality
  \begin{equation*}
    \phi(t, X_0) \overset{d}{=} X_t \, .
  \end{equation*}
  Combining this equivalence of distributions with~\eqref{eq:no-go-zone}, we have that
  \begin{equation}
    \label{eq:prob-1-at-each-t}
    \mathbb{P} \left( X_t \notin G \right) \textup{ for each } t \in [0,1] \, ,
  \end{equation}
  and intersecting over the rationals $\Q$ we get
  \begin{equation*}
    \mathbb{P}\left( \, \forall \, t \in \Q \cap [0,1], \, X_t \notin G \right) = 1 \, .
  \end{equation*}
  By continuity of the sample paths and openness of $G$, if $\exists \, \omega \in \Omega$ such that $X_t(\omega) \in G$ for some $t \in [0,1]$ then there exists an open set $t \in V \subset [0,1]$ such that $X_s(\omega) \in G$ for all $s \in V$. By density of $\Q$ in $\R$ there exists $t' \in V \cap \Q$ such that $X_{t'}(\omega) \in G$.
  It follows that
  \begin{equation*}
    \mathbb{P}\left( \, \exists \, t \in [0,1], \, X_t \in G \right) \leq \mathbb{P}\left( \, \exists \, t \in \Q \cap [0,1], \, X_t \in G \right) = 0 \, ,
  \end{equation*}
  which is equivalent to
  \begin{equation}
    \label{eq:prob-of-no-go-zone}
    \mathbb{P}\left( \, \forall \, t \in [0,1], \, X_t \notin G \right) = 1 \, .
  \end{equation}

  \subsubsection*{Step 4: Contradiction}
  Finally, note that with probability $P_1(S_0) \cdot P_1(S_1)$ we sample a pair $(X_0, X_1) \in S_0 \times S_1$.
  
  Now by continuity of the sample paths of $X_\bullet$ and the intermediate value theorem, there exists $t^* \in (0, 1)$ such that
  \begin{equation*}
    X_{t^*} \in G = \left(s_0, \, s_1 \right) \, .
  \end{equation*}
  In other words, we have shown that
  \begin{equation*}
    \mathbb{P}\left( \, \exists \, t \in [0,1], \, X_t \in G \right) \geq P_1(S_0) \cdot P_1(S_1) > 0 \, ,
  \end{equation*}
  contradicting equation~\eqref{eq:prob-of-no-go-zone} and completing the proof.
\end{proof}

\ThmNonExistHighD*
\begin{proof}
  \step{1} and \step{4} in the proof of Theorem~\ref{thm:non-existence-for-mixture-of-uniforms} extend verbatim.
  Therefore, it suffices to provide an alternate no-go zone $G \subseteq \Rd$ satisfying the following properties:
  \begin{enumerate}
    \item $G \subseteq \Rd$ is an open set,
    \item We have $\mathbb{P}(X_t \in G) = 0$ for all $t \in [0,1]$ and,
    \item For any (deterministic) path $\gamma \in C^0([0,1]; \Rd)$ such that $\gamma(0) \in S_0$ and $\gamma \in S_1$ there is a $0 < t < 1$ such that $\gamma(t) \in G$.
  \end{enumerate}
  We claim that the set
  \begin{equation*}
    G = \Rd \setminus \left( \closedconv S_0 \cup \closedconv S_1 \right) \, ,
  \end{equation*}
  satisfies the above properties. Indeed, since the sets $\closedconv S_0$ and $\closedconv S_1$ are closed the set $G$ is open. Moreover, assume any path $\gamma \in C^0([0,1]; \Rd)$ such that $\gamma(0) \in S_0$ and $\gamma(1) \in S_1$ does not enter $G$ at any $t \in [0,1]$. Define the last exit and first entry times of $\closedconv S_0$ and $\closedconv S_1$ respectively as
  \begin{align*}
    t_0 = \sup \{ t \in [0,1] : \gamma(t) \in \closedconv S_0 \} \, , \quad t_1 = \inf \{ t \in [0,1] : \gamma(t) \in \closedconv S_1 \} \, ,
  \end{align*}
  and note that $\gamma(t) \notin G$ for all $t$ implies that $t_0 = t_1$. But now for any $\epsilon > 0$ we have
  $\gamma(t_0 - \epsilon) \in \closedconv S_0$ and $\gamma(t_1 + \epsilon) \in \closedconv S_1$ therefore
  \begin{equation*}
    \left| \gamma(t_0 - \epsilon) - \gamma(t_1 + \epsilon) \right| \geq \textup{dist}(\closedconv S_0, \closedconv S_1) > 0 \, .
  \end{equation*}
  In other words,
  \begin{equation*}
    \lim_{t\to t_0+} \gamma(t) \neq \lim_{t \to t_0-} \gamma(t)
  \end{equation*}
  which contradicts the continuity of $\gamma$, completing the proof of property $3$.

  Finally, for property $2$ we argue as follows: recall that by \step{1} we have the straight-line representation of the flow map
  \begin{equation*}
    \phi(t, x) = (1-t) x + t \phi(1, x) \, .
  \end{equation*}
  Now, notice that we must have
  \begin{equation}
    \label{eq:phi-supported-on-convS0-union-convS1}
    \phi\left(1, S_0 \right) \subseteq \closedconv S_0 \cup \closedconv S_1 \, ,
  \end{equation}
  which follows from the following facts: $x \mapsto \phi(1, x)$ is continuous, \step{1} in the proof of Theorem~\ref{thm:non-existence-for-mixture-of-uniforms}, $\phi(1, \cdot)_\sharp P = P$ and $P$ is supported on $S_0 \cup S_1$, see \step{2} in the proof of Theorem~\ref{thm:non-existence-for-mixture-of-uniforms} for details. Moreover, again by the continuity of $\phi(1, \cdot): \Rd \to \Rd$ and the connectedness of each set $S_i$, it follows that the set $\phi(1, S_0)$ must be also connected, see~\cite[Lemma 23.2]{munkres2000topology}. Since $\closedconv S_0$ and $\closedconv S_1$ are disconnected and equation~\ref{eq:phi-supported-on-convS0-union-convS1} holds, it follows that
  \begin{equation*}
    \phi\left(1, S_0 \right) \subseteq \closedconv S_0  \quad \textup{or} \quad \phi\left(1, S_0 \right) \subseteq \closedconv S_1  \, .
  \end{equation*}
  The latter, however, is impossible since $P(S_0) \neq P(S_1)$ and $\phi(1, \cdot)_\sharp P = P$.
  Therefore, we have
  \begin{equation*}
    \phi(1, S_0) \subseteq \closedconv S_0 \, ,
  \end{equation*}
  which by convexity implies that
  \begin{equation*}
    \phi(t, S_0) \subseteq \closedconv S_0 \, ,
  \end{equation*}
  for all $t \in [0,1]$. Mutatis mutandis, we can show that for all $t \in [0,1]$ we have
  \begin{eqnarray}
    \phi(t, S_1) \subseteq \closedconv S_1 \, .
  \end{eqnarray}
  Finally, recalling the distributional equality
  \begin{equation*}
    X_t \overset{d}{=} \phi(t, X_0) \, , \quad X_0 \sim P
  \end{equation*}
  discussed in \step{3} of the proof of Theorem~\ref{thm:non-existence-for-mixture-of-uniforms} we obtain that
  \begin{equation*}
    \mathbb{P}(X_t \in G) = 0 \textup{ for all } t \in [0,1] \, ,
  \end{equation*}
  as $\phi(t,X_0)$ has no support on $G = \mathbb{R}^d \setminus (\closedconv S_0 \cup \closedconv S_1)$.
\end{proof}

\begin{definition}
  \label{def:up-crossing-number}
  For any real numbers $a < b$ and any stochastic process $X_\bullet : [0,1] \to \R$ with continuous sample paths we define the \emph{up-crossing number} of $X_\bullet$ with respect to the interval $(a,b)$ to be the random variable with realizations
  \begin{align*}
    N_{X_\bullet}(a,b;\omega) = \sup \Big\{
    & n \in \N : \exists \, t_1 < s_1 < t_2 < s_2 < \ldots < t_n < s_n \in [0,1] \\
    &\textup{ such that } X_{t_i}(\omega) \leq a < b \leq X_{s_i}(\omega) \textup{ for each } i = 1, \ldots, n \Big\} \, .
  \end{align*}
\end{definition}
Intuitively, this number $N_{X_\bullet}$ is the number of times a specific realization of the stochastic process crosses through the interval $[a,b]$.
\begin{remark}
  \label{rk:up-crossing-number-time-dependent}
  The above definition can easily be generalized to a time dependent interval $(a_t, b_t)$.
  Specifically, if $t \mapsto a_t$ and $t \mapsto b_t$ are continuous functions such that $a_t < b_t$ for all $t \in [0,1]$, then we can define $N_{X_\bullet}(a_\bullet, b_\bullet)$ by replacing the condition $X_{t_i}(\omega) \leq a < b \leq X_{s_i}(\omega)$ with $X_{t_i}(\omega) \leq a_{t_i} < b_{t_i} \leq X_{s_i}(\omega)$ in the above definition.
\end{remark}

\begin{restatable}{lemma}{LemmaModulusContinuity}
  \label{lemma:modulus-of-continuity-control}
  Let $ a < b$ be real numbers defining an interval $I \coloneq (a,b) \subseteq \R$ and let $X_\bullet$ be a stochastic process with continuous sample paths such that
  \begin{equation*}
    \mathbb{P}\left( X_t \in I \right) \leq \epsilon \textup{ for each } t \in [0,1] \, .
  \end{equation*}
  Moreover, suppose there exists $\alpha, \beta > 0$ such that, for all $\theta, \delta > 0$, we have the concentration inequality
  \begin{equation}
    \label{eq:modulus-of-continuity-concentration}
    \mathbb{P}\Big( \kappa_{X_\bullet}(\delta) \geq \theta \Big) \lesssim \frac{\delta^\alpha}{\theta^\beta}\, .
  \end{equation}
  Then, we have
  \begin{equation*}
    \mathbb{P}\Big( N_{X_\bullet}(a,b) \geq 1 \Big) \lesssim \left( \frac{\epsilon^\alpha}{(b-a)^\beta} \right)^{\frac{1}{\alpha + 1}} \, ,
  \end{equation*}
  where the constant depends on $\alpha$.
\end{restatable}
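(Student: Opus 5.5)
The plan is a time-discretization argument. We trade the marginal smallness $\mathbb{P}(X_t\in I)\le\epsilon$, which only controls finitely many fixed times at once, against the modulus-of-continuity bound~\eqref{eq:modulus-of-continuity-concentration}, which controls what happens between grid times. We then optimize over the mesh size.

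Fix $\delta\in(0,1]$ and the grid $t_k = k\delta$, $k=0,1,\dots,\lceil 1/\delta\rceil$ (truncated at $1$), which has at most $1/\delta+2\lesssim 1/\delta$ points. Suppose $N_{X_\bullet}(a,b;\omega)\ge 1$, so there are $t<s$ with $X_t(\omega)\le a$ and $X_s(\omega)\ge b$. Set $w=\inf\{r\ge t: X_r(\omega)\ge b\}$ and $u=\sup\{r\le w: X_r(\omega)\le a\}$. By continuity of sample paths, $X_u=a$, $X_w=b$ and $X_r\in I$ for all $r\in(u,w)$. There are two cases:
\begin{itemize}
\item If $(u,w)$ contains a grid point $t_k$, then $X_{t_k}\in I$.
\item Otherwise $w-u<\delta$, and $|X_w-X_u| = b-a$ forces $\kappa_{X_\bullet}(\delta;\omega)\ge b-a$.
\end{itemize}
Hence
\begin{equation*}
  \{N_{X_\bullet}(a,b)\ge 1\}\subseteq \bigcup_k\{X_{t_k}\in I\}\,\cup\,\{\kappa_{X_\bullet}(\delta)\ge b-a\}.
\end{equation*}

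A union bound together with the hypotheses gives
\begin{equation*}
  \mathbb{P}\big(N_{X_\bullet}(a,b)\ge1\big)\lesssim \frac{\epsilon}{\delta} + \frac{\delta^\alpha}{(b-a)^\beta}.
\end{equation*}
We balance the two terms by choosing $\delta^{\alpha+1}=\epsilon\,(b-a)^\beta$. Then both terms equal
\begin{equation*}
  \epsilon^{\alpha/(\alpha+1)}(b-a)^{-\beta/(\alpha+1)}=\Big(\frac{\epsilon^\alpha}{(b-a)^\beta}\Big)^{\frac1{\alpha+1}},
\end{equation*}
which is the claimed bound. The implicit constant comes from the grid count and the constant in~\eqref{eq:modulus-of-continuity-concentration}.

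The step I expect to need care is not the crossing argument, which is elementary, but the admissibility of the optimal $\delta$, i.e.\ ensuring $\delta\le 1$. When $\epsilon(b-a)^\beta\le 1$ the choice above is legitimate. In the complementary regime I would take $\delta=1$ and use that $\kappa_{X_\bullet}(\delta)=\kappa_{X_\bullet}(1)$ for $\delta\ge1$, so that the concentration term is at most $(b-a)^{-\beta}\le\epsilon$. I would then check whether the remaining grid term can be absorbed into the right-hand side. If it cannot, I would note that the lemma is only ever applied (in Proposition~\ref{prop:non-existence-for-Gaussian-mixture-to-Gaussian-mixture} and Theorem~\ref{thm:non-existence-for-Gaussian-mixture}) with $\epsilon\to0$ and $b-a$ fixed, where $\delta\le1$ automatically. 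I would also double-check the measurability of the crossing event, which follows from continuity of sample paths by restricting to rational times.
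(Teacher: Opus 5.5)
Your proposal follows the paper's proof: the same mesh, the same event inclusion, a union bound, and optimization over $\delta$. Your first-hitting/last-exit choice of $u,w$ actually justifies the inclusion more cleanly than the paper does, since the paper asserts that any grid point between the two crossing times lies in $I$, which need not hold.

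The regime $\epsilon(b-a)^\beta>1$ that you leave open is not addressed by the paper either. There the grid term cannot in general be absorbed, because the target bound is then smaller than $\epsilon$. You don't need the grid in that regime: any up-crossing alone forces $\kappa_{X_\bullet}(1)\ge b-a$. Hence $\mathbb{P}\big(N_{X_\bullet}(a,b)\ge1\big)\lesssim(b-a)^{-\beta}$, and $(b-a)^{-\beta}\le\big(\epsilon^\alpha/(b-a)^\beta\big)^{1/(\alpha+1)}$ holds exactly when $(b-a)^{-\beta}\le\epsilon$.
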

\begin{proof}
  Start by considering a mesh, i.e,  a finite subset $\Pi \subseteq [0,1]$ consisting of points $t_i = i \, \delta$ for $i = 0, \ldots, \lfloor 1 / \delta \rfloor$ with $t_{\lfloor 1 / \delta \rfloor + 1} = 1$, and note the event inclusion
  \begin{equation}
    \label{eq:mesh-event-inclusion}
    \Big\{ N_{X_\bullet}(a,b) \geq 1 \Big\} \subseteq \Big\{ \exists \, t_i \in \Pi \, : \, X_{t_i} \in I \Big\} \cup \Big\{ \kappa_{X_\bullet}(\delta) \geq b - a \Big\} \, .
  \end{equation}
  In other words, trajectories $X_\bullet$ that cross through $(a,b)$ at least once must either have at least one mesh point $t_i$ such that $X_{t_i} \in (a,b)$ or move though $(a,b)$ in a time interval smaller than the mesh size $\delta$.
  Indeed, fix a realization $\omega \in \Omega$ and suppose that $N_{X_\bullet}(a, b;\omega) \geq 1$. This implies that there exists $s < t \in [0,1]$ such that
  \begin{equation*}
    X_s(\omega) \leq a < b \leq X_t(\omega) \, .
  \end{equation*}
  Now, if there exists $t_i \in \Pi$ with $s \leq t_i \leq t$, then it follows that $X_{t_i}(\omega) \in I$ by continuity of the sample path $t \mapsto X_t(\omega)$. On the other hand, if no such $t_i$ exists then
  \begin{equation*}
    | s - t | \leq \delta \, .
  \end{equation*}
  In particular, the definition of the modulus $\kappa$ gives
  \begin{equation*}
    \kappa_{X_\bullet}(\delta;\omega) \geq |X_t(\omega) - X_s(\omega)| \geq b - a \, ,
  \end{equation*}
  which completes the proof of the event inclusion~\eqref{eq:mesh-event-inclusion}.

  Now applying a union bound to the right-hand side of equation~\eqref{eq:mesh-event-inclusion}, we obtain
  \begin{align*}
    \mathbb{P}\left( \, \exists \, t \in [0,1], \, X_t \in S \right) &\leq \sum_{t_i \in \Pi} \mathbb{P}\left( X_{t_i} \in S \right) + \mathbb{P}\left( \kappa_{X_\bullet}(\delta) \geq (b-a) \right) \\
    & \lesssim \frac{1}{\delta} \, \epsilon + \frac{\delta^\alpha}{(b-a)^\beta} \, .
  \end{align*}
  Minimizing the auxiliary function
  \begin{equation*}
    f(\delta) = \frac{1}{\delta} \, \epsilon + \frac{\delta^\alpha}{(b-a)^\beta}
  \end{equation*}
  we obtain the minimizer
  \begin{equation*}
    \delta^\star = \left( \frac{(b-a)^\beta}{\alpha} \, \epsilon \right)^{\frac{1}{\alpha + 1}} \, ,
  \end{equation*}
  with minimum value
  \begin{equation*}
    f(\delta^\star) = \left( \frac{\alpha + 1}{\alpha^{\frac{\alpha}{\alpha+1}}} \right) \, \frac{\epsilon^{\frac{\alpha}{\alpha + 1}}}{(b-a)^{\frac{\beta}{\alpha + 1}}} \, .
  \end{equation*}
  Plugging $\delta^\star$ in the above bound for the event $\{X_t \in S \}$ yields the desired result.
\end{proof}

\begin{remark}
  We note that choosing $[0,1]$ for the domain of the process $X_\bullet$ is immaterial and the proof would have gone through verbatim for any $[s_0, s_1]$ with $s_0 < s_1 \in \R$.
\end{remark}

\begin{restatable}{corollary}{CorTimeDependentUpcrossing}
  \label{cor:time-dependent-up-crossing-lemma}
  Let $a, b: [0,1]  \to \R$ be continuous functions such that $a_t < b_t$ for all $t \in [0,1]$, defining the time dependent interval $I_t = (a_t, b_t)$.
  Assume that there exists a real number $\Delta I > 0$ such that $b_t - a_t \geq \Delta I$ for all $t \in [0,1]$.
  Then, under the assumptions of Lemma~\ref{lemma:modulus-of-continuity-control} we have
  \begin{equation*}
    \mathbb{P}\Big( N_{X_\bullet}(a_\bullet, b_\bullet) \geq 1 \Big) \lesssim \left( \frac{\epsilon^\alpha}{\Delta I^\beta} \right)^{\frac{1}{\alpha + 1}} \, ,
  \end{equation*}
  with the constant depending on $\alpha$.
\end{restatable}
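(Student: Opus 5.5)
The approach is to reduce to the proof of Lemma~\ref{lemma:modulus-of-continuity-control} with the fixed interval replaced by the moving one. As there, we discretize time with a mesh $\Pi = \{t_i\}$ of spacing $\delta$, together with the endpoint $1$. The aim is the event inclusion
\begin{equation*}
  \Big\{ N_{X_\bullet}(a_\bullet,b_\bullet) \geq 1 \Big\} \subseteq \Big\{ \exists\, t_i \in \Pi : X_{t_i} \in I_{t_i} \Big\} \cup \Big\{ \kappa_{X_\bullet}(\delta) \geq \Delta I - \kappa_a(\delta) \vee \kappa_b(\delta) \Big\}.
\end{equation*}
In the corollary the hypothesis reads $\mathbb{P}(X_t \in I_t) \leq \epsilon$ for each $t$.

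To prove the inclusion, fix $\omega$ with an up-crossing, so there are $s<t$ with $X_s \leq a_s$ and $X_t \geq b_t$.

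\emph{Case 1: some mesh point lies in $[s,t]$.} Consider the continuous function $g(r) = X_r - a_r$. It is $\leq 0$ at $s$, and at $t$ it is $\geq b_t - a_t > 0$. Let $r_0$ be the last zero of $g$ before $t$. On $(r_0,t]$ we have $X_r > a_r$. Now let $r_1$ be the first time after $r_0$ at which $X_r \geq b_r$. On the window $(r_0,r_1)$ the path lies strictly inside $I_r$. If a mesh point falls in $(r_0,r_1)$, we are in the first event.

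\emph{Case 2: no mesh point falls in that window.} Then $r_1 - r_0 \leq \delta$, and
\begin{equation*}
  |X_{r_1} - X_{r_0}| \geq b_{r_1} - a_{r_0} \geq \Delta I - |a_{r_1} - a_{r_0}|.
\end{equation*}
This lands in the second event. Refining the crossing to the innermost window is what handles the case where the outer interval contains mesh points but the actual traversal is fast.

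Next we control the deterministic modulus. Since $a$ and $b$ are continuous on the compact interval $[0,1]$, they are uniformly continuous. So for $\delta \leq \delta_0$ we get $\kappa_a(\delta) \leq \Delta I/2$, and the second event is contained in $\{\kappa_{X_\bullet}(\delta) \geq \Delta I/2\}$. A union bound combined with the concentration inequality~\eqref{eq:modulus-of-continuity-concentration} then gives
\begin{equation*}
  \mathbb{P}\big(N_{X_\bullet}(a_\bullet,b_\bullet) \geq 1\big) \lesssim \frac{\epsilon}{\delta} + \frac{2^\beta \delta^\alpha}{\Delta I^\beta}.
\end{equation*}
Optimizing in $\delta$ exactly as in the lemma yields $(\epsilon^\alpha/\Delta I^\beta)^{1/(\alpha+1)}$. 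The factor $2^\beta$ is absorbed into the constant.

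The main obstacle is the restriction $\delta \leq \delta_0$. It makes the constant depend on the moduli of $a$ and $b$, which the statement does not display. I would handle this in one of two ways. The first is to note that the optimal $\delta^\star$ tends to $0$ as $\epsilon \to 0$, which is the regime used later, with trivial bounds otherwise. The second, which I would try first, is to observe that in the intended application the boundaries are affine in $t$, as for the trapezoidal zone. In that case $|a_{r_1}-a_{r_0}| \leq L\delta$, and the explicit Lipschitz constant makes the dependence clear.
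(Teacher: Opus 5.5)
Your route is the paper's. Its entire proof is that the corollary ``follows Lemma~\ref{lemma:modulus-of-continuity-control} mutatis mutandis'', i.e.\ the same mesh-plus-modulus union bound. Your passage to the innermost window $(r_0,r_1)$, on which $X_r\in I_r$, is correct, and so is the estimate $X_{r_1}-X_{r_0}\ge b_{r_1}-a_{r_0}\ge \Delta I-|a_{r_1}-a_{r_0}|$. These are exactly what ``mutatis mutandis'' glosses over: for a moving interval, the lemma's inclusion with the event $\{\kappa_{X_\bullet}(\delta)\ge \Delta I\}$ is false. Three minor points:
\begin{itemize}
\item Only $\kappa_a$ is needed.
\item Your Case~1 about mesh points in $[s,t]$ plays no role; the only dichotomy is whether $(r_0,r_1)$ meets the mesh.
\item You should take $\delta\le 1$, so that the mesh has $\lesssim 1/\delta$ points.
\end{itemize}

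The step that does not go through is the last one, and no argument can rescue it: the corollary with a constant independent of $a_\bullet,b_\bullet$ is false. Let $L\ge 1$, $U\sim\mathrm{Unif}[0,L]$, $X_t\equiv U$, $a_t=L+1-(L+3)t$ and $b_t=a_t+1$, so $\Delta I=1$.
\begin{itemize}
\item Since $\kappa_{X_\bullet}\equiv 0$, the concentration hypothesis holds for every $A,\alpha,\beta$.
\item $\mathbb{P}(X_t\in I_t)\le 1/L=:\epsilon$ for every $t$.
\item Yet $X_0\le a_0$ and $X_1\ge b_1=-1$ surely, so $\mathbb{P}(N_{X_\bullet}(a_\bullet,b_\bullet)\ge 1)=1$.
\item The claimed bound is of order $L^{-\alpha/(\alpha+1)}\to 0$.
\end{itemize}
Reading Remark~\ref{rk:up-crossing-number-time-dependent} literally, with $b$ evaluated at the earlier time, does not help. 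Take $X_t=U+2t$, compress the sweep of $a_\bullet$ into $[0,\tfrac12]$ and freeze it afterwards. Then $\kappa_{X_\bullet}(\delta)=2(\delta\wedge 1)$ is deterministic and the same contradiction arises.

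In the first example $\kappa_a(\delta)=(L+3)\delta$, so your $\delta_0\asymp 1/L$ lies far below $\delta^\star\asymp L^{-1/(\alpha+1)}$. This is precisely the obstruction you identified. Consequently:
\begin{itemize}
\item Remedy (1) works only by absorbing $\delta_0$, hence $a_\bullet$, into the constant through the trivial bound.
\item Remedy (2) puts the slope of $a_\bullet$ into the constant.
\end{itemize}
Both prove a modified statement, for example: the displayed bound holds, with a constant depending on $\alpha,\beta,A$, whenever $\delta^\star\le\delta_0$ where $\kappa_a(\delta_0)\le \Delta I/2$. That corrected corollary is what your argument establishes, and you should state it that way rather than try to recover the displayed form.

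This dependence also matters downstream. For the trapezoid, the slopes are $s_i-x_i$, and $x_0,x_1$ are quantiles of $P_0$. So the dependence propagates into $\epsilon_0$ in Theorem~\ref{thm:non-existence-for-Gaussian-mixture}, beyond the parameters listed there.
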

\begin{proof}
  The proof follows Lemma~\ref{lemma:modulus-of-continuity-control} mutatis mutandis.
\end{proof}

\PropNonExistGaussian*
\begin{remark}
  This result also follows as a Corollary to Theorem~\ref{thm:non-existence-for-Gaussian-mixture}.
  Below we sketch an alternate proof that is less cumbersome and functions as a natural intermediate step between the proofs of Theorem~\ref{thm:non-existence-for-mixture-of-uniforms} and Theorem~\ref{thm:non-existence-for-Gaussian-mixture}.  
\end{remark}
\begin{proof}[Proof sketch of Proposition~\ref{prop:non-existence-for-Gaussian-mixture-to-Gaussian-mixture}]
  With an appropriate modification of the second step of the proof of Theorem~\ref{thm:non-existence-for-mixture-of-uniforms} one can show that for any process $X_\bullet \in \mathcal{S}_{\textup{SL}}(P, P)$ there exists $a \in \R$ with $0 < a < \mu$ such that the interval $I = (-a,a)$ satisfies
  \begin{equation*}
    \mathbb{P}\left( X_t \in I \right) \leq \epsilon \,\, , \quad \forall \, t \in [0,1] \, ,
  \end{equation*}
  A direct application of Lemma~\ref{lemma:modulus-of-continuity-control} then yields that
  \begin{equation*}
    \mathbb{P}\Big( N_{X_\bullet}(-a,a) \geq 1 \Big) \leq A \left( \frac{\epsilon^\alpha}{(2a)^\beta} \right)^{\frac{1}{\alpha + 1}} \, ,
  \end{equation*}
  and the right-hand side can be made arbitrarily small by taking $\epsilon \searrow 0$ which is possible by taking $\sigma \searrow 0$ and keeping $\mu$ fixed. On the other hand, we have
  \begin{equation*}
    \mathbb{P}\Big( X_0 < - \mu \textup{ and } X_1 > \mu \Big) = \frac{1}{16} \, ,
  \end{equation*}
  and then by definition of an up-crossing we get
  \begin{equation*}
    \mathbb{P}\Big( N_{X_\bullet}(-a,a) \geq 1 \Big) \geq \mathbb{P}\Big( X_0 < - \mu \textup{ and } X_1 > \mu \Big) = \frac{1}{16} \, ,
  \end{equation*}
  yielding a contradiction and completing the proof.
\end{proof}

\ThmNonExistGaussianMixture*
\begin{proof}
Let $S_0, S_1$ be as in Definition~\ref{def:disconnected-support} with $d=1$. By shifting the origin if necessary, we can assume without loss of generality that
  \begin{equation}
    \label{eq:origin-swift-connect}
    \sup S_0 \eqcolon s_0 < 0 < s_1 \coloneq \inf S_1 \, .
  \end{equation}
  Moreover, sample $X\sim P_0$ and let $x_0, x_1 \in \R$ denote the unique real numbers satisfying
  \begin{equation*}
    \mathbb{P}\Big( X \leq x_0 \Big) = P_1(S_0) \quad \textup{and} \quad \mathbb{P}\Big( X \geq x_1 \Big) = P_1(S_1) \, .
  \end{equation*}
  
  We know these are unique by the construction of $P_0$ as strictly positive and absolutely continuous. 
  
  By the fact that $P_1(S_0) + P_1(S_1) \geq 1 - \epsilon > 0$ it follows that
  \begin{equation*}
    \mathbb{P} \Big( x_0 \leq X \leq x_1 \Big) \leq \epsilon \, .
  \end{equation*}
  Working towards a contradiction, suppose there exists
  \begin{equation*}
    X_\bullet \in \mathcal{S}_{\textup{SL}}(P_0, P_1) \cap \mathcal{C}(\alpha, \beta) \, .
  \end{equation*}

  First, note that \step{1} in the proof of Theorem~\ref{thm:non-existence-for-mixture-of-uniforms} extends verbatim. We proceed by modifying \step{2} to \step{4}.
  \subsubsection*{Step 2: Topological Obstruction}
  For $x, y \in \R$ define the space-time line-segment that connects the points $(0,x)$ and $(1,y)$ by
  \begin{equation*}
    \ell(x,y) \coloneq \Big\{ (t,z) \in [0,1] \times \R \, : \, z = (y - x) \, t + x \Big\} \, ,
  \end{equation*}
  Recall that the epigraph and hypograph of a function $f : D \subseteq \R \to \R$ are defined as
  \begin{equation*}
    \textup{epi}[f] = \{ (x,y) \in D \times \R : y \geq f(x) \} \quad\textup{and}\quad \textup{hypo}[f] = \{ (x,y) \in D \times \R : y \leq f(x) \} \, .
  \end{equation*}
  We can now construct the space-time half-spaces
  \begin{equation*}
    H_+ \coloneq \textup{hypo}\big[\ell(x_0, s_0)\big] \quad\textup{and}\quad H_- \coloneq \textup{epi}\big[\ell(x_1, s_1)\big] \, .
  \end{equation*}

  Now, consider the augmented flow map
  \begin{equation*}
    \Phi : [0,1] \times \R \to [0,1] \times \R \, , \quad \Phi(t,x) = (t, \phi(t,x)) \, .
  \end{equation*}
  If we define the half-lines
  \begin{equation*}
    I_+ \coloneq \{ x \in \R \, : \, x \geq x_1 \} \, , \quad I_- \coloneq \{ x \in \R \, : \, x \leq x_0 \} \, .
  \end{equation*}
  then the goal of this step is to show that for all $t \in [0,1]$ we have
  \begin{equation}
    \label{eq:I-to-H-seperation-mixture-of-Gaussians}
    \Phi(t, I_-) \subseteq H_- \, \quad\text{and}\quad \Phi(t, I_+) \subseteq H_+ \, .
  \end{equation}
  By the global injectivity of the flow, established in \step{1} of Theorem~\ref{thm:non-existence-for-mixture-of-uniforms} we observe that it suffices to show that
  
  \begin{equation}
    \label{eq:edge-point-control-mixture-to-mixture-d-1}
    \phi(1, x_i) = s_i \, , \quad i = 0,1 \, .
  \end{equation}
  Indeed, if~\eqref{eq:edge-point-control-mixture-to-mixture-d-1} is satisfied but~\eqref{eq:I-to-H-seperation-mixture-of-Gaussians} is violated then there exists, e.g., coordinate $(t,x) \in [0,1] \times I_-$ such that the augmented flow map lies in inappropriate half-space, i.e., $\Phi(t,x) \in H_-^c$. This means the flow map $\phi(t,x) \in \R$ is above the line $\ell(x_0, s_0)$ at time $t$. Since $x < x_0$ and the curve $t \mapsto \phi(t,x)$ is continuous, it must cross the line $\ell(x_0, s_0)$ at some time $s < t$ yielding $\phi(s, x) = \phi(s, x_0)$ thus violating injectivity.

  We now establish the validity of~\eqref{eq:edge-point-control-mixture-to-mixture-d-1}. Consider first the case $i = 0$ and suppose, for the purpose of contradiction, that $\phi(1, x_0) < s_0$. By the injectivity and continuity of the flow (c.f., the above paragraph) it follows that $\phi(1, \cdot)$ is monotone: it must be that $\phi(1, x) < s_0$ for all $x < x_0$ and similarly $\phi(1, x) > s_0$ for all $x > x_0$.
  Therefore
  \begin{align}
    \mathbb{P}\Big( \phi(1, X_0) \leq \phi(1, x_0) \Big) &= \mathbb{P}\Big( X_0 \leq x_0 \Big) \nonumber\\
    &= P_0\Big((-\infty, x_0]\Big) \nonumber\\
    &= P_1(S_0) \, ,\label{eqn:greater_p1_s0}
  \end{align}
  where in the last step we have used the choice of $x_0$. By Corollary~\ref{corollary:generalized-interpolant-push-messure-forward}, however, we have
  \begin{equation*}
    X_t \overset{d}{=} \phi(t, X_0) \, , \quad \textup{ for all } t \in [0,1] \, ,
  \end{equation*}
  for $X_\bullet$ the process inducing the flow $\phi$.
  Since $X_1 \sim P_1$ we obtain
  \begin{align*}
    \mathbb{P}\Big( \phi(1, X_0) \leq \phi(1, x_0) \Big) &= \mathbb{P}\Big( X_1 \leq \phi(1, x_0) \Big) \\
    &= \mathbb{P}\Big(X_1 \leq s_0\Big) - \mathbb{P}\Big(\phi(1, x_0) < X_1 \leq s_0 \Big) \\
    &< P_1(S_0) \, ,
  \end{align*}
  and in the last step we have used that properness of the support of $P_1$,

  c.f. Definition~\ref{def:epsilon-disconnected-support}, implying that $P_1$ is positive on the interval $(\phi(1, x_0), s_0)$.
  This yields a contradiction with~\eqref{eqn:greater_p1_s0}.

  On the other hand, consider instead assuming $\phi(1, x_0) > s_0$ for contradiction. Then, by the continuity of the map $x \mapsto \phi(1, x)$, there is a real number $h > 0$ such that, if $y \in (x_0 - h, x_0 + h)$, then $\phi(1, y) > s_0$.
  By the monotonicity of $\phi(1, \cdot)$ discussed above, we can further conclude that having $y \in (x_0 - h, x_0)$ implies property $\phi(1, x) > \phi(1, y) > s_0$.
  Thus, we have
  \begin{align*}
    \mathbb{P}\Big( \phi(1, X_0) \leq \phi(1, x_0) \Big) &= \mathbb{P}\Big( \phi(1, X_0) \leq s_0 \Big) - \mathbb{P}\Big( s_0 \leq \phi(1, X_0) \leq \phi(1, x_0) \Big) \\
    &\leq \mathbb{P}\Big( \phi(1, X_0) \leq s_0 \Big) - \mathbb{P}\Big( x_0 - h \leq X_0 \leq x_0 \Big) \\
    &< \mathbb{P}\Big( \phi(1, X_0) \leq s_0 \Big) \, ,
  \end{align*}
  using positivity of $P_0$ in the last step.
  Recall that Corollary~\ref{corollary:generalized-interpolant-push-messure-forward} gives $\phi(1, \cdot)_\sharp P_0 = P_1$, and so the last inequality can be rewritten as
  \begin{align*}
    \mathbb{P}\Big( \phi(1, X_0) \leq \phi(1, x_0) \Big) < \mathbb{P}\Big( X_1 \leq s_0 \Big) = P_1(S_0) \, .
  \end{align*}
  The monotonicity of $\phi(1, \cdot)$, however, implies that
  \begin{equation*}
    \mathbb{P}\Big( \phi(1, X_0) \leq \phi(1, x_0) \Big) = \mathbb{P}\Big( X_0 \leq x_0 \Big) = P_1(S_0) \, ,
  \end{equation*}
  by the choice of $x_0$, yielding a contradiction and completing the proof of~\eqref{eq:edge-point-control-mixture-to-mixture-d-1} for $i = 0$.
  The case $i = 1$ is handled similarly.
  This completes the proof of~\eqref{eq:I-to-H-seperation-mixture-of-Gaussians}.

  \subsubsection*{Step 3: Low-go zone}
  Define the \emph{low-go zone} and its time-slices, for each $t \in [0,1]$, by
  \begin{align*} 
    G = \big( H_+ \cup H_- \big)^c \quad\textup{and}\quad G_t = \big\{ x \in \R \, : \, (t,x) \in G \big\} \, .
  \end{align*}
  Recall that Corollary~\ref{corollary:generalized-interpolant-push-messure-forward} gives us the distributional equality
  \begin{equation}
    \label{eq:interpolant-property-mixture-of-Gaussians}
    X_t \overset{d}{=} \phi(t, X_0) \, , \quad \textup{ for all } t \in [0,1] \, .
  \end{equation}
  Now, we have
  \begin{align*}
    \mathbb{P} \Big( X_t \in G_t \Big) &= \mathbb{P} \Big( \phi(t, X_0) \in G_t \Big) \\
    &= \mathbb{P} \Big( \Phi(t, X_0) \in G \Big) \\
    &= \mathbb{P} \Big( \Phi(t, X_0) \in (H_+ \cup H_-)^c \Big) \\
    &\leq \mathbb{P} \Big( X_0 \in \big(I_- \cup I_+ \big)^c \Big) \\
  \end{align*}
  using equality~\eqref{eq:interpolant-property-mixture-of-Gaussians} in the first line, the definition of $\Phi$ in the second line, the definition of $G$ in the third line and~\eqref{eq:I-to-H-seperation-mixture-of-Gaussians} in the fourth line.
  Recalling the definition of $I_-$ and $I_+$ we have
  \begin{equation*}
    \big( I_- \cup I_+ \big)^c = I_-^c \cap I_+^c = \big\{ x \in \R \, : \, x_0 < x < x_1 \big\} \, ,
  \end{equation*}
  and recalling the definition of $x_0$ and $x_1$ we get
  \begin{equation*}
    \big( I_- \cup I_+ \big)^c = \big( S_0 \cup S_1 \big)^c \, .
  \end{equation*}
  Since $S_0$ and $S_1$ are as in Definition~\ref{def:disconnected-support} we have that $P_0(S_0 \cup S_1) \geq 1 - \epsilon$ and so
  \begin{equation}
    \label{eq:low-go-zone-prob-for-each-t}
    \mathbb{P} \Big( X_t \in G_t \Big) \leq \epsilon \, , \quad \textup{ for all } t \in [0,1] \, .
  \end{equation}

  \subsubsection*{Step 4: Small probability of crossing}

  Since $H_-$ and $H_+$ are disconnected and closed, any continuous curve that starts in $H_-$ and ends in $H_+$ must cross $G = (H_- \cup H_+)^c$.
  For $t \leq t' \in [0,1]$ consider the event of crossing the low-go zone $G$ between times $t$ and $t'$, i.e.,
  \begin{equation*}
    C_{t,t'} = \Big\{ \exists \, u_+, u_- \in [t,t'] \, : \, (u_+, X_{u_+}) \in H_+ \, , \, (u_-, X_{u_-}) \in H_- \Big\} \, ,
  \end{equation*}
  In this step, we use~\eqref{eq:low-go-zone-prob-for-each-t} to construct a function $g: \R_+ \to \R_+$ such that $g(\epsilon) \searrow 0$ as $\epsilon \searrow 0$ and
  \begin{equation}
    \label{eq:low-prob-of-crossing-low-go-zone}
    \mathbb{P} ( C_{0,1} ) = g(\epsilon) \, .
  \end{equation}
  Let $\delta > 0$ be a parameter to be chosen later.
  By a union bound, we have that
  \begin{equation}
    \label{eq:union-bound-C-event}
    \mathbb{P}(C_{0,1}) \leq \mathbb{P}(C_{0,\delta}) + \mathbb{P}(C_{\delta,1}) + \mathbb{P}(X_\delta \in G_\delta) \, ,
  \end{equation}
  where we have used that any crossing that starts in $(0, \delta)$ and ends in $(\delta, 1)$ must be in the low-go zone $G_\delta$ at time $\delta$. Now, the inequality in~\eqref{eq:low-go-zone-prob-for-each-t} allows us to immediately control the last term in~\eqref{eq:union-bound-C-event}, yielding
  \begin{equation}
    \label{eq:union-bound-C-term-3}
    \mathbb{P}(X_\delta \in G_\delta) \leq \epsilon \, .
  \end{equation}

  For the second term, we argue as follows: recall the definition of $G$ and that of $H_-$ and $H_+$ as the hypograph and epigraph of the line-segments $\ell(x_0, s_0)$ and $\ell(x_1, s_1)$ respectively. We now see that $G$ is a space-time trapezoid, defined by the non-intersecting line-segments $\ell(x_0, s_0)$ and $\ell(x_1, s_1)$ as well as the parallel lines $\big\{t(0, x_0 - x_1) \, : \, t \in [0,1] \big\}$ and $\big\{t(1, s_0 - s_1) \, : \, t \in [0,1] \big\}$.
  Therefore, we can conclude that for any $t \in [0,1]$ there are real numbers $a_t < b_t$ such that
  \begin{equation*}
    G_t = \{ x \in \R \, : \, a_t < x < b_t \} \, .
  \end{equation*}

  For a fixed $\epsilon > 0$ we can distinguish two cases:
  \subsubsection*{Case 1: $|x_0 - x_1| < \frac{1}{2}|s_0 - s_1|$}
    Here, the line-segments $\ell(x_0, s_0)$ and $\ell(x_1, s_1)$ diverge as $t$ increases, yielding the estimate
    \begin{equation*}
      | G_t | \geq | G_\delta | \geq \mu \, \delta \, , \quad \textup{for all} \;\; t \in [\delta, 1] \, ,
    \end{equation*}
    with
    \begin{equation*}
      \mu = \big| s_1 - s_0 - (x_1 - x_0) \big| > \frac{| s_1 - s_0 | }{2}> 0 \, ,
    \end{equation*}
    using the reverse triangle inequality.
    Finally, using Lemma~\ref{lemma:modulus-of-continuity-control}, we obtain
    \begin{align*}
      \mathbb{P}\Big( C_{\delta, 1} \Big) &\leq \mathbb{P}\Big( N_{X_\bullet}(a_\delta, b_\delta) \geq 1 \Big) \\
      &\leq A \left( \frac{\epsilon^\alpha}{|G_\delta|^\beta} \right)^{\frac{1}{\alpha + 1}} \, ,
    \end{align*}
    and, using the above lower bounds on $|G_\delta|$ and the constant $\mu$, we obtain
    \begin{equation*}
      \mathbb{P}\Big( C_{\delta, 1} \Big) \leq \frac{2^{\frac{\beta}{\alpha + 1}} A}{|s_0 - s_1|^\frac{\beta}{\alpha + 1}} \left( \frac{\epsilon^\alpha}{\delta^\beta} \right)^{\frac{1}{\alpha + 1}} \, .
    \end{equation*}

  \subsubsection*{Case 2: $|x_0 - x_1| \geq \frac{1}{2}|s_0 - s_1|$}
    Here, the line-segments $\ell(x_0, s_0)$ and $\ell(x_1, s_1)$ converge as $t$ increases, so we have the estimate
    \begin{equation*}
      |G_t| \geq |G_1| = \frac{|s_0 - s_1|}{2} \, , \quad \textup{for all} \;\; t \in [\delta, 1] \, ,
    \end{equation*}
    Therefore, using Lemma~\ref{lemma:modulus-of-continuity-control} we obtain
    \begin{equation*}
      \mathbb{P}\Big( C_{\delta, 1} \Big) \leq A \left( \frac{\epsilon^\alpha}{|G_1|^\beta} \right)^{\frac{1}{\alpha + 1}} = \frac{2^{\frac{\beta}{\alpha + 1}} A}{|s_0 - s_1|^\frac{\beta}{\alpha + 1}} \, \epsilon^{\frac{\alpha}{\alpha + 1}} \, .
    \end{equation*}
  Therefore, we can combine the above two cases to obtain the following upper bound on the second term in the union bound~\eqref{eq:union-bound-C-event}:
  \begin{equation}
    \label{eq:union-bound-C-term-2}
    \mathbb{P}\Big( C_{\delta, 1} \Big) \leq \frac{2^{\frac{\beta}{\alpha + 1}} A}{|s_0 - s_1|^\frac{\beta}{\alpha + 1}} \left(\left( \frac{\epsilon^\alpha}{\delta^\beta} \right)^{\frac{1}{\alpha + 1}} + \epsilon^{\frac{\alpha}{\alpha + 1}} \right) \, .
  \end{equation}

  We now turn our attention to the first term $\mathbb{P}(C_{0, \delta})$. We shall now control this term as $\delta \to 0$.\footnote{
    Note that upper continuity of the measure $\mathbb{P}$ implies that
    \begin{equation}
      \label{eq:union-bound-C-term-1}
      \lim_{\delta \to 0} \mathbb{P}(C_{0, \delta}) = \mathbb{P}\left( \bigcap_{\delta > 0} C_{0, \delta} \right) = \mathbb{P}( C_{0,0} ) = 0 \, ,
    \end{equation}
    and the last step follows by the definition of the event $C_{0, 0}$. However, to proceed we need \emph{quantitative control} on the rate at which $\mathbb{P}(C_{0, \delta})$ goes to zero as $\delta \to 0$.
  }
  Introduce a parameter $\xi > 0$ to be chosen later and decompose the event $C_{0, \delta}$ into the following cover:
  \begin{align*}
    C_{0, \delta}^{\xi_-} \coloneq C_{0, \delta} \cap \Big\{ \omega : \, \Big|X_0(\omega) - \frac{x_0 + x_1}{2}\Big| \leq \xi \Big\} \, , \quad C_{0, \delta}^{\xi_+} \coloneq C_{0, \delta} \cap \Big\{ \omega : \, \Big|X_0(\omega) - \frac{x_0 + x_1}{2} \Big| > \xi \Big\} \, .
  \end{align*}
  Since the events $C_{0, \delta}^{\xi_-}$ and $C_{0, \delta}^{\xi_+}$ form a partition of $C_{0, \delta}$ we have
  \begin{align*}
    \mathbb{P}(C_{0, \delta}) &= \mathbb{P}\Big( C_{0, \delta}^{\xi_-} \Big) + \mathbb{P}\Big( C_{0, \delta}^{\xi_+} \Big)
  \end{align*}
  and using the $(C, \gamma)$-Frostman assumption on $P_0$ we have
  \begin{equation*}
    \mathbb{P}\Big( C_{0, \delta}^{\xi_-} \Big) \leq \mathbb{P}\Big( \Big|X_0 - \frac{x_0 + x_1}{2}\Big| \leq \xi \Big) \leq C \, \xi^\gamma \, .
  \end{equation*}
  On the other hand, we can use that $X_\bullet \in \mathcal{C}(A, \alpha, \beta)$ and Corollary~\ref{cor:time-dependent-up-crossing-lemma} to obtain
  \begin{align*}
    \mathbb{P}\Big( C_{0, \delta}^{\xi_+} \Big) &\leq \mathbb{P}\left( N_{X_\bullet}\left( \frac{x_0 + x_1}{2} - \xi, \frac{x_0 + x_1}{2}\right) \geq 1 \right) + \mathbb{P}\left( N_{X_\bullet}\left( \frac{x_0 + x_1}{2}, \frac{x_0 + x_1}{2} + \xi \right) \geq 1 \right) \\
    &\leq 2 \, A \left( \frac{\delta^\alpha}{\xi^\beta} \right)^{\frac{1}{\alpha + 1}} \, .
  \end{align*}
  Therefore, we obtain the upper bound
  \begin{equation*}
    \mathbb{P}(C_{0, \delta}) \leq C \, \xi^\gamma + 2 \, A \frac{\delta^\frac{\alpha}{\alpha + 1}}{\xi^\frac{\beta}{\alpha+1}} \, ,
  \end{equation*}
  and minimizing the right-hand side over $\xi > 0$ we obtain
  \begin{equation*}
    \mathbb{P}(C_{0, \delta}) \leq B(\alpha, \beta, \gamma, A, C) \, \delta^{\frac{\alpha \, \gamma}{\gamma (\alpha + 1) + \beta} } \, ,
  \end{equation*}
  for a constant $B(\alpha, \beta, \gamma, A, C) > 0$ that depends only on the parameters $\alpha, \beta, \gamma, A$ and $C$.\footnote{
    In fact, we have the explicit expression
    \begin{equation*}
      B(\alpha, \beta, \gamma, A, C) = \left[ \left(\frac{\frac{\beta}{\alpha} + 1}{\gamma}\right)^{\frac{\gamma}{\gamma + \frac{\beta}{\alpha + 1}}} + \left( \frac{\gamma}{\frac{\beta}{\alpha + 1}} \right)^{\frac{\frac{\beta}{\alpha + 1}}{\gamma + \frac{\beta}{\alpha + 1}}} \right] C^{\frac{\frac{\beta}{\alpha+1}}{\gamma + \frac{\beta}{\alpha + 1}}} \left( 2\, A \right)^{\frac{\gamma}{\gamma + \frac{\beta}{\alpha + 1}}} \, .
    \end{equation*}
  }

  We can now select
  \begin{equation*}
    \delta(\epsilon) = \epsilon^{\frac{\alpha}{2 \, \beta}} \, ,
  \end{equation*}
  and combine equation~\eqref{eq:union-bound-C-term-3}, equation~\eqref{eq:union-bound-C-term-2} and equation~\eqref{eq:union-bound-C-term-1} in the union bound~\eqref{eq:union-bound-C-event} to obtain
  \begin{equation}
    \label{eq:step-4-gaussian-mixture-quant-control}
    \mathbb{P}(C_{0,1}) \leq D(\alpha, \beta, \gamma, A, C, S_0, S_1) \left( \epsilon^{\frac{\alpha^2 \, \gamma}{2 \, \beta \, \gamma (\alpha + 1) + 2 \, \beta^2} } + \epsilon^{\frac{\alpha}{2(\alpha + 1 )}} + \epsilon^{\frac{\alpha}{\alpha + 1}} + \epsilon \right) \, .
  \end{equation}
  and $D(\alpha, \beta, \gamma, A, C, S_0, S_1) > 0$ is a constant that depends only on the parameters $\alpha, \beta, \gamma, A, C$ and the sets $S_0$ and $S_1$.
  We have, therefore, completed the proof of the asserted equality~\eqref{eq:low-prob-of-crossing-low-go-zone} with the function $g$ defined as the right-hand side of the above display.

  \subsubsection*{Step 5: Contradiction}
  To conclude the proof, we note that by independence we have
  \begin{equation*}
    \mathbb{P}\Big( X_0 \leq x_0 \text{ and } X_1 \in S_1 \Big) = \mathbb{P}(X_0 \leq x_0) \, \mathbb{P}(X_1 \in S_1) = P_1(S_0) \, P_1(S_1) \, ,
  \end{equation*}
  On the other hand, we have the event inclusions
  \begin{equation*}
    \big\{ X_0 \leq x_0 \big\} \subseteq \big\{ (0, X_0) \in H_- \big\} \quad \text{and} \quad \big\{ X_1 \in S_1 \big\} \subseteq \big\{ (1, X_1) \in H_+ \big\} \, ,
  \end{equation*}
 therefore, we can write
  \begin{align*}
    \mathbb{P}\Big( X_0 \leq x_0 \text{ and } X_1 \in S_1 \Big) &\leq \mathbb{P}\Big( (0, X_0) \in H_- \text{ and } (1, X_1) \in H_+ \Big) \\
    &\leq \mathbb{P}\Big( C_{0,1} \Big) \, .
  \end{align*}
  Set $w_i \coloneq P_1(S_i)$ for $i = 0,1$ as in the statement of this theorem.
  Using equation~\eqref{eq:step-4-gaussian-mixture-quant-control} and the above displays we obtain the bound
  \begin{equation}
    \label{eq:necessary-inequality-gaussian-mixture-proof}
    w_0 \, w_1 \leq D(\alpha, \beta, \gamma, A, C, S_0, S_1) \left( \epsilon^{\frac{\alpha^2 \, \gamma}{2 \, \beta \, \gamma (\alpha + 1) + 2 \, \beta^2} } + \epsilon^{\frac{\alpha}{2(\alpha + 1 )}} + \epsilon^{\frac{\alpha}{\alpha + 1}} + \epsilon \right) \, ,
  \end{equation}
  Since all exponents of $\epsilon$ in the right-hand side are positive the right-hand side a decreasing function of $\epsilon$ that goes to zero as $\epsilon \searrow 0$.

  Therefore, there exists an $\epsilon_0 = \epsilon_0(\alpha, \beta, \gamma, A, C, S_0, S_1) > 0$ such that
  \begin{equation*}
    w_0 \, w_1 > D(\alpha, \beta, \gamma, A, C, S_0, S_1) \left( \epsilon_0^{\frac{\alpha^2 \, \gamma}{2 \, \beta \, \gamma (\alpha + 1) + 2 \, \beta^2} } + \epsilon_0^{\frac{\alpha}{2(\alpha + 1 )}} + \epsilon_0^{\frac{\alpha}{\alpha + 1}} + \epsilon_0 \right) \, .
  \end{equation*}
  Clearly, for any $\epsilon \leq \epsilon_0$ the necessary inequality~\eqref{eq:necessary-inequality-gaussian-mixture-proof} is violated, yielding a contradiction and completing the proof of the theorem.
\end{proof}

\CorNormalSourceNonExist*
\begin{proof}
  Note that $P_0$ is a $(C, \gamma)$-Frostman measure for explicit constants $C, \gamma > 0$ since it has a bounded density, see Example~\ref{ex:Lp-are-Frostman}.
  Now we can select $\mu_i \in \R$ such that the sets $S_0 = \{ x \in \R \, : \, |x - \mu_0| < 1 \}$ and $S_1 = \{ x \in \R \, : \, |x - \mu_1| < 1\}$ are disjoint. Now for any $\epsilon > 0$, there exist $\sigma_0, \sigma_1 > 0$ such that the measure $P_1$ defined as above has properly $\epsilon$-disconnected support on the sets $S_0$ and $S_1$, as per Definition~\ref{def:epsilon-disconnected-support}.
  By Theorem~\ref{thm:non-existence-for-Gaussian-mixture}, there exists $\epsilon_0 = \epsilon_0\big(C, \gamma, S_0, S_1, w_0, w_1, A, \alpha, \beta\big) > 0$ such that if $\epsilon < \epsilon_0$ then we have
  \begin{equation*}
    \mathcal{S}_{\textup{SL}}(P_0, P_1) \cap \mathcal{C}(A, \alpha, \beta) = \emptyset \, .
  \end{equation*}
  In particular, we can select $\sigma_0, \sigma_1 > 0$ such that $\epsilon = \frac{\epsilon_0}{2} < \epsilon_0$ thereby completing the proof.
\end{proof}
 
\section{Auxiliary Technical Results}\label{appendix-C}
\begin{lemma}
  \label{lemma:generalized-interpolant-continuity-equation}
  Consider any process $X_\bullet \in \mathcal{S}(P_0, P_1)$ and let $v$ be the associated velocity field and $P_t$ be the marginal law of $X_t$ for all $t \in [0,1]$.
  Then, the pair $(P, v)$ consisting of the time-marginal laws $P_t$ of $X_t$ and the velocity field $v_t$ satisfy the measure valued continuity equation
  \begin{equation*}
    \partial_t P_t + \nabla \cdot (P_t \, v_t) = 0 \, ,
  \end{equation*}
  in the sense of~\cite[Section 8.1]{ambrosio2005gradient}.
\end{lemma}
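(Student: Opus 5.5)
We need to prove that for $X_\bullet \in \mathcal{S}(P_0,P_1)$ with velocity $v_t(x)=\E[\dot X_t\mid X_t=x]$, the marginals $(P_t)_t$ solve $\partial_t P_t + \nabla\cdot(P_t v_t)=0$ in the distributional sense of \cite[Section 8.1]{ambrosio2005gradient}. That sense means: for every test function $\varphi\in C^\infty_c((0,1)\times\Rd)$,
\begin{equation*}
  \int_0^1\int_{\Rd} \big(\partial_t\varphi(t,x) + \nabla\varphi(t,x)\cdot v_t(x)\big)\, dP_t(x)\, dt = 0 \, ,
\end{equation*}
together with $\int_0^1\int \|v_t\|\,dP_t\,dt<\infty$.

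The integrability condition is immediate. By conditional Jensen,
\begin{equation*}
  \int_0^1\int \|v_t\|\,dP_t\,dt = \int_0^1 \E\|\E[\dot X_t\mid X_t]\|\,dt \le \E\int_0^1\|\dot X_t\|\,dt < \infty
\end{equation*}
by property (ii) of Definition~\ref{def:S-P_0-P_1}. Boundedness of $v$ would also suffice.

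The main identity is proved pathwise, then averaged. Fix $\omega$ with $t\mapsto X_t(\omega)$ in $W^{1,1}([0,1];\Rd)$. Then $X_\bullet(\omega)$ is absolutely continuous, and since $\varphi$ is smooth, $t\mapsto\varphi(t,X_t(\omega))$ is absolutely continuous with a.e. derivative
\begin{equation*}
  \partial_t\varphi(t,X_t)+\nabla\varphi(t,X_t)\cdot\dot X_t \, .
\end{equation*}
Since $\varphi$ vanishes near $t=0$ and $t=1$, integrating over $[0,1]$ gives
\begin{equation*}
  \int_0^1 \big(\partial_t\varphi(t,X_t)+\nabla\varphi(t,X_t)\cdot\dot X_t\big)\,dt = 0 \quad \text{a.s.}
\end{equation*}
We now take expectations. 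The integrand is bounded by $\|\partial_t\varphi\|_\infty+\|\nabla\varphi\|_\infty\|\dot X_t\|$, which is in $L^1(\mathbb{P}\otimes dt)$ by (ii). Fubini therefore lets us exchange $\E$ and $\int_0^1 dt$.

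Next we replace $\dot X_t$ by $v_t(X_t)$. For a.e. $t$,
\begin{equation*}
  \E[\nabla\varphi(t,X_t)\cdot\dot X_t] = \E[\nabla\varphi(t,X_t)\cdot v_t(X_t)]
\end{equation*}
by the tower property, because $\nabla\varphi(t,X_t)$ is bounded and $\sigma(X_t)$-measurable. Rewriting the expectations as integrals against $P_t$ gives the weak form.

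The main obstacle is measurability. We need $(t,\omega)\mapsto\dot X_t(\omega)$ to be jointly measurable, and $v$ to be jointly Borel in $(t,x)$, so that Fubini and the time-indexed conditional expectations are legitimate. I would handle the first by viewing $X$ as a random variable in $W^{1,1}$, as noted in Section~\ref{subsec:notation-analysis}, and defining $\dot X_t$ as a limit of difference quotients, which gives a jointly measurable version. For the second, I would take $v$ to be the density of the vector measure $\E[\dot X_t\,\delta_{X_t}]\,dt$ with respect to $P_t\,dt$ on $[0,1]\times\Rd$. This coincides with the pointwise definition of $v_t$ for a.e. $t$, which is all the weak formulation requires. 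Finally, narrow continuity of $t\mapsto P_t$ follows from continuity of the sample paths and dominated convergence.
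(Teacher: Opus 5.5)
Your proposal is correct and follows essentially the same route as the paper: pathwise chain rule for $t\mapsto\varphi(t,X_t)$ on absolutely continuous paths, integration in time with vanishing boundary terms, exchange of $\E$ and $\int_0^1 \dd t$, and replacement of $\dot X_t$ by $v_t(X_t)$ via the defining property of conditional expectation. Your added justifications are slightly more careful than the paper's. You use Fubini with the time-integrated bound from condition (ii) of Definition~\ref{def:S-P_0-P_1}, and you address joint measurability of $\dot X$ and $v$. The paper instead asserts $\E\|\dot X_t\|<\infty$ pointwise in $t$, although the definition only controls $\E\int_0^1\|\dot X_t\|\,\dd t$.
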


\begin{proof}
  The proof extends ideas in \cite{albergo2023stochastic}.
  First, note that since $v_t$ is defined by
  \begin{equation*}
    v_t(X_t) = \mathbb{E}[\dot X_t \mid X_t],
  \end{equation*}
  for every bounded measurable vector field $g : \mathbb{R}^d \to \mathbb{R}^d$ we have
  \begin{equation*}
    \mathbb{E}\big[g(X_t) \cdot \dot X_t \big]
    =
    \mathbb{E}\big[g(X_t) \cdot v_t(X_t) \big].
  \end{equation*}
  In particular,
  \begin{equation}
    \label{eq:v_t-P_t-integrable}
    \int_{\mathbb{R}^d} \|v_t(x)\| \, \dd P_t(x)
    =
    \mathbb{E}[\|v_t(X_t)\|]
    \leq
    \mathbb{E}[\|\dot X_t\|],
  \end{equation}
  so $P_t v_t$ is a well-defined vector-valued since
  $\mathbb{E}[\|\dot X_t\|] < \infty$ by Definition~\ref{def:S-P_0-P_1}.

  We now prove the continuity equation in the distributional sense. Let
  $\psi \in C_c^\infty((0,1)\times \mathbb{R}^d)$. Since the sample paths of
  $X_\bullet$ are absolutely continuous, the map
  \begin{equation*}
    t \mapsto \psi(t,X_t)
  \end{equation*}
  is absolutely continuous almost surely, and the chain rule gives
  \begin{equation*}
    \frac{\dd}{\dd t}\psi(t,X_t)
    =
    \partial_t \psi(t,X_t)
    +
    \nabla_x \psi(t,X_t) \cdot \dot X_t
  \end{equation*}
  for almost every $t \in (0,1)$, see~\cite[Section 4]{evans2025measure} for details on weak differentiability.
  Integrating in time and taking expectations, we obtain
  \begin{align*}
    0
    &=
    \mathbb{E}\left[
      \int_0^1
      \frac{\dd}{\dd t}\psi(t,X_t)
      \, \dd t
    \right] \\
    &=
    \int_0^1
    \mathbb{E}\left[
      \partial_t \psi(t,X_t)
      +
      \nabla_x \psi(t,X_t) \cdot \dot X_t
    \right]
    \, \dd t.
  \end{align*}
  The boundary term vanishes because $\psi$ is compactly supported in
  $(0,1)\times \mathbb{R}^d$.

  Using the defining property of conditional expectation, with
  $g_t(x)=\nabla_x \psi(t,x)$, we have
  \begin{equation*}
    \mathbb{E}\left[
      \nabla_x \psi(t,X_t) \cdot \dot X_t
    \right]
    =
    \mathbb{E}\left[
      \nabla_x \psi(t,X_t) \cdot v_t(X_t)
    \right].
  \end{equation*}
  Therefore
  \begin{align*}
    0
    &=
    \int_0^1
    \mathbb{E}\left[
      \partial_t \psi(t,X_t)
      +
      \nabla_x \psi(t,X_t) \cdot v_t(X_t)
    \right]
    \, \dd t \\
    &=
    \int_0^1
    \int_{\mathbb{R}^d}
    \left[
      \partial_t \psi(t,x)
      +
      \nabla_x \psi(t,x) \cdot v_t(x)
    \right]
    \, \dd P_t(x)
    \, \dd t.
  \end{align*}
  This is precisely the distributional formulation of
  \begin{equation*}
    \partial_t P_t + \nabla \cdot (P_t v_t) = 0
  \end{equation*}
  on $(0,1)\times \mathbb{R}^d$.
Thus $(P,v)$ solves the measure-valued continuity equation in the distributional
  sense.
\end{proof}

\begin{corollary}
  \label{corollary:generalized-interpolant-push-messure-forward}
  Consider any $X_\bullet \in \mathcal{S}(P_0, P_1)$ and let $v$ and $\phi$ be the associated velocity field and flow map, respectively, see Definition~\ref{def:straight-line-process-and-straight-line-interpolant} for details.
  Then, we have the distributional equality
  \begin{equation*}
    \phi(t, X_0) \overset{d}{=} X_t \, \quad \textup{ for all } t \in [0,1] \, .
  \end{equation*}
  In particular, we have $X_1 \overset{d}{=} \phi(1, X_0)$ and so
  \begin{equation*}
    P_1 = \phi(1, \cdot)_\sharp \, P_0 \, .
  \end{equation*}
\end{corollary}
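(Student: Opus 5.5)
The statement to prove is Corollary~\ref{corollary:generalized-interpolant-push-messure-forward}: the flow $\phi$ of $v$ pushes $P_0$ forward to the time-$t$ marginal $P_t$ of $X_t$. The plan is a uniqueness argument for the continuity equation. Lemma~\ref{lemma:generalized-interpolant-continuity-equation} already shows that the curve of marginals $(P_t)_{t\in[0,1]}$ solves
\[
\partial_t P_t + \nabla\cdot(P_t v_t) = 0
\]
in the distributional sense on $(0,1)\times\Rd$. Independently, the curve $\tilde P_t \coloneq \phi(t,\cdot)_\sharp P_0$ solves the same equation, with the same initial datum $\tilde P_0 = P_0$. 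Under the standing assumption \eqref{eq:lipschitz-v} that $v$ is bounded and spatially Lipschitz uniformly in time, solutions of the continuity equation are unique among narrowly continuous curves. The two curves therefore coincide for every $t$.

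First I would check that $\tilde P_t$ is a solution. For $\psi\in C_c^\infty((0,1)\times\Rd)$, the trajectories $t\mapsto \phi(t,x)$ are $C^1$ with $\partial_t\phi = v_t(\phi)$. Differentiating $\int \psi(t,\phi(t,x))\,\dd P_0(x)$ in time and integrating over $[0,1]$ gives the weak formulation, with the boundary terms vanishing. Continuity of $\phi$ in $(t,x)$, which follows from the Lipschitz bound via Gr\"onwall, makes $\tilde P_t$ narrowly continuous, and $\tilde P_0=P_0$ because $\phi(0,\cdot)=\mathrm{id}$.

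Second, I need narrow continuity of $t\mapsto P_t$ and the initial condition $P_0 = \mathrm{Law}(X_0)$. Sample paths are absolutely continuous, hence continuous, so for $\varphi\in C_b(\Rd)$ the map $t\mapsto \varphi(X_t)$ is continuous almost surely. Dominated convergence then gives continuity of $t\mapsto \E\varphi(X_t)$.

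Third, I would invoke the uniqueness theorem for the continuity equation with a velocity field that is Lipschitz in space, e.g.\ \cite[Prop.~8.1.7, Prop.~8.1.8]{ambrosio2005gradient}. Its integrability hypothesis
\[
\int_0^1\!\int \|v_t\|\,\dd P_t\,\dd t<\infty
\]
holds by \eqref{eq:v_t-P_t-integrable} together with Definition~\ref{def:S-P_0-P_1}(ii); it also holds trivially because $v$ is bounded. That result states that any narrowly continuous solution is the pushforward of its initial datum under the flow, so $P_t = \phi(t,\cdot)_\sharp P_0$ for all $t\in[0,1]$. Taking $t=1$ and using $X_1\sim P_1$ gives $P_1=\phi(1,\cdot)_\sharp P_0$.

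The main obstacle is the endpoints. Lemma~\ref{lemma:generalized-interpolant-continuity-equation} uses test functions supported in the open interval $(0,1)$, so the initial condition at $t=0$ has to be supplied separately. This is exactly why narrow continuity at $t=0$ and $t=1$ (step two) is needed: it lets the uniqueness theorem be stated on $[0,1]$ with $P_t\to P_0$ as $t\downarrow 0$, and it yields the conclusion at $t=1$ as a limit. A secondary subtlety is that $v$ is only assumed measurable in $t$, but uniqueness for Lipschitz fields only requires that.
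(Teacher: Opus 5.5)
Your proposal is correct and follows essentially the same route as the paper: invoke the representation formula of \cite[Proposition 8.1.8]{ambrosio2005gradient}, with its hypotheses supplied by Lemma~\ref{lemma:generalized-interpolant-continuity-equation}, the Lipschitz and boundedness assumption on $v$, the integrability estimate \eqref{eq:v_t-P_t-integrable}, and narrow continuity of $t\mapsto P_t$, which follows from sample-path continuity and dominated convergence. Your separate check that $\phi(t,\cdot)_\sharp P_0$ solves the equation is redundant given that proposition, and the trajectories are only absolutely continuous rather than $C^1$ when $v$ is merely measurable in $t$; neither point affects the argument.
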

\begin{proof}
  This is a direct application of the so called \emph{Representation formula for the continuity equation}~\cite[Proposition 8.1.8]{ambrosio2005gradient}.
  To justify using this proposition we note that: (i) by Lemma~\ref{lemma:generalized-interpolant-continuity-equation} the pair $(P, v)$ satisfies the measure valued continuity equation; 
  (ii) by Definition~\ref{def:S-P_0-P_1} the equation~\cite[8.1.7]{ambrosio2005gradient} is satisfied; (iii) by equation~\eqref{eq:v_t-P_t-integrable} the integrability condition~\cite[8.1.2]{ambrosio2005gradient} is satisfied; and (iv) the curve $t \mapsto P_t$ is narrowly continuous, as we now show.

  Recall by Definition~\ref{def:S-P_0-P_1} that $X_\bullet$ has continuous sample paths.\footnote{
    Recall that processes $X_\bullet \in \textup{S}(P_0, P_1)$ were assumed to have paths in $W^{1,1}([0,1]; \Rd)$ almost surely. It is classical that functions in $W^{1,1}([0,1]; \R)$ have continuous representatives, see~\cite[Theorem 8.2]{brezis2011functional}.
  } 
  Now, for any test function $\varphi \in C_b(\mathbb{R}^d)$, we have $\varphi(X_s) \to \varphi(X_t)$
  almost surely as $s \to t$ due to the sample path continuity of $X_\bullet$, and therefore, by dominated convergence,
  \begin{equation*}
    \lim_{s \to t} \int_{\mathbb{R}^d} \varphi(x) \, \dd P_s(x)
    =
    \lim_{s \to t} \mathbb{E}[\varphi(X_s)]
    =
    \mathbb{E}[\varphi(X_t)]
    =
    \int_{\mathbb{R}^d} \varphi(x) \, \dd P_t(x).
  \end{equation*}
  Hence $t \mapsto P_t$ is narrowly continuous.  
  
\end{proof}

\section{Examples}\label{appendix-D}

\begin{example}
  \label{ex:concentration-of-interpolants}
  Concentration inequalities like the one in Definition~\ref{def:concentration-class} are typical for stochastic processes used in sampling. For example, consider the common family of processes
  \begin{equation*}
    X_t = a_t \, X_0 + b_t \, X_1 + c_t \, Z \quad \textup{ where } \quad (X_0, X_1, Z) \sim P_0 \otimes P_1 \otimes \mathcal{N}(0,1) \, ,
  \end{equation*}
  where $a, b, c \in C^{0, \eta}([0,1];\Rd)$ are H\"older continuous with exponent $0 < \eta < 1$ satisfying appropriate boundary conditions. Now assume that $P_0$ and $P_1$ have $\alpha$- and $\beta$-polynomial tails respectively and recall that $\mathcal{N}(0,1)$ has $\gamma$-polynomial tails for every $\gamma > 0$.

  Since, for any $f \in C^{0,\eta}([0,1]; \Rd)$, one has
  \begin{equation*}
    \kappa_f(\delta) \leq \delta^\eta \, \|f\|_{C^{0,\eta}} \, ,
  \end{equation*}
  we may set $M = \|\alpha\|_{C^{0,\eta}} \vee \|\beta\|_{C^{0,\eta}} \vee \|\gamma\|_{C^{0,\eta}} $ and obtain
  \begin{align*}
    \kappa_{X_\bullet}(\delta) &\leq \delta^\eta \, \left( \| \alpha \|_{C^{0,\eta}} \, |X_0| + \| \beta \|_{C^{0,\eta}} \, |X_1| + \| \gamma \|_{C^{0,\eta}} \, |Z| \right) \\
    &\leq \delta^\eta \, M \left( |X_0| + |X_1| + |Z| \right)
  \end{align*}
  which by a union bound leads to
  \begin{align*}
    \mathbb{P}\left( \kappa_{X_\bullet}(\delta) \geq \theta \right) &\leq \mathbb{P}\left( |X_0| \geq \frac{\theta}{3 \, \delta^\eta M} \right) + \mathbb{P}\left( |X_1| \geq \frac{\theta}{3 \, \delta^\eta M} \right) + \mathbb{P}\left( |Z| \geq \frac{\theta}{3 \, \delta^\eta M} \right) \\
    & \lesssim \left(\frac{\theta}{\delta^\eta}\right)^{-\alpha} + \left(\frac{\theta}{\delta^\eta}\right)^{-\beta} + \left(\frac{\theta}{\delta^\eta}\right)^{-\gamma} \\
      & \lesssim \left(\frac{\delta^\eta}{\theta}\right)^{\alpha \wedge \beta \wedge \gamma} \\
  & = \frac{\delta^{\eta \, (\alpha \wedge \beta \wedge \gamma)}}{\theta^{\alpha \wedge \beta \wedge \gamma}} \, .
  \end{align*}
  Thus, with $\tilde \alpha \coloneq \eta \, (\alpha \wedge \beta \wedge \gamma)$ and $\tilde \beta \coloneq \alpha \wedge \beta \wedge \gamma$ and if $A$ is the numerical constant suppressed in the above inequalities, we have shown that
  \begin{equation*}
    X_\bullet \in \mathcal{C}(A, \tilde \alpha, \tilde \beta) \, .
  \end{equation*}
\end{example}

\begin{example}
  \label{ex:Lp-are-Frostman}
  Frostman measures are well-studied in the literature, see for example~\cite[Theorem 8.8]{mattila2012geometry}. In fact, the class of Frostman measures contains absolutely continuous measures with $L^p(\Rd)$ densities for any $p \in (1, \infty]$. Indeed, for $p= \infty$ and a measure $P$ with density $\pi \in L^\infty(\Rd)$ we have
  \begin{align*}
    P(B_r(x)) = \int_{B_r(x)} \pi(y) \, dy \leq \|\pi\|_\infty \, | B_r(x) | = \|\pi\|_\infty \, | B_1(0) | \, r^d \, ,
  \end{align*}
  where $|B_r(x)|$ is the Lebesgue measure of the ball $B_r(x)$.
  Thus, measure $P$ is $(C, d)$-Frostman with $C = \|\pi\|_\infty \, | B_1(0) |$. For $p \in (1, \infty)$ we can apply H\"older's inequality to obtain
  \begin{align*}
    P(B_r(x)) = \int_{B_r(x)} \pi(y) \, dy \leq \|\pi\|_p \, | B_r(x) |^{1 - \frac{1}{p}} = \|\pi\|_p \, | B_1(0) |^{1 - \frac{1}{p}} \, r^{\frac{d(p - 1)}{p}} \, ,
  \end{align*}
  thus $P$ is $(C, \frac{d(p - 1)}{p})$-Frostman with $C = \|\pi\|_p \, | B_1(0) |^{1 - \frac{1}{p}}$.
\end{example}

\begin{remark}\label{rk:trapezoidal-space-time-geometry}
  We point out that an essential ingredient to the proof above was the scaling
  \begin{equation*}
    |G_\delta| \asymp \delta \, ,
  \end{equation*}
  in the case $|x_0 - x_1| < \frac{1}{2}|s_0 - s_1|$--this is a consequence of the geometry of the low-go zone $G$ describing a trapezoidal space-time region.
  Under the same assumption $|x_0 - x_1| < \frac{1}{2}|s_0 - s_1|$, if one tried to \emph{erroneously} do the above proof by restricting their attention to the smaller rectangular space-time region
  \begin{equation*}
    I(\epsilon) = [0,1] \times [x_0(\epsilon), x_1(\epsilon) ] \subsetneq G \, ,
  \end{equation*}
  recalling the explicit dependence of $x_i = x_i(\epsilon)$ for $i = 0,1$, see the beginning of the proof above.
  Suppose for concreteness that $P_0$ is a standard normal distribution and that $S_0 = -S_1$ are symmetric around zero.
  Then, $x_0(\epsilon) = -x_1(\epsilon)$ and $x_0(\epsilon)$ is the unique solution to the equation
  \begin{equation}
    \label{eq:gaussian-x0}
    \frac{1}{\sqrt{2 \pi}} \int_{-\infty}^{x_0(\epsilon)} e^{- \frac{x^2}{2}} \, dx = \frac{1 - \epsilon}{2} \, .
  \end{equation}
  By Lemma~\ref{lemma:modulus-of-continuity-control} we have
  \begin{equation}
    \label{eq:gaussian-erroneous-control}
    \mathbb{P}\Big(N_{X_\bullet}(-x_0(\epsilon), x_0(\epsilon))\Big) \leq A \left( \frac{\epsilon^\alpha}{2 \, x_0^\beta(\epsilon)} \right)^{\frac{1}{\alpha + 1}} \, ,
  \end{equation}
  and using equation~\eqref{eq:gaussian-x0} we can straightforwardly compute that
  \begin{equation*}
    \lim_{\epsilon \downarrow 0} \frac{x_0(\epsilon)}{\epsilon} = c \neq 0 \, .
  \end{equation*}
  Thus, for the right hand side of~\eqref{eq:gaussian-erroneous-control} to vanish as $\epsilon \searrow 0$, a \emph{necessary} ingredient of our proof strategy, it must be that $\alpha > \beta$.
  Is this reasonable? Looking at Example~\ref{ex:concentration-of-interpolants} we see that to enforce this for a typical sampling processes of the form $X_t = a_t \, X_0 + b_t \, X_1 + c_t \, Z$ we would need to take $a, b, c \in C^{0, \eta}([0,1];\Rd)$ with $\eta > 1$. A classical argument shows that on connected domains $\eta > 1$ implies that $a, b, c$ are constant functions, which is impossible since, in particular, we want $a_0 = 1$ and $a_1 = 0$. We conclude that the class $\mathcal{C}(A, \alpha, \beta)$ with $\alpha > \beta$ is somewhat trivial, since it contains none of the processes we typically use in sampling.
\end{remark}

\end{document}